\documentclass[accepted]{article}

\usepackage{subfig}
\usepackage{booktabs} 
\usepackage{hyperref}
\usepackage[utf8]{inputenc}
\usepackage[T1]{fontenc}
\usepackage{nicefrac}      
\usepackage{microtype}
\usepackage{mdframed}
\usepackage{amssymb}
\usepackage{amsthm}
\usepackage{mathrsfs} 
\usepackage{mathtools}
\usepackage{dsfont}

\newtheorem{ass}{Assumption}
\newtheorem{theorem}{Theorem} 
\newtheorem{defn}{Definition}

\newtheorem{lem}{Lemma}
\newtheorem{proposition}{Proposition}

\newcommand{\dist}{\sf dist}

\usepackage{icml2020}

\icmltitlerunning{Learning disconnected manifolds: no GAN's land}
\begin{document}
\twocolumn[
\icmltitle{Learning Disconnected Manifolds: a no GAN's land}

\begin{icmlauthorlist}
\icmlauthor{Ugo Tanielian}{goo,to}
\icmlauthor{Thibaut Issenhuth}{to}
\icmlauthor{Elvis Dohmatob}{to}
\icmlauthor{Jérémie Mary}{to}
\end{icmlauthorlist}
\icmlaffiliation{to}{Criteo AI Lab, France}
\icmlaffiliation{goo}{Université Paris-Sorbonne, Paris, France}
\icmlcorrespondingauthor{Ugo Tanielian}{u.tanielian@criteo.com}

\icmlkeywords{Generative adversarial networks, manifold learning, no-free-lunch theorem}
\vskip 0.3in
]
\printAffiliationsAndNotice{}

\begin{abstract}
Typical architectures of Generative Adversarial Networks make use of a unimodal latent/input distribution transformed by a continuous generator. Consequently, the modeled distribution always has connected support which is cumbersome when learning a disconnected set of manifolds. We formalize this problem by establishing a "no free lunch" theorem for the disconnected manifold learning stating an upper-bound on the precision of the targeted distribution. This is done by building on the necessary existence of a low-quality region where the generator continuously samples data between two disconnected modes. Finally, we derive a rejection sampling method based on the norm of generator's Jacobian and show its efficiency on several generators including BigGAN.
\end{abstract}

\section{Introduction}\label{section:introduction}
GANs \cite{GANs} provide a very effective tool for the unsupervised learning of complex probability distributions. For example, \citet{karras2019style} generate very realistic human faces while \citet{yu2017seqgan} match state-of-the-art text corpora generation. Despite some early theoretical results on the stability of GANs \cite{arjovsky2017towards} and on their approximation and asymptotic properties \cite{biau2018some}, their training remains challenging. More specifically, GANs raise a mystery formalized by \citet{khayatkhoei2018disconnected}: \textit{how can they fit disconnected manifolds when they are trained to continuously transform a unimodal latent distribution?} While this question remains widely open, we will show that studying it can lead to some improvements in the sampling quality of GANs.  

\begin{figure}
    \centering
    \subfloat[Heatmap of the generator's Jacobian norm. White circles: quantiles of the latent distribution $\mathcal{N}(0,I)$.
    \label{fig:intro_gradients}]
    {   
        \includegraphics[width=0.53\linewidth]{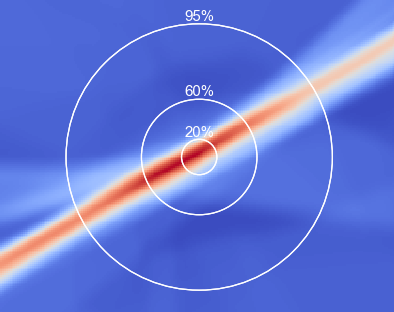}
    }\vfill
    \captionsetup[subfigure]{oneside,margin={0cm,0cm}}
    \subfloat[Green: target distribution. Coloured dots: generated samples colored w.r.t. the Jacobian Norm using same heatmap than (a).
    \label{fig:intro_data_points}]
    {   
        \includegraphics[width=0.65\linewidth]{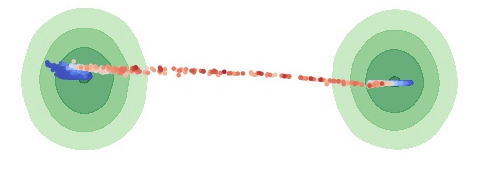}
    }
    \caption{\label{fig:intro} Learning disconnected manifolds leads to the apparition of an area with high gradients and data sampled in between modes.}
\end{figure}
Indeed, training a GAN with the objective of continuously transforming samples from an unimodal distribution into a disconnected requires balancing between two caveats. On one hand, the generator could just ignore all modes but one, producing a very limited variety of high quality samples: this is an extreme case of the well known mode collapse \cite{arjovsky2017towards}. On the other hand, the generator could cover the different modes of the target distribution and necessarily generates samples out of the real data manifold as previously explained by \citet{khayatkhoei2018disconnected}.

As brought to the fore by \citet{roth2017stabilizing}, there is a density mis-specification between the true distribution and the model distribution. Indeed, one cannot find parameters such that the model density function is arbitrarily close to the true distribution. To solve this issue, many empirical works have  proposed to over-parameterize the generative distributions, as for instance, using a mixture of generators to better fit the different target modes. \citet{tolstikhin2017adagan} rely on boosting while \citet{khayatkhoei2018disconnected} force each generator to target different sub-manifolds thanks to a criterion based on mutual information. Another direction is to add complexity in the latent space using a mixture of Gaussian distributions \cite{gurumurthy2017deligan}. 

To better visualize this phenomenon, we consider a simple 2D motivational example where the real data lies on two disconnected manifolds. Empirically, when learning the distribution, GANs split the Gaussian latent space into two modes, as highlighted by the separation line in red in Figure \ref{fig:intro_gradients}. More importantly, each sample drawn inside this red area in Figure \ref{fig:intro_gradients} is then mapped in the output space in between the two modes (see Figure \ref{fig:intro_data_points}). For the quantitative evaluation of the presence of  out-of-manifold samples, a natural metric is the Precision-Recall (PR) proposed by \citet{sajjadi2018assessing} and its improved version (Improved PR) \citep{kynkaanniemi2019improved}. A first contribution of this paper is to formally link them. Then, taking advantage of these metrics, we lower bound the measure of this out-of-manifold region and formalize the impossibility of learning disconnected manifolds with standard GANs. We also extend this observation to the multi-class generation case and show that the volume of off-manifold areas increases with the number of covered manifolds. In the limit, this increase drives the precision to zero. 

To solve this issue and increase the precision of GANs, we argue that it is possible to remove out-of-manifold samples using a truncation method. Building on the work of \citet{arvanitidis2017latentspace} who define a Riemaniann metric that significantly improves clustering in the latent space, our truncation method is based on information conveyed by the Jacobian's norm of the generator. We empirically show that this rejection sampling scheme enables us to better fit disconnected manifolds without over-parametrizing neither the generative class of functions nor the latent distribution. Finally, in a very large high dimensional setting, we discuss the advantages of our rejection method and compare it to the truncation trick introduced by \cite{brock2018large}.

In a nutshell, our contributions are the following: 
\begin{itemize}
    \item We discuss evaluation of GANs and formally link the PR measure \cite{sajjadi2018assessing} and its Improved PR version \cite{kynkaanniemi2019improved}.
    \item We upper bound the precision of GANs with Gaussian latent distribution and formalize an impossibility result for disconnected manifolds learning.
    \item Using toy datasets, we illustrate the behavior of GANs when learning disconnected manifolds and derive a new truncation method based on the Jacobian's Frobenius norm of the generator. We confirm its empirical performance on state-of-the-art models and datasets. 
\end{itemize}

\section{Related work}
\paragraph{Fighting mode collapse.}
\citet{GANs} were the first to raise the problem of mode collapse in the learning of disconnected manifolds with GANs. They observed that when the generator is trained too long without updating the discriminator, the output distribution collapses to a few modes reducing the diversity of the samples. To tackle this issue, \citet{salimans2016improved, lin2018pacgan} suggested feeding several samples to the discriminator. \citet{srivastava2017veegan} proposed the use of a reconstructor network, mapping the data to the latent space to increase diversity. 

In a different direction, \citet{arjovsky2017towards} showed that training GANs using the original formulation \cite{GANs} leads to instability or vanishing gradients. To solve this issue, they proposed a Wasserstein GAN architecture \cite{arjovsky2017wasserstein} where they restrict the class of discriminative functions to 1-Lipschitz functions using weight clipping. Pointing to issues with this clipping, 
\citet{gulrajani2017improved, spectral_normGANs} proposed relaxed ways to enforce the Lipschitzness of the discriminator, either by using a gradient penalty or a spectral normalization. Albeit not exactly approximating the Wasserstein's distance \citep{petzka2017regularization},  both implementations lead to good empirical results, significantly reducing  mode collapse. Building on all of these works, we will further assume that generators are now able to cover most of the modes of the target distribution, leaving us the problem of out-of-manifold samples (\textit{a.k.a.} low-quality pictures). 

\paragraph{Generation of disconnected manifolds.}
When learning complex manifolds in high dimensional spaces using deep generative models, \citet{fefferman2016testing} highlighted the importance of understanding the underlying geometry. More precisely, the learning of  disconnected manifold requires the introduction of disconnectedness in the model. \citet{gurumurthy2017deligan} used a multi-modal entry distribution, making the latent space disconnected, and showed better coverage when data is limited and diverse. Alternatively, \citet{khayatkhoei2018disconnected} studied the learning of a mixture of generators. Using a mutual information term, they encourage each generator to focus on a different submanifold so that the mixture covers the whole support.
This idea of using an ensemble of generators is also present in the work of \citet{tolstikhin2017adagan} and \citet{zhong2019rethinking}, though they were primarily interested in the reduction of mode collapse. 

In this paper, we propose a truncation method to separate the latent space into several disjoint areas. It is a way to learn disconnected manifolds without relying on the previously introduced over-parameterization techniques. As our proposal can be applied without retraining the whole architecture, we can use it successfully on very larges nets. Close to this idea, \citet{azadi2018discriminator} introduced a rejection strategy based on the output of the discriminator. However, this rejection sampling scheme requires the discriminator to be trained with a classification loss while our proposition can be applied to any generative models. 

\paragraph{Evaluating GANs.} The evaluation of generative models is an active area of research. 
Some of the proposed metrics only measure the quality of the generated samples such as the Inception score \cite{salimans2016improved} while others define distances between probability distributions. This is the case of the Frechet Inception distance \cite{heusel2017gans}, the Wasserstein distance \cite{arjovsky2017wasserstein} or kernel-based metrics \cite{gretton2012kernel}. The other main caveat for evaluating GANs lies in the fact that one does not have access to the true density nor the model density, prohibiting the use of any density based metrics. To solve this issue, the use of a third network that acts as an objective referee is common. For instance, the Inception score uses outputs from InceptionNet while the Fréchet Inception Distance compares statistics of InceptionNet activations. Since our work focuses on out-of-manifold samples, a natural measure is the PR measure \citep{sajjadi2018assessing} and its Improved PR version \citep{kynkaanniemi2019improved}, extensively discussed in the next section. 

In the following, alongside precise definitions, we exhibit an upper bound on the precision of GANs with high recall (\textit{i.e.} no mode collapse) and present a new truncation method. 

\section{Our approach}\label{section:approach}
We start  with a formal description of the framework of GANs and the relevant metrics. We later show a "no free lunch" theorem proving the necessary existence of an area in the latent space that generates out-of-manifold samples. We name this region the \textit{no GAN's land} since any data point sampled from this area will be in the frontier in between two different modes. We claim that dealing with it requires special care. Finally, we propose a rejection sampling  procedure to  avoid points out of the true manifold.

\subsection{Notations}
In the original setting of Generative Adversarial Networks (GANs), one tries to generate data that are ``similar'' to samples collected from some unknown probability measure $\mu_\star$. To do so, we use a parametric family of generative distribution where each distribution is the push-forward measure of a latent distribution $Z$ and a continuous function modeled by a neural network. 

\begin{ass}[$Z$ Gaussian]\label{ass:z_is_connected}
    The latent distribution $Z$ is a standard multivariate Gaussian.
\end{ass}
Note that for any distribution $\mu$, $S_\mu$ refers to its support. Assumption \ref{ass:z_is_connected} is common for GANs as in many practical applications, the random variable $Z$ defined on a low dimensional space $\mathds{R}^d$ is either a multivariate Gaussian. Practicioners also studied distribution or uniform distribution defined on a compact. 

The measure $\mu_\star$ is defined on a subset $E$ of $\mathds{R}^D$ (potentially a highly dimensional space), equipped with the norm $\|\cdot\|$. The generator has the form of a parameterized class of functions from $\mathds{R}^d$ (a space with a much lower dimension) to $E$, say $\mathscr{G}= \{G_\theta: \theta \in \Theta \}$, where $\Theta \subseteq \mathds{R}^p$ is the set of parameters describing the model. Each function $G_\theta$ thus takes input from a $d$-dimensional random variable $Z$ ($Z$ is associated with probability distribution $\gamma$) and outputs ``fake'' observations with distribution $\mu_\theta$. Thus, the class of probability measures $\mathscr{P}= \{ \mu_\theta : \theta \in \Theta \}$ is the natural class of distributions associated with the generator, and the objective of GANs is to find inside this class of candidates the one that generates the most realistic samples, closest to the ones collected from the unknown distribution $\mu_\star$.  

\begin{ass} \label{ass:lipschitz}
    Let $L>0$. The generator $G_\theta$ takes the form of a neural network whose Lipchitz constant is smaller than $L$, \textit{i.e.} for all $(z,z')$, we have $\|G_\theta(z') - G_{\theta}(z)\| \leqslant L \|z-z'\|$.
\end{ass}
This is a reasonable assumption, since \citet{virmaux2018lipschitz} present an algorithm that upper-bounds the Lipschitz constant of deep neural networks. Initially, 1-Lipschitzness was enforced only for the discriminator by clipping the weigths \cite{arjovsky2017wasserstein, ZhLiZhXuHe18}, adding a gradient penalty \cite{gulrajani2017improved, roth2017stabilizing, petzka2017regularization}, or penalizing the spectral norms \cite{spectral_normGANs}. Nowadays, state-of-the-art architectures for large scale generators such as SAGAN \cite{zhang2019self} and BigGAN \cite{brock2018large} also make use of spectral normalization for the generator. 

\subsection{Evaluating GANs with Precision and Recall}
When learning disconnected manifolds, \citet{srivastava2017veegan} proved the need of measuring simultaneously the quality of the samples generated and the mode collapse. \citet{sajjadi2018assessing} proposed the use of a PR metric to measure the quality of GANs. The key intuition is that precision should quantify how much of the fake distribution can be generated by the true distribution while recall measures how much of the true distribution can be re-constructed by the model distribution. More formally, it is defined as follows:
\begin{defn}\label{def:prec_rec_metric}\cite{sajjadi2018assessing}
Let $X, Y$ be two random variables. For $\alpha, \beta \in (0,1]$, $X$ is said to have an attainable precision $\alpha$ at recall $\beta$ w.r.t. $Y$ if there exists probability distributions $\mu, \nu_X, \nu_Y$ such that 
    \begin{equation*}
        Y= \beta \mu + (1- \beta) \nu_Y \quad \text{and} \quad X = \alpha \mu + (1 - \alpha) \nu_X
    \end{equation*}
\end{defn}
The component $\nu_Y$ denotes the part of $Y$ that is “missed”  by $X$, whereas, $\nu_X$ denotes the "noise" part of $X$. We denote $\bar{\alpha}$ (respectively $\bar{\beta}$) the maximum attainable precision (respectively recall).   Th. 1 of \cite{sajjadi2018assessing} states:
\begin{equation*}
    X\big(S_Y \big) = \bar{\alpha} \quad \text{and} \quad Y\big(S_X\big) = \bar{\beta}
\end{equation*}

\paragraph{Improved PR metric.}
\citet{kynkaanniemi2019improved} highlighted an important drawback of the PR metric proposed by \citet{sajjadi2018assessing}: it cannot correctly interpret situations when a large numbers of samples are packed together. To better understand this situation, consider a case where the generator slightly collapses on a specific data point, i.e. there exists $x \in E, \mu_\theta(x)>0$. We show in Appendix \ref{appendix:sajjadi_metric} that if $\mu_\star$ is a non-atomic probability measure and $\mu_\theta$ is highly precise (\textit{i.e.} $\alpha=1$), then the recall $\beta$ must be $0$.

To solve these issues, \citet{kynkaanniemi2019improved} proposed an \textit{Improved Precision-Recall} (Improved PR) metric built on a nonparametric estimation of support of densities. 

\begin{defn} \label{def:improved_prec_rec} \cite{kynkaanniemi2019improved}
    Let $X, Y$ be two random variables and $D_X, D_Y$ two finite sample datasets such that $D_X \sim X^n$ and $D_Y \sim Y^n$. For any $x \in D_X$ (respectively for any $y \in D_Y$), we consider $(x_{(1)}, \dots, x_{(n-1)})$, the re-ordening of elements in $D_X \setminus x$ given their euclidean distance with $x$. For any $k \in \mathds{N}$ and $x \in D_X$, the precision $\alpha_k^n(x)$ of point $x$ is defined as 
    \begin{equation*}
    \alpha_k^n(x) = 1 \iff \exists y \in D_Y, \|x-y\| \leqslant \|y_{(k)}-y\|
    \end{equation*}
    Similarly, the recall $\beta_k^n(y)$ of any given $y \in D_Y$ is
    \begin{equation*}
    \beta_k^n(y) = 1 \iff \exists x \in D_X,  \|y-x\| \leqslant \|x_{(k)}-x\|
    \end{equation*}
    Improved precision (respectively recall) are defined as the average over $D_X$ (respectively $D_Y$) as follows
    \begin{equation*}
        \alpha_k^n = \frac{1}{n} \sum_{x_i \in D_X} \alpha_k^n(x_i) \quad  \quad  \beta_k^n = \frac{1}{n} \sum_{y_i \in D_Y} \beta_k^n(y_i)
    \end{equation*}
\end{defn}

A first contribution is to formalize the link  between PR and Improved PR with the following theorem:

\begin{theorem}\label{th:improved_prec_rec}
    Let $X, Y$ two random variables with probability distributions $\mu$ and $\nu$. Assume that both $\mu$ and $\nu$ are associated with uniformly continuous probability density functions $f_\mu$ and $f_\nu$. Besides, there exists constants $a_1>0, a_2>0$ such that for all $x \in E$ we have $a_1 < f_{\mu_\star}(x) \leqslant a_2$ and $a_1 < f_{\mu_\theta}(x) \leqslant a_2$ for some $c>0$. Also, $(k,n)$ are such that $\frac{k}{\log(n)} \to +\infty$ and  $\frac{k}{n} \to 0$. Then,
    \begin{equation*}
        \alpha_k^n \to \bar{\alpha} \ \ \text{in probability} \quad \text{and} \quad \beta_k^n \to \bar{\beta} \ \ \text{in proba.}
    \end{equation*}
\end{theorem}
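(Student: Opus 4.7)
The plan is to recognize Theorem~\ref{th:improved_prec_rec} as a support-recovery result in disguise. By Theorem~1 of Sajjadi et al.\ (quoted in the preceding paragraph), $\bar{\alpha}=X(S_Y)$ and $\bar{\beta}=Y(S_X)$, so once I exhibit a random set $U_n\subseteq E$ such that (i) the precision estimator equals the empirical $X$-mass of $U_n$ and (ii) $\mu(U_n)\to \mu(S_\nu)$ in probability, the rest is routine concentration. The natural candidate is $U_n:=\bigcup_{y_i\in D_Y}B(y_i,\rho_k(y_i))$, where $\rho_k(y_i)$ is the distance from $y_i$ to its $k$-th nearest neighbour in $D_Y\setminus\{y_i\}$; then by Definition~\ref{def:improved_prec_rec}, $\alpha_k^n=\tfrac{1}{n}\sum_i \mathds{1}_{U_n}(x_i)$. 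The recall half is symmetric with the roles of $X$ and $Y$ swapped, so I focus on precision.

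\emph{Step 1: two-sided uniform control of the $k$-NN radii.} Using the density bounds $a_1<f_\nu\leqslant a_2$, the expected $\nu$-mass of $B(y,r)$ is of order $r^D$ up to constants depending only on $a_1,a_2,D$. Setting $\eta_n:=(k/n)^{1/D}$ and applying Bernstein's inequality to the Bernoulli events $\{Y_j\in B(y_i,r)\}$ for $r=c_\pm\eta_n$, I obtain that uniformly over $i$,
\begin{equation*}
c_1\eta_n \;\leqslant\; \rho_k(y_i) \;\leqslant\; c_2\eta_n
\end{equation*}
with probability tending to $1$. The union bound over the $n$ indices costs a $\log n$ factor, which is absorbed by the assumption $k/\log n\to\infty$.

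\emph{Step 2: $\mu(U_n)\to \mu(S_\nu)$ in probability.} The upper bound in Step~1 and $k/n\to 0$ give $c_2\eta_n\to 0$, so every point of $U_n$ lies within $c_2\eta_n$ of $S_\nu$; uniform continuity of $f_\mu$ then yields $\mu(U_n\setminus S_\nu)\to 0$. For the reverse inclusion, cover $S_\nu$ by $O(\eta_n^{-D})$ balls of radius $\eta_n/2$; by Bernstein again, each such ball contains some $y_i$ with probability $\to 1$, and the lower bound $\rho_k(y_i)\geqslant c_1\eta_n> \eta_n/2$ from Step~1 then guarantees that the centre of every covering ball belongs to $U_n$. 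A final application of uniform continuity of $f_\mu$ shows $\mu(S_\nu\setminus U_n)\to 0$, hence $\mu(U_n)\to\mu(S_\nu)=\bar{\alpha}$.

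\emph{Step 3: concentration.} Conditionally on $D_Y$, the quantity $\alpha_k^n$ is an average of $n$ i.i.d.\ Bernoulli$(\mu(U_n))$ variables, so Hoeffding yields $|\alpha_k^n-\mu(U_n)|=O_{\mathbb{P}}(n^{-1/2})$; combined with Step~2 this gives $\alpha_k^n\to \bar{\alpha}$ in probability. The main obstacle is Step~2: one needs the $\rho_k(y_i)$ to be simultaneously small enough that $U_n$ does not leak far outside $S_\nu$, yet large enough that the covering of $S_\nu$ by balls centred at the $y_i$'s is actually realised. Both properties hinge crucially on the density lower bound $a_1>0$ and on the matched rates $k/\log n\to\infty$ (for uniform control) and $k/n\to 0$ (so that the balls shrink).
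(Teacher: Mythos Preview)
Your proposal is correct and follows the same overall strategy as the paper: write $\alpha_k^n$ as the empirical $\mu$-mass of the random set $U_n=\bigcup_i B(y_i,\rho_k(y_i))$, control the $k$-NN radii uniformly, show that $U_n$ approximates $S_\nu$ in measure, and conclude by a law of large numbers. The paper differs mainly in the tools it invokes: for your Step~1 it appeals to the uniform consistency of the $k$-NN density estimator (Biau--Devroye, Theorem~4.2) rather than a direct Bernstein-plus-union-bound argument; for your Step~2 it cites Devroye's 1980 support-detection theorem in place of your explicit covering; and for Step~3 it splits $\alpha_k^n-\bar\alpha$ into a triangular-array term plus an empirical-measure term, whereas your conditional-Hoeffding route is arguably cleaner. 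Two small imprecisions in your write-up are worth flagging. First, in Step~2 the phrase ``uniform continuity of $f_\mu$'' is not the mechanism at work: for $\mu(U_n\setminus S_\nu)\to 0$ one needs that the collar $(S_\nu)^\varepsilon\setminus S_\nu$ has vanishing Lebesgue measure, and then the upper bound $f_\mu\leqslant a_2$ converts this to a $\mu$-statement (note that under the paper's literal hypothesis $f_\nu\geqslant a_1>0$ on all of $E$, one has $S_\nu=E$ and this half is vacuous anyway); the same density upper bound, not uniform continuity, handles $\mu(S_\nu\setminus U_n)$. Second, the constants in your covering need a little care: as written you only place the \emph{centres} of the covering balls inside $U_n$, so choose the covering radius small enough (e.g.\ $c_1\eta_n/4$) that any point of $S_\nu$ is within $c_1\eta_n$ of some $y_i$ via two triangle inequalities.
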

This theorem, whose proof is delayed to Appendix \ref{appendix:th:comparison_metrics}, underlines the nature of the Improved PR metric: the metric compares the supports of the modeled probability distribution $\mu_\theta$ and of the true distribution $\mu_\star$. This means that Improved PR is a tuple made of both maximum attainable precision $\bar{\alpha}$ and recall $\bar{\beta}$ (e.g. Theorem 1 of \cite{sajjadi2018assessing}). 
As Improved PR is shown to have a better  performance  evaluating GANs sample quality, we use this metric for both the following theoretical results and experiments. 

\subsection{Learning disconnected manifolds}
In this section, we aim to stress the difficulties of learning disconnected manifolds with standard GANs architectures. 
To begin with, we recall the following lemma.
\begin{lem}\label{lem:connected_image_space}
    Assume that Assumptions \ref{ass:z_is_connected} and \ref{ass:lipschitz} are satisfied. Then, for any $\theta \in \Theta$, the support $S_{\mu_\theta}$ is  connected.
\end{lem}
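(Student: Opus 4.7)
The plan is to combine three elementary topological facts: (i) the support of a standard multivariate Gaussian on $\mathds{R}^d$ equals the whole $\mathds{R}^d$ and is therefore (path-)connected; (ii) under Assumption \ref{ass:lipschitz}, $G_\theta$ is continuous (Lipschitz implies continuous); (iii) the continuous image of a connected set is connected, and the closure of a connected set remains connected.

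Concretely, I first identify $S_{\mu_\theta}$ with $\overline{G_\theta(S_\gamma)}$. Since $\mu_\theta = G_\theta \# \gamma$ is the push-forward measure, a point $x \in E$ lies in $S_{\mu_\theta}$ iff every open neighborhood of $x$ has positive $\mu_\theta$-mass, i.e.\ iff $\gamma(G_\theta^{-1}(U)) > 0$ for every open $U \ni x$. A short argument (using that $G_\theta$ is continuous, so preimages of open sets are open, and that $\gamma$ has full support on $\mathds{R}^d$) shows $S_{\mu_\theta} = \overline{G_\theta(\mathds{R}^d)}$.

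Now by Assumption \ref{ass:z_is_connected}, $S_\gamma = \mathds{R}^d$, which is connected. By Assumption \ref{ass:lipschitz}, $G_\theta$ is continuous, so $G_\theta(\mathds{R}^d)$ is the continuous image of a connected set, hence connected. Finally, the closure of a connected subset of a topological space is connected, so $S_{\mu_\theta} = \overline{G_\theta(\mathds{R}^d)}$ is connected, which is the claim.

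There is essentially no obstacle here; the only subtlety worth writing out carefully is the identification $S_{\mu_\theta} = \overline{G_\theta(S_\gamma)}$ for a push-forward measure under a continuous map, which is the one place where Assumption \ref{ass:lipschitz} (through continuity) plays its role twice: once to ensure the image is connected and once to guarantee the support equals the closure of the image rather than something larger.
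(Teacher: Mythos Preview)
Your argument is correct and is the standard proof of this elementary fact. The paper itself does not supply a proof: it merely ``recalls'' the lemma as a known statement, so there is nothing to compare against beyond noting that your write-up is precisely the natural justification the paper leaves implicit.
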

There is consequently a discrepancy between the connectedness of $S_{\mu_\theta}$ and the disconnectedness of $S_{\mu_\star}$. In the case where the manifold lays on two disconnected components, our next theorem exhibit a no free lunch theorem: 

\begin{theorem}("No free lunch" theorem)\label{th:no_free_lunch}
    Assume that Assumptions \ref{ass:z_is_connected} and \ref{ass:lipschitz} are satisfied. Assume also that true distribution $\mu_\star$ lays on two equally measured disconnected manifolds distant from a distance $D>0$. Then, any estimator $\mu_\theta$ that samples equally in both modes must have a precision $\bar{\alpha}$ such that $\bar{\alpha} + \frac{D}{\sqrt{2\pi}L}  e^{\frac{-\Phi^{-1}(\frac{\bar{\alpha}}{2})^2}{2}} \leqslant 1$, where $\Phi$ is the c.d.f. of a standard normal distribution. 
    
    Besides, if $\bar{\alpha} \geqslant 3/4$, $\bar{\alpha} \lesssim 1 - \sqrt{\frac{2}{\pi}} W(\frac{D^2}{4L^2})$ where $W$ is the Lambert $W$ function. 
\end{theorem}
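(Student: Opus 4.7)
The plan is to push the geometric separation between the two modes back to the latent space via the Lipschitz hypothesis and then invoke Gaussian isoperimetry. Write $M_1,M_2$ for the two components of $S_{\mu_\star}$ (with $d(M_1,M_2)\geq D$) and set $A_i:=G_\theta^{-1}(M_i)\subseteq\mathds{R}^d$. By the definition of the precision and the equal-sampling hypothesis, $\gamma(A_1)=\gamma(A_2)=\bar{\alpha}/2$. Since $G_\theta$ is $L$-Lipschitz, any $z_1\in A_1$ and $z_2\in A_2$ satisfy $\|z_1-z_2\|\geq D/L$, so the open $D/L$-neighborhood $(A_1)_{D/L}$ is disjoint from $A_2$, whence $\gamma((A_1)_{D/L})\leq 1-\bar{\alpha}/2$.

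Next I apply the Gaussian isoperimetric inequality $\gamma(A_\varepsilon)\geq\Phi(\Phi^{-1}(\gamma(A))+\varepsilon)$ with $A=A_1$ and $\varepsilon=D/L$, which, combined with the previous bound, yields
\begin{equation*}
\int_{\Phi^{-1}(\bar{\alpha}/2)}^{\Phi^{-1}(\bar{\alpha}/2)+D/L}\varphi(t)\,dt \;\leq\; 1-\bar{\alpha},
\end{equation*}
where $\varphi$ is the standard Gaussian density. Used more crudely, the same inequality already gives $\Phi(\Phi^{-1}(\bar{\alpha}/2)+D/L)\leq 1-\bar{\alpha}/2=\Phi(-\Phi^{-1}(\bar{\alpha}/2))$, hence $\Phi^{-1}(\bar{\alpha}/2)\leq -D/(2L)$. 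This a priori control forces the right endpoint of the integration interval to satisfy $|\Phi^{-1}(\bar{\alpha}/2)+D/L|\leq |\Phi^{-1}(\bar{\alpha}/2)|$, so by symmetry and unimodality of $\varphi$ the minimum of $\varphi$ over the interval is attained at the left endpoint. Lower bounding the integral by $(D/L)\,\varphi(\Phi^{-1}(\bar{\alpha}/2))$ and expanding $\varphi$ gives exactly the first inequality of the theorem.

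For the refinement under $\bar{\alpha}\geq 3/4$, I set $u:=1-\bar{\alpha}\in[0,1/4]$. In this small-$u$ regime $\Phi^{-1}(\bar{\alpha}/2)=\Phi^{-1}(1/2-u/2)$ is close to $0$, and an elementary bound (e.g.\ from $\Phi(x)-1/2\geq x\varphi(x)$ for $x\geq 0$) shows $\Phi^{-1}(\bar{\alpha}/2)^2\leq c\,u^2$ with a constant $c$ of order $\pi$. Substituting into the first inequality and squaring produces a relation of the form $y e^y\gtrsim D^2/(4L^2)$ with $y$ proportional to $u^2$. Inverting via the defining identity $W(y e^y)=y$ of the Lambert $W$ function gives $u^2\gtrsim (2/\pi)\,W(D^2/(4L^2))$, from which the announced bound follows after using $W\leq 1$ on the relevant range to pass between $\sqrt{W}$ and $W$.

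The technical crux is turning the Minkowski-content form of Gaussian isoperimetry into the clean density-weighted lower bound $(D/L)\,\varphi(\Phi^{-1}(\bar{\alpha}/2))$: this requires the \emph{same} isoperimetric inequality to be used twice, first to deduce $\Phi^{-1}(\bar{\alpha}/2)\leq -D/(2L)$ (which certifies that the left endpoint of the integration interval dominates the right one in absolute value) and then to lower bound the Gaussian measure of the $D/L$-enlargement of $A_1$. Everything else is a routine Lipschitz pullback or a standard Lambert-$W$ inversion.
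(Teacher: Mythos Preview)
Your argument is correct and follows the same route as the paper: pull the separation back to the latent space via the Lipschitz bound, invoke the Gaussian isoperimetric inequality, and convert the resulting bound on $\Phi(\Phi^{-1}(\bar\alpha/2)+\varepsilon)$ into a density-weighted linear inequality. The only cosmetic difference is that the paper enlarges \emph{both} preimages by $D/(2L)$ and uses convexity of $\Phi$ on $(-\infty,0]$, whereas you enlarge $A_1$ alone by $D/L$ and bound the integrand by its value at the left endpoint after first deducing $\Phi^{-1}(\bar\alpha/2)\le -D/(2L)$; these two bookkeeping choices yield the identical inequality, and your Lambert-$W$ step mirrors the paper's Taylor expansion of $\Phi^{-1}$ near $1/2$.
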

 The proof of this theorem is delayed to Appendix \ref{appendix:proof_th1}. It is mainly based on the Gaussian isoperimetric inequality \cite{borell1975brunn, sudakov1978extremal} that states that among all sets of given Gaussian measure in any finite dimensional Euclidean space, half-spaces have the minimal Gaussian boundary measure. If in Fig.~\ref{fig:intro}, the generator has thus learned the optimal separation, it is yet not known, to the limit of our knowledge, how to enforce such geometrical properties in the latent space.

In real world applications, when the number of distinct sub-manifolds increases, we expect the volume of these boundaries to increase with respect to the number of different classes covered by the modeled distribution $\mu_\theta$. Going in this direction, we better formalize this situation, and show an extended "no free lunch theorem" by expliciting an upper-bound of the precision $\bar{\alpha}$ in this broader framework.

\begin{ass}\label{ass:disconnected_extension}
    The true distribution $\mu_\star$ lays on $M$ equally-measured disconnected components at least distant from some constant $D>0$. 
\end{ass}
This is likely to be true for datasets made of symbol designed to be highly distinguishable (\textit{e.g.} digits in the MNIST dataset). In very high dimension, this assumption also holds for complex classes of objects appearing in many different contexts  (\textit{e.g.} the bubble class in ImageNet, see Appendix). 

To better apprehend the next theorem, note $A_m$ the pre-image in the latent space of mode $m$ and $A_m^r$ its $r$-enlargement: $A_m^r := \{z \in \mathds R^d \mid \dist(z,A_m) \le r\}, r>0$.
\begin{theorem}(Generalized "no free lunch" theorem)\label{th:extended_no_free_lunch}
    Assume that Assumptions \ref{ass:z_is_connected}, \ref{ass:lipschitz}, and \ref{ass:disconnected_extension} are satisfied, and that the pre-image enlargements $A_m^\varepsilon$, with $\varepsilon=\frac{D}{2L}$, form a partition of the latent space with equally measured elements. 
    
    Then, any estimator $\mu_\theta$ with recall $\bar{\beta}>\frac 1M$ must have a precision $\bar{\alpha}$ at most $ \frac{1+x^2}{x^2} e^{-\frac{1}{2}\varepsilon^2}e^{-\varepsilon x}$ where $x = \Phi^{-1} (1-\frac{1}{\bar{\beta}M})$ and $\Phi$ is the c.d.f. of a standard normal distribution.
\end{theorem}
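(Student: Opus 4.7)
The plan is to pair the $L$-Lipschitz constraint with the Gaussian isoperimetric inequality applied to the latent pre-images $A_m=G_\theta^{-1}(M_m)$, and to finish with Mill's-ratio tail estimates so as to obtain the closed-form bound stated.

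I would begin by fixing the bookkeeping. Let $K$ denote the number of modes actually reached by $G_\theta$ (those with $A_m\neq\emptyset$); since each touched mode contributes at most its mass $1/M$ to the recall and each untouched one contributes nothing, $\bar{\beta}\leq K/M$, so $K\geq \bar{\beta}M\geq 2$ by hypothesis. The choice $\varepsilon=D/(2L)$ is exactly the largest radius compatible with the partition assumption: for any $z\in A_m^\varepsilon$ the Lipschitz bound gives $\dist(G_\theta(z),M_m)\leq L\varepsilon=D/2$, and the $D$-separation of modes then forbids $z$ from lying in two enlargements at once. The partition hypothesis therefore reads $\gamma(A_m^\varepsilon)=1/K$ for each active $m$.

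I would next apply the Borell--Sudakov--Tsirelson inequality $\gamma(A^\varepsilon)\geq \Phi(\Phi^{-1}(\gamma(A))+\varepsilon)$ to each $A_m$. Setting $x_K=\Phi^{-1}(1-1/K)$, the known value $\gamma(A_m^\varepsilon)=1/K$ inverts to $\gamma(A_m)\leq 1-\Phi(x_K+\varepsilon)$, so summing the $K$ contributions to the precision gives
\[
\bar{\alpha}=\gamma\Bigl(\textstyle\bigcup_{m}A_m\Bigr)\leq K\bigl(1-\Phi(x_K+\varepsilon)\bigr).
\]
Two Mill's-ratio bounds then close the computation: the lower bound $1-\Phi(y)\geq \frac{y}{(1+y^2)\sqrt{2\pi}}e^{-y^2/2}$ applied to the identity $1-\Phi(x_K)=1/K$ gives $K\leq \frac{(1+x_K^2)\sqrt{2\pi}}{x_K}e^{x_K^2/2}$, while the upper bound $1-\Phi(x_K+\varepsilon)\leq\frac{1}{x_K\sqrt{2\pi}}e^{-(x_K+\varepsilon)^2/2}$ handles the tail; multiplying, the $\sqrt{2\pi}$'s cancel and the cross exponent collapses to $-\varepsilon x_K-\varepsilon^2/2$, so
\[
\bar{\alpha}\leq \frac{1+x_K^2}{x_K^2}\,e^{-\varepsilon^2/2}\,e^{-\varepsilon x_K}.
\]

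To finish with $x=\Phi^{-1}(1-1/(\bar{\beta}M))$ in place of $x_K$, I would observe that $K\geq\bar{\beta}M$ forces $x_K\geq x$ and that the map $y\mapsto \frac{1+y^2}{y^2}e^{-\varepsilon y-\varepsilon^2/2}$ is strictly decreasing on $(0,\infty)$, its logarithmic derivative being $-\frac{2}{y(y^2+1)}-\varepsilon<0$; the inequality therefore only strengthens when we substitute $x$ for $x_K$. The main obstacle, I expect, is the reduction from continuous $\mu_\star$-recall to the discrete count $K$, i.e.\ really pinning down $\gamma(A_m^\varepsilon)=1/K$ from the ``equally measured partition'' clause when only $K$ of the $M$ pre-images are nonempty; once that bookkeeping is justified, the Lipschitz/isoperimetry pairing and the Mill's-ratio algebra are mechanical.
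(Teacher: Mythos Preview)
Your proposal is correct and lands on the same closed-form bound as the paper, with the same Mill's-ratio endgame and the same monotonicity step to pass from $x_K$ to $x=\Phi^{-1}(1-1/(\bar\beta M))$. The route to the intermediate estimate $\bar\alpha\le K\bigl(1-\Phi(x_K+\varepsilon)\bigr)$, however, differs. You apply Borell--Sudakov--Tsirelson directly to each \emph{small} pre-image $A_m$ and invert, using the partition hypothesis $\gamma(A_m^\varepsilon)=1/K$ to read off $\gamma(A_m)\le 1-\Phi(x_K+\varepsilon)$. The paper instead factors the argument through an auxiliary proposition lower-bounding the Gaussian measure of the union of the inner $\varepsilon$-boundaries of a partition: it applies isoperimetry to each \emph{complement} $(A_m^\varepsilon)^c$, bounds each $\gamma(\partial^{-\varepsilon}A_m^\varepsilon)=\gamma\bigl(((A_m^\varepsilon)^c)^\varepsilon\bigr)-\gamma\bigl((A_m^\varepsilon)^c\bigr)$ from below, and then uses the disjointness of $A_m$ and $\partial^{-\varepsilon}A_m^\varepsilon$ inside $A_m^\varepsilon$ to squeeze $\gamma(A_m)$. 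Unwinding the two computations gives the same inequality, but your version is more direct: it avoids the inner-boundary bookkeeping and the verification that $A_m\cap\partial^{-\varepsilon}A_m^\varepsilon=\emptyset$. What the paper's detour buys is a stand-alone statement about Gaussian partitions (their Proposition on $\gamma(\Delta^{-\varepsilon}(A_1,\dots,A_K))$) that is reused for the non-equitable extension; your argument is tailored to the equitable case but needs no such auxiliary machinery.
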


\begin{figure}
    \captionsetup[subfigure]{margin={0.5cm,0cm}}
    \subfloat[WGAN 4 classes: \newline visualisation of $\|J_G(z)\|_{F}$.  \label{fig:gradients_synthetic_4class}]
    {   
        \includegraphics[width=0.45\columnwidth]{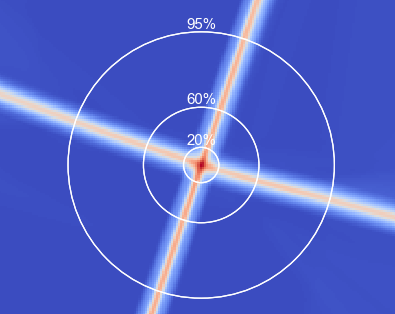}
    }
    \captionsetup[subfigure]{margin={0.5cm,0cm}}
    \subfloat[WGAN 9 classes: \newline
    visualisation of $\|J_G(z)\|_{F}$.  \label{fig:gradients_synthetic_9class}]
    {   
        \includegraphics[width=0.45\columnwidth]{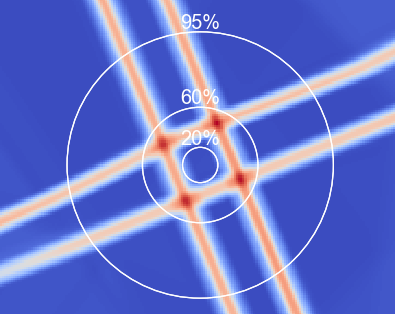}
    }
    \hfill
    \captionsetup[subfigure]{margin={0.5cm,0cm}}
    \subfloat[WGAN 25 classes: \newline
    visualisation of $\|J_G(z)\|_{F}$.  \label{fig:gradients_synthetic_25class}]
    {   
        \includegraphics[width=0.45\columnwidth]{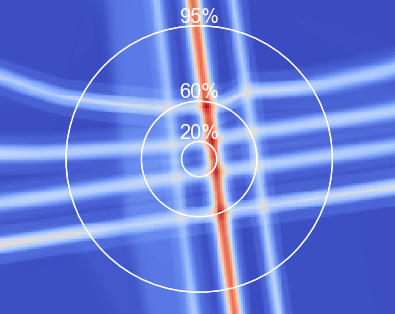}
    }
    \captionsetup[subfigure]{margin={0.6cm,0cm}}
    \subfloat[Precision w.r.t. $D$ (mode distance) and $M$ (classes).\label{fig:WGAN_no_truncation}]
    {   
        \includegraphics[width=0.47\linewidth]{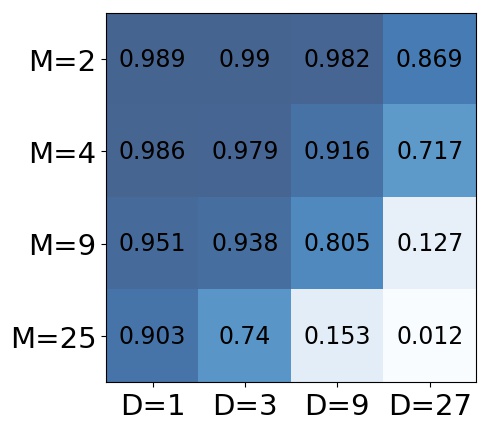}
    }
    \caption{\label{fig:explaining_theorems} Illustration of Theorem \ref{th:extended_no_free_lunch}. If the number of classes $M \to \infty$ or the distance $D \to \infty$, then the precision $\bar{\alpha} \to 0$. We provide in appendix heatmaps for more values of $M$.}
\end{figure}
Theorem \ref{th:extended_no_free_lunch}, whose proof is delayed to Appendix \ref{appendix:proof_theorem3}, states a lower-bound the measure of samples mapped out of the true manifold. We expect our bound to be loose since no theoretical results are known, to the best of our knowledge, on the geometry of the separation that minimizes the boundary between different classes (when $M \geqslant 3$). Finding this optimal cut would be an extension of the honeycomb theorem \cite{hales2001honeycomb}. In Appendix \ref{appendix:theorem3_more_general_setting} we give a more technical statement of Theorem \ref{th:extended_no_free_lunch} without assuming equality of measure of the sets $A_m^\varepsilon$.

The idea of the proof is to consider the border of an individual cell with the rest of the partition. It is clear that at least half of the frontier will be inside this specific cell. Then, to get to the final result, we sum the measures of the frontiers contained inside all of the different cells. Remark that our analysis is fine enough to keep a dependency in $M$ which translates into a maximum precision that goes to zero when $M$ goes to the infinity and all the modes are covered. More precisely, in this scenario where all pre-images have equal measures in the latent space, one can derive the following bound, when the recall $\bar{\beta}$ is kept fixed and $M$ increases:
\begin{equation}\label{asymptotic_illustration_of_th3}
    \bar{\alpha} \overset{M \rightarrow \infty}{\leqslant} e^{-\frac{1}{2}\varepsilon^2}e^{-\varepsilon\sqrt{2\log(\bar{\beta}M})} \quad \text{where} \ \varepsilon = \frac{D}{2L}
\end{equation}
For a fixed generator, this equation illustrates that the precision $\bar{\alpha}$ decreases when either the distance $D$ (equivalently $\varepsilon$) or the number of classes $M$ increases. For a given $\varepsilon$, $\bar{\alpha}$ converges to $0$ with a speed  $O(\frac{1}{(\bar{\beta}M)^{\sqrt{2}\varepsilon}})$. To better illustrate this asymptotic result, we provide results from a 2D synthetic setting. In this toy dataset, we control both the number $M$ of disconnected manifolds and the distance $D$. Figure \ref{fig:explaining_theorems} clearly corroborates \eqref{asymptotic_illustration_of_th3} as we can easily get the maximum precision close to $0$ ($M=25$, $D=27$). 

\subsection{Jacobian-based truncation (JBT) method}
The analysis of the deformation of the latent space offers a grasp on the behavior of GANs. For instance, \citet{arvanitidis2017latentspace} propose a distance accounting for the distortions made by the generator. For any pair of points $(z_1, z_2) \sim Z^2$, the distance is defined as the length of the geodesic $d(z_1, z_2) = \int_{[0,1]} \| J_{G_\theta}(\gamma_t) \frac{d \gamma_t}{dt}\| dt$ where $\gamma$ is the geodesic parameterized by $t \in [0,1]$ and $J_{G_\theta}(z)$ denotes the Jacobian matrix of the generator at point $z$. Authors have shown that the use of this distance in the latent space improves clustering and interpretability. We make a similar observation that the generator's Jacobian Frobenius norm provides meaningful information. 

Indeed, the frontiers highlighted in Figures \ref{fig:gradients_synthetic_4class},  \ref{fig:gradients_synthetic_9class}, and \ref{fig:gradients_synthetic_25class} correspond to areas of low precision mapped out of the true manifold: this is the \textit{no GAN's land}. We argue that when learning disconnected manifolds, the generator tries to minimize the number of samples that do not belong to the support of the true distribution and that this can only be done by making paths steeper in the \textit{no GAN's land}. Consequently,  data points $G_\theta(z)$ with high Jacobian Frobenius norm (JFN) are more likely to be outside the true manifold. 
To improve the precision of generative models, we thus define a new truncation method by removing points with highest JFN. 

However, note that computing the generators's JFN is expensive to compute for neural networks, since being defined as follows,
\begin{equation*}\label{eq:JFN}
    \| J_{G_\theta}(z) \|_{F}^2 = \sum\limits_{i = 1}^{m} \sum\limits_{j = 1}^{n} \left(\frac{\partial G_\theta(z)_i}{\partial z_j}\right)^2,    
\end{equation*}
it requires a number of backward passes equal to the output dimension. To make our truncation method tractable, we use a stochastic approximation of the Jacobian Frobenius norm based on the following result from \citet{rifai2011higher}:
\begin{equation*}
\| J_{G_\theta}(z) \|^2 = \lim\limits_{\substack{N \to \infty \\ \sigma \to 0}} \ \frac{1}{N} \sum_{\varepsilon_i}^N \frac{1}{\sigma^2} \| G_\theta(z+\varepsilon_i) - G_\theta(z) \|^2 
\end{equation*}
where $\varepsilon_i \sim  \sim \mathcal{N}(0,\sigma^2 I$ and $I$ is the identity matrix of dimension $d$. The variance $\sigma$ of the noise and the number of samples are used as hyper-parameters. In practice,  $\sigma$ in $[1\mathrm{e}{-4}; 1\mathrm{e}{-2}]$ and $N = 10$ give consistent results. 

Based on the preceding analysis, we propose a new \textbf{Jacobian-based truncation} (JBT) method that rejects a certain ratio of the generated points with highest JFN. This truncation ratio is considered as an hyper-parameter for the model. We show in our experiments that our JBT can be used to to detect samples outside the real data manifold and that it consequently improves the precision of the generated distribution as measured by the Improved PR metric. 

\section{Experiments}\label{section:experiments}
In the following, we show that our truncation method, JBT, can significantly improve the performances of generative models on several models, metrics and datasets. Furthermore, we compare JBT with over-parametrization techniques specifically designed for disconnected manifold learning. We show that our truncation method reaches or surpasses their performance, while it has the benefit of not modifying the training process of GANs nor using a mixture of generators, which is computationally expensive. Finally, we confirm the efficiency of our method by applying it on top of BigGAN \cite{brock2018large}. 

Except for BigGAN, for all our experiments, we use Wasserstein GAN with gradient penalty \cite{gulrajani2017improved}, called WGAN for conciseness. We give in Appendix \ref{appendix:experimental_details} the full details of our experimental setting. The use of WGAN is motivated by the fact that it was shown to stabilize the training and significantly reduce mode collapse \cite{arjovsky2017towards}. However, we want to emphasise that our method can be plugged on top of any generative model fitting disconnected components. 

\subsection{Evaluation metrics}
To measure performances of GANs when dealing with low dimensional applications - as with synthetic datasets - we equip our space with the standard Euclidean distance. However, for high dimensional applications such as image generation, \citet{brock2018large, kynkaanniemi2019improved} have shown that embedding images into a feature space with a pre-trained convolutional classifier provides more semantic information. In this setting, we consequently use the euclidean distance between the images' embeddings from a classifier. 
For a pair of images $(a,b)$, we define the distance $d(a,b)$ as $d(a,b) = \| \phi(a) - \phi(b) \|_2$ where $\phi$ is a pre-softmax layer of a supervised classifier, trained specifically on each dataset. Doing so, they will more easily separate images sampled from the true distribution $\mu_\star$ from the ones sampled by the distribution $\mu_\theta$. 
 
We compare performances  using Improved PR  \cite{kynkaanniemi2019improved}. We also report the \textit{Marginal Precision} which is the precision of newly added samples when increasing the ratio of kept samples. Besides, for completeness, we report FID \cite{heusel2017gans} and recall precise definitions in Appendix~\ref{appendix:metrics}. Note that FID was not computed with InceptionNet, but a classifier pre-trained on each dataset. 

\subsection{Synthetic dataset}\label{subsection:exp_synthetic}
We first consider the true distribution to be a 2D Gaussian mixture of 9 components. Both the generator and the discriminator are modeled with feed-forward neural networks.

\begin{figure}
    \centering
    \hspace{-0.15cm}
    \subfloat[WGAN - 2500 samples\label{fig:WGAN_no_truncationBIS}]
    {   
        \includegraphics[width=0.41\linewidth]{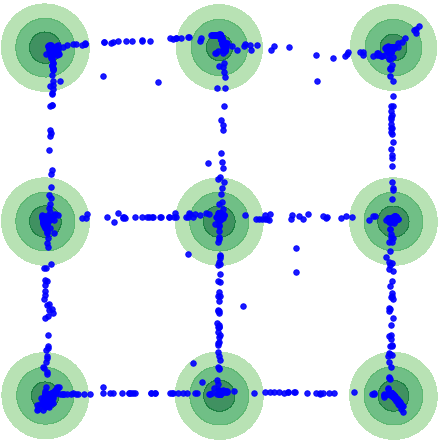}
    }
    \hspace{0.4cm}
    \subfloat[WGAN  90\% JBT. \label{fig:WGAN_truncation_highest_gradient_synth_90}]
    {   
        \includegraphics[width=0.41\linewidth]{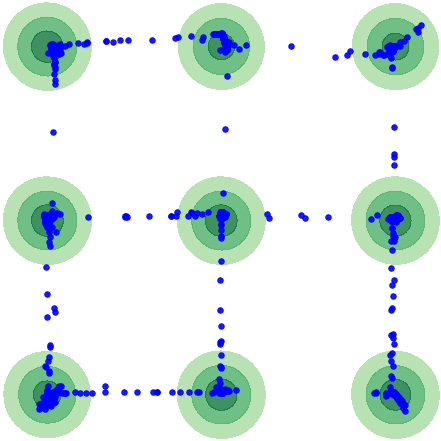}
    }\hfill
    \subfloat[WGAN 70\% JBT. \label{fig:WGAN_truncation_highest_gradient_synth_80}]
    {   
        \includegraphics[width=0.41\linewidth]{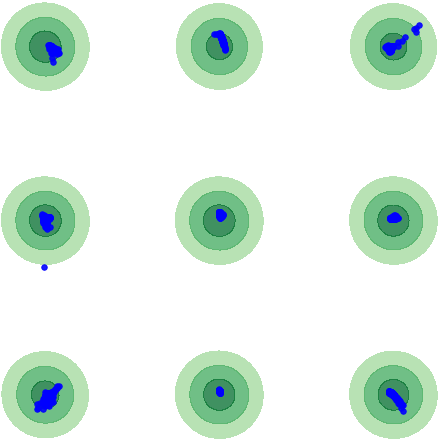}
    }
    \subfloat[97\% confidence intervals .\label{fig:global_plots}]
    {   
        \includegraphics[width=0.5\linewidth]{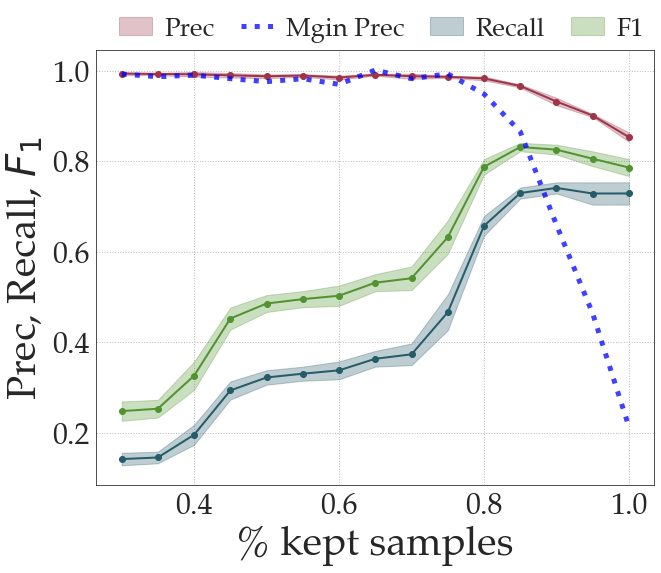}
    }
    \caption{\label{fig:synthetic_9_modes} Mixture of 9 Gaussians in green, generated points in blue. Our truncation method (JBT)  removes least precise data points as marginal precision plummets.}
\end{figure}

\begin{figure*}
    {   
        \includegraphics[width=0.28\linewidth]{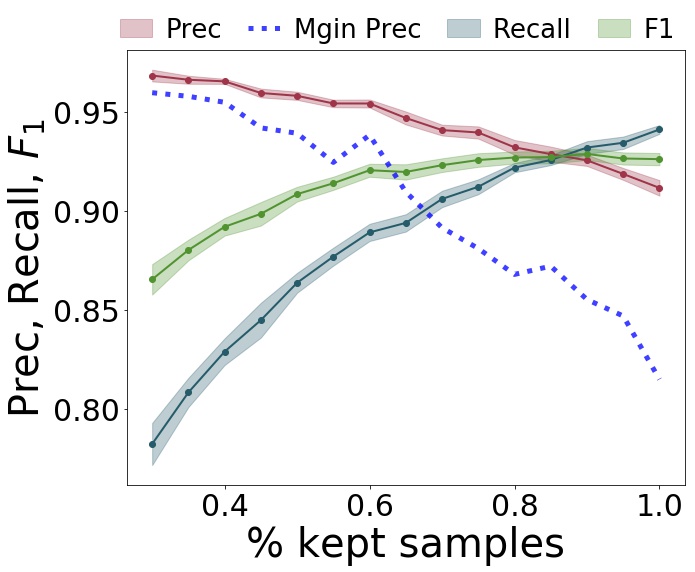}
    } \hfill
    {   
        \includegraphics[width=0.28\linewidth]{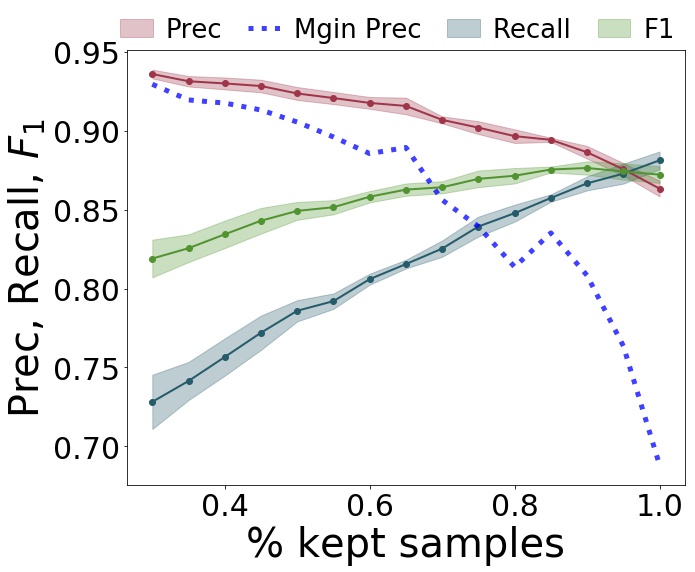}
    } \hfill
    {   
        \includegraphics[width=0.28\linewidth]{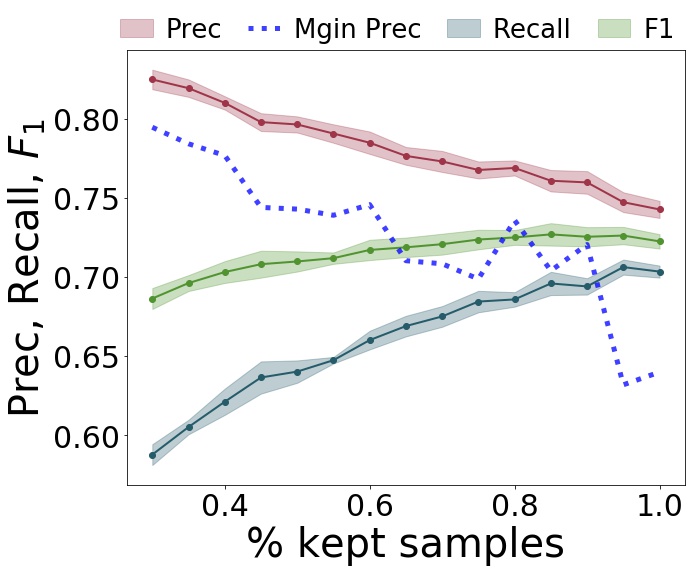}
    }\\
    ~\hspace{1cm}
     \subfloat[MNIST dataset. \label{fig:ranked_MNIST}]
    {   
        \includegraphics[width=0.28\linewidth]{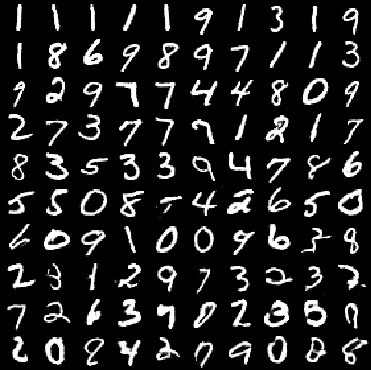}
    } \hfill
     \subfloat[F-MNIST dataset. \label{fig:ranked_FMNIST}]
    {   
        \includegraphics[width=0.28\linewidth]{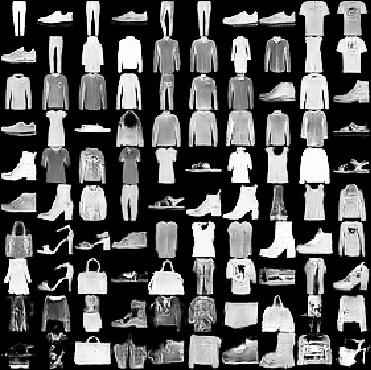}
    }\hfill
    \subfloat[CIFAR10 datatset. \label{fig:ranked_CIFAR10}]
    {   
        \includegraphics[width=0.28\linewidth]{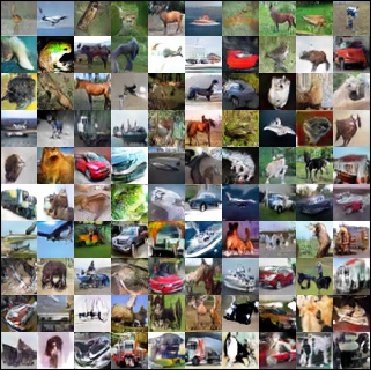}
    }
    \\
    \vspace{-0.4cm}
    \caption{For high levels of kept samples, the marginal precision plummets of newly added samples, underlining the efficiency of our truncation method (JBT). Reported confidence intervals are $97\%$ confidence intervals. On the second row, generated samples ordered by their JFN (left to right, top to bottom). In the last row, the data points generated are blurrier and outside the true manifold.  \label{fig:trunc_MNIST_FMNIST}} 
\end{figure*}

Interestingly, the generator tries to minimize the sampling of off-manifolds data during training until its JFN gets saturated (see Appendix \ref{appendix:precision_saturated_synthetic}). One way to reduce the number of off-manifold samples is to use JBT. Indeed, off-manifold data points progressively disappear when being more and more selective, as illustrated in Figure \ref{fig:WGAN_truncation_highest_gradient_synth_80}. We quantitatively confirm that our truncation method (JBT) improves the precision. On Fig. \ref{fig:global_plots}, we observe that keeping the 70\% of lowest JFN samples leads to an almost perfect precision of the support of the generated distribution. Thus, off-manifold samples are in the 30\% samples with highest JFN. 

\subsection{Image datasets}\label{subsection:image_datasets}
We further study JBT on three different datasets: MNIST \cite{lecun98gradientbasedlearning}, FashionMNIST \cite{xiao2017/online} and CIFAR10 \cite{krizhevsky2009learning}. Following \cite{khayatkhoei2018disconnected} implementation, we use a standard CNN architecture for MNIST and FashionMNIST while training a ResNet-based model for CIFAR10 \citep{gulrajani2017improved}. 

\begin{table}[ht]
\begin{center}
\begin{tabular}{|l|c|c|c|}
\cline{2-4}
\multicolumn{1}{l|}{\textbf{MNIST}}  & Prec. & Rec. &  FID \\
\hline
WGAN& $91.2 {\scriptstyle \pm 0.3}$ & $\mathbf{93.7 {\scriptstyle \pm 0.5}} $ & $24.3 {\scriptstyle  \pm 0.3}$ \\
WGAN JBT 90\%& $92.5 {\scriptstyle \pm 0.5}$ & $92.9 {\scriptstyle \pm 0.3}$ & $26.9 {\scriptstyle \pm 0.5}$ \\
WGAN JBT 80\%& $\mathbf{93.3 {\scriptstyle \pm 0.3}}$ & $91.8 {\scriptstyle \pm 0.4}$ & $33.1 {\scriptstyle \pm 0.3}$ \\
W-Deligan & $89.0 {\scriptstyle \pm 0.6}$ & $\mathbf{93.6 {\scriptstyle \pm 0.3}}$ & $31.7 {\scriptstyle \pm 0.5}$\\
DMLGAN & $\mathbf{93.4 {\scriptstyle \pm 0.2}}$ & $92.3 {\scriptstyle \pm 0.2}$ & $\mathbf{16.8 {\scriptstyle \pm 0.4}}$ \\
\hline
\multicolumn{4}{l} {\textbf{F-MNIST}}  \\
\hline
WGAN & $86.3 {\scriptstyle \pm 0.4}$ &  $\mathbf{88.2 {\scriptstyle \pm 0.2}}$ & $259.7 {\scriptstyle \pm 3.5}$\\
WGAN JBT 90\%& $88.6 {\scriptstyle \pm 0.6}$ & $86.6 {\scriptstyle \pm 0.5}$ & $\mathbf{257.4 {\scriptstyle \pm 3.0}}$ \\
WGAN JBT 80\% & $\mathbf{89.8 {\scriptstyle \pm 0.4}}$ & $84.9 {\scriptstyle \pm 0.5}$ & $396.2 {\scriptstyle \pm 6.4}$\\
W-Deligan & $88.5 {\scriptstyle \pm 0.3}$ & $85.3 {\scriptstyle \pm 0.6}$ & $310.9 {\scriptstyle \pm 3.1}$ \\
DMLGAN & $87.4 {\scriptstyle \pm 0.3}$ & $\mathbf{88.1 {\scriptstyle \pm 0.4}}$ & $\mathbf{253.0 {\scriptstyle \pm 2.8}}$ \\
\hline
\end{tabular}
\end{center}
\caption{JBT $x\%$ means we keep the $x\%$ samples with lowest Jacobian norm. Our truncation method (JBT) matches over-parameterization techniques. \label{table:prec_rec_mnist_paper} $\pm$ is $97\%$ confidence interval.}
\end{table}

Figure \ref{fig:trunc_MNIST_FMNIST} highlights that JBT also works on high dimensional datasets as the marginal precision plummets for high truncation ratios. Furthermore, when looking at samples ranked by increasing order of their JFN, we notice that samples with highest JFN are standing in-between manifolds. For example, those are ambiguous digits resembling both a "0" and a "6" or shoes with unrealistic shapes. 

To further assess the efficiency of our truncation method, we also compare its performances with two state-of-the-art over-parameterization techniques that were designed for disconnected manifold learning. First, \cite{gurumurthy2017deligan} propose DeliGAN, a reparametrization trick to transform the unimodal Gaussian latent distribution into a mixture. The different mixture components are later learnt by gradient descent. For fairness, the re-parametrization trick is used on top of WGAN. Second, \cite{khayatkhoei2018disconnected} define DMLGAN, a mixture of generators to better learn disconnected manifolds. In this architecture, each generator is encouraged to target a different submanifold by enforcing high mutual information between generated samples and generator's ids. Keep in mind that for DeliGAN (respectively DMLGAN), the optimal number of components (respectively generators) is not known and is a hyper-parameter of the model that has to be cross-validated. 

\begin{figure*}
    {   
        \includegraphics[width=0.25\linewidth]{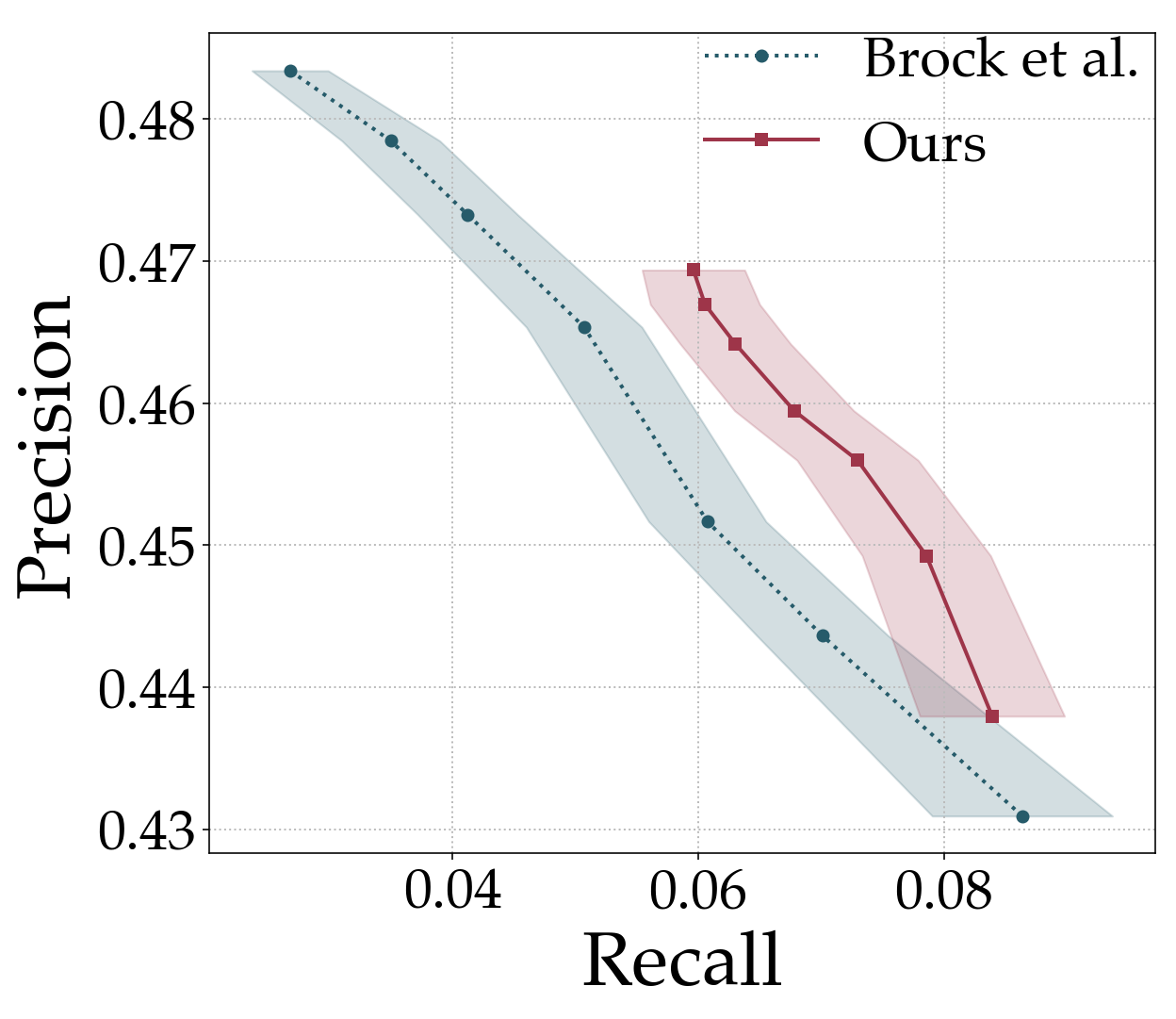}
    } \hfill
    {   
        \includegraphics[width=0.25\linewidth]{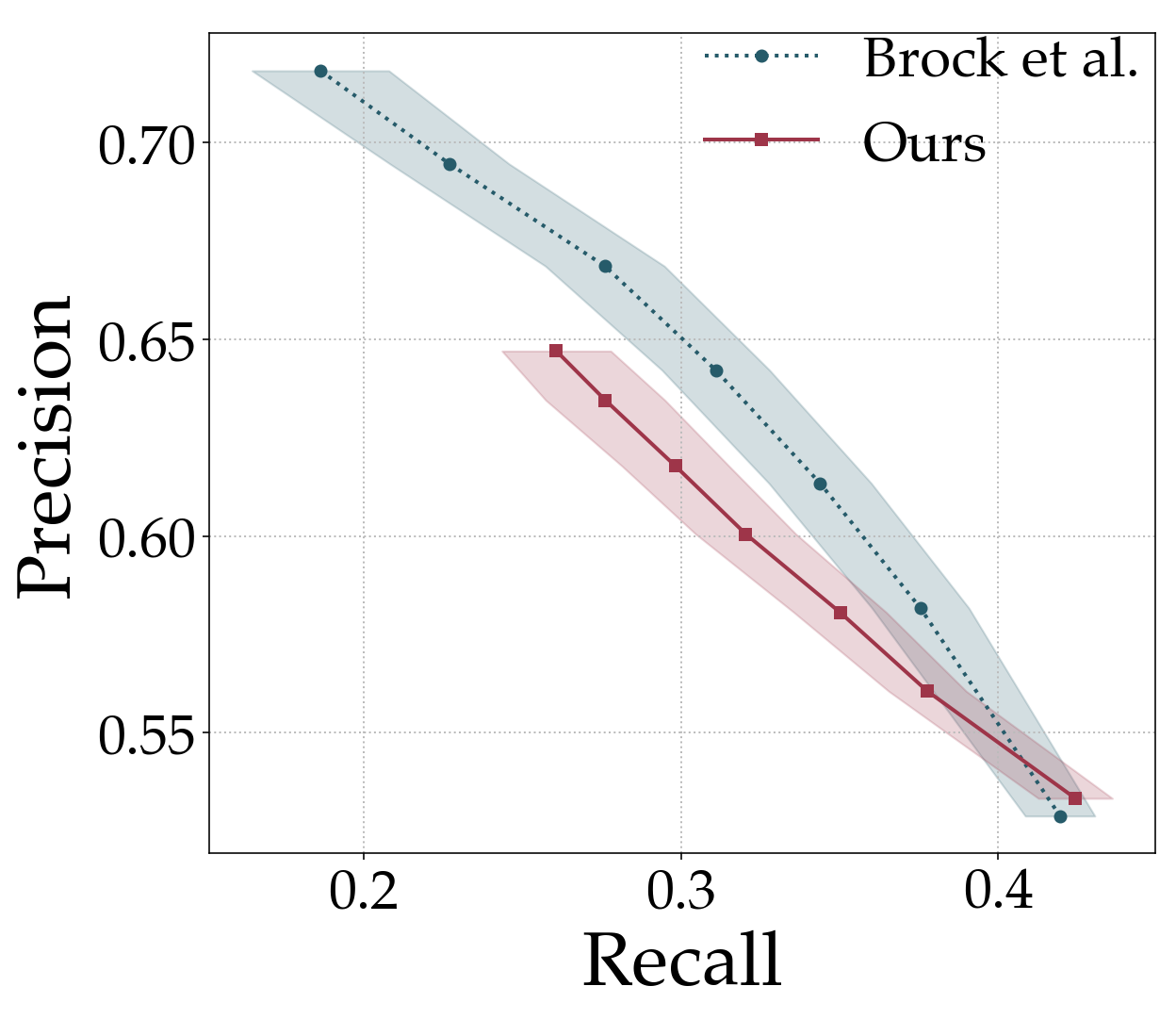}
    } \hfill
    {   
        \includegraphics[width=0.25\linewidth]{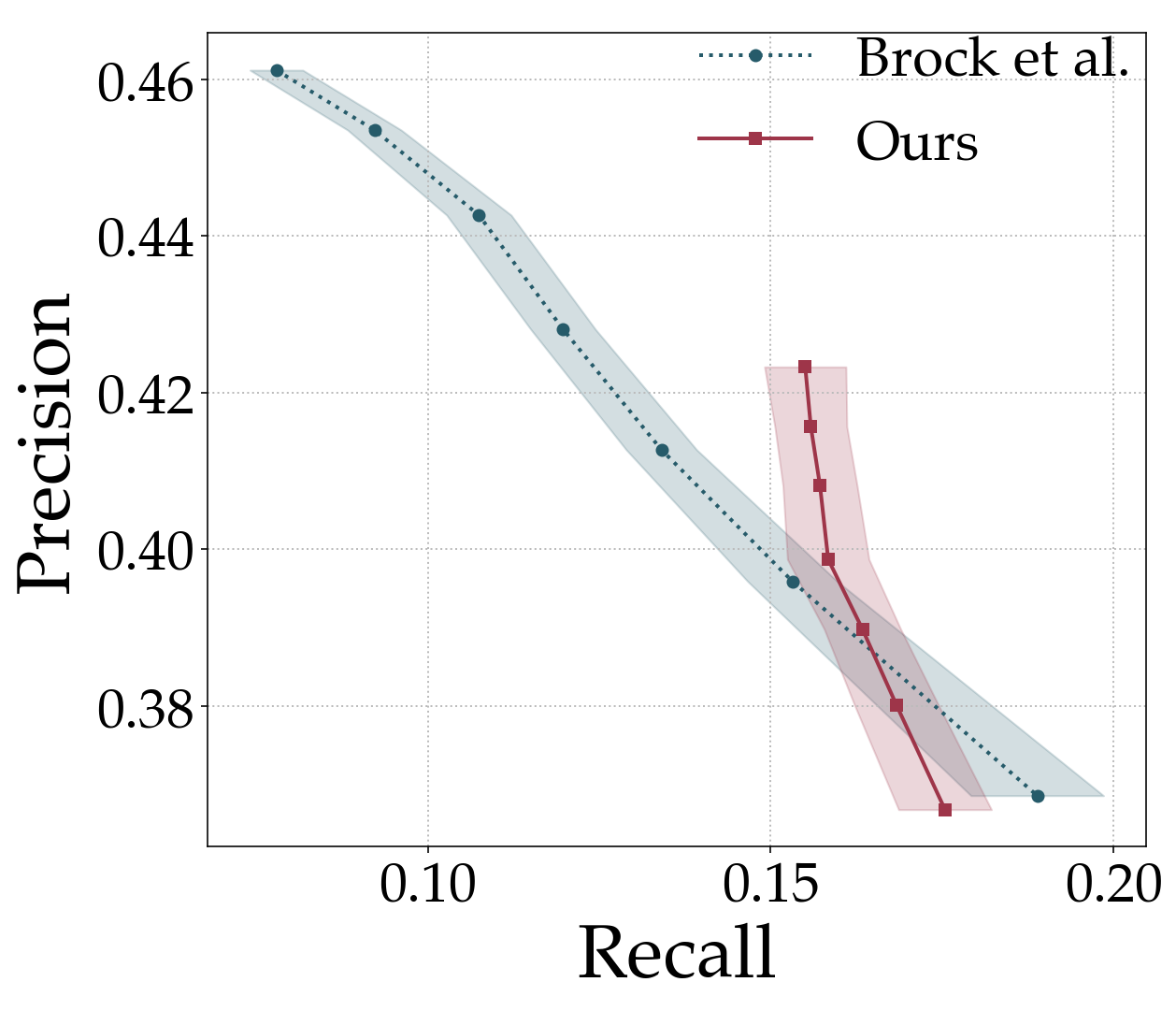}
    }\\
    ~\hspace{1cm}
     \subfloat[House finch. \label{fig:ranked_BG1}]
    {   
        \includegraphics[width=0.25\linewidth]{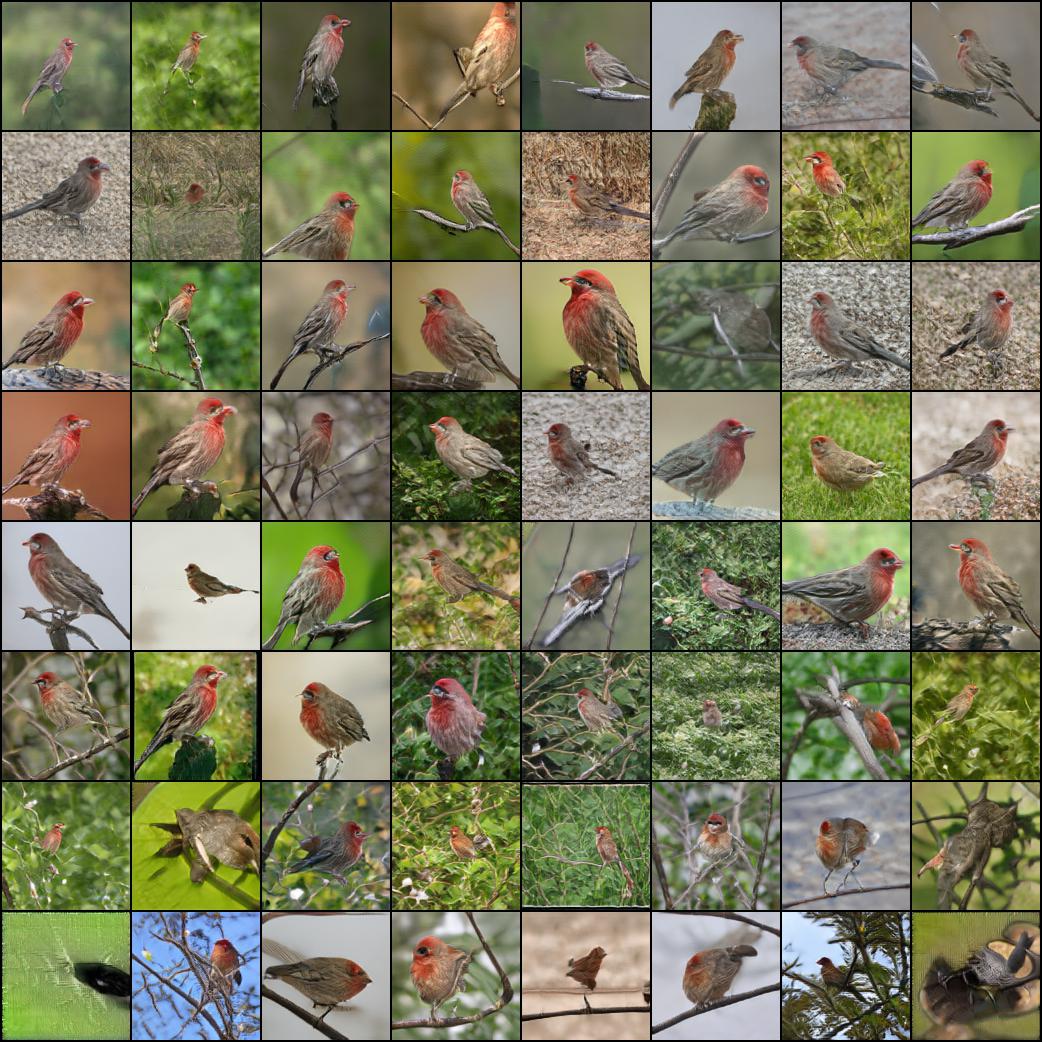}
    } \hfill
     \subfloat[Parachute.  \label{fig:ranked_BG2}]
    {   
        \includegraphics[width=0.25\linewidth]{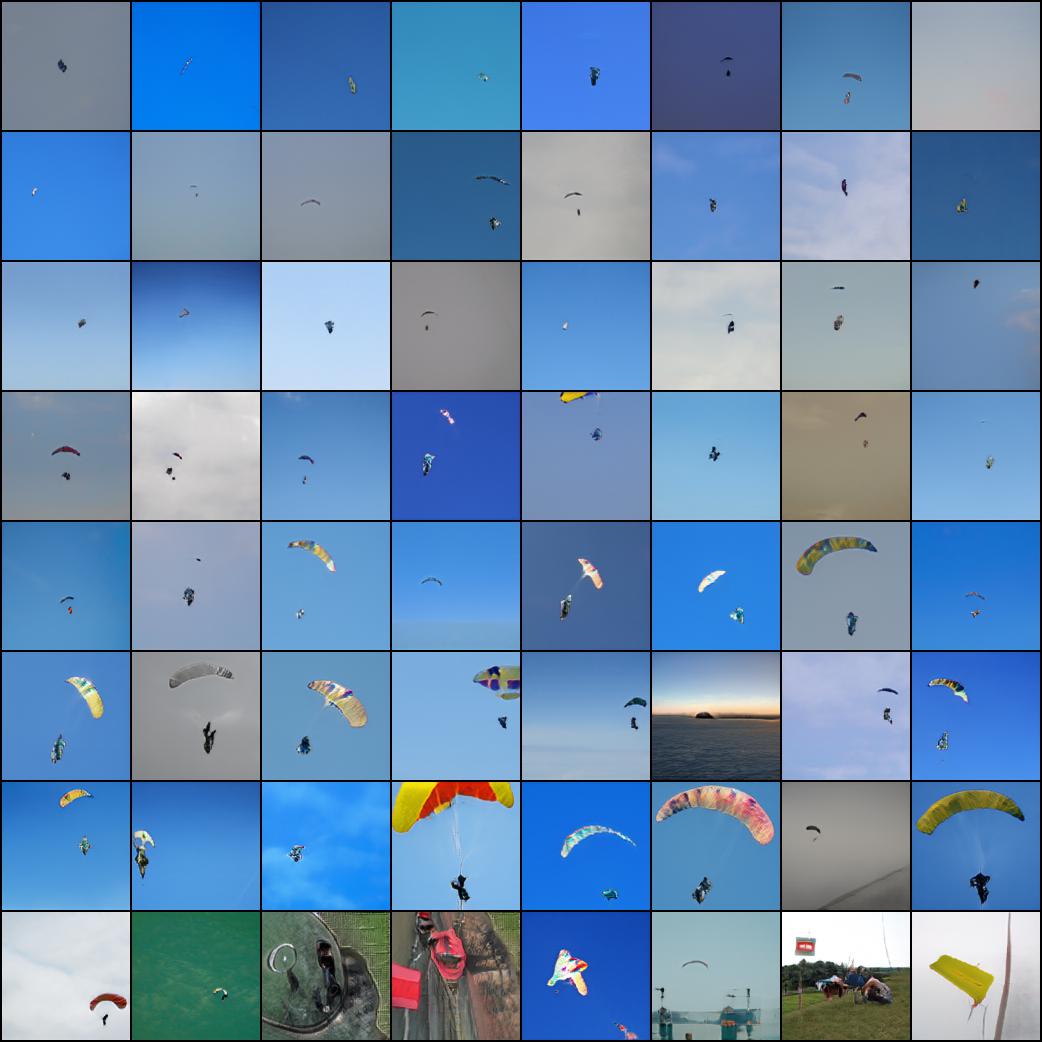}
    }\hfill
    \subfloat[Bubble.  \label{fig:ranked_BG3}]
    {   
        \includegraphics[width=0.25\linewidth]{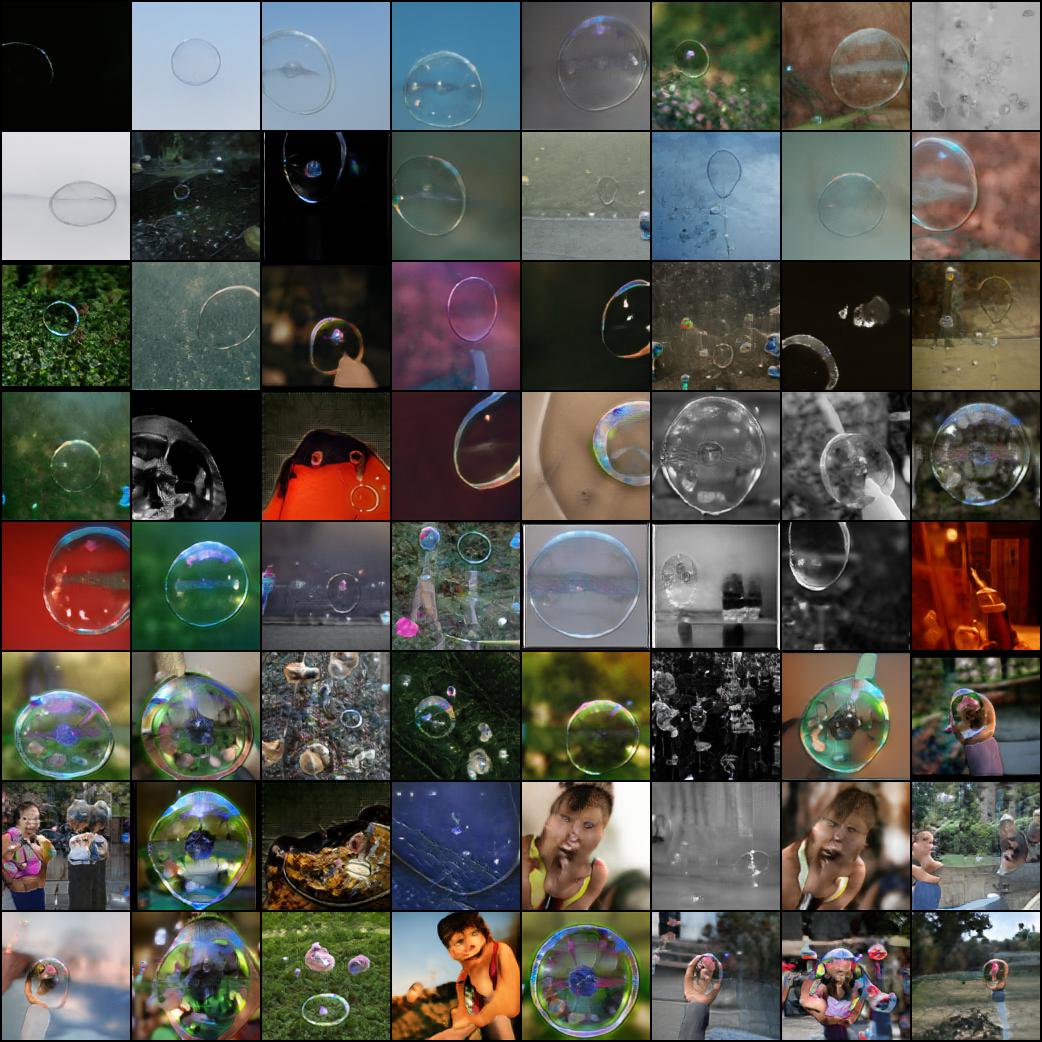}
    }
    \\ 
    \vspace{-0.4cm}
    \caption{On the first row, per-class precision-recall curves comparing \citet{brock2018large}'s truncation trick and our truncation method (JBT), on three ImageNet classes generated by BigGAN. We show better results on complex and disconnected classes (\textit{e.g.} bubble). Reported confidence intervals are 97\% confidence intervals. On the second row, generated samples ordered by their JFN (left to right, top to bottom). We observe a concentration of off-manifold samples for images on the bottom row, confirming the soundness of JBT. 
    \label{fig:trunc_BG}} 
\end{figure*}

The results of the comparison are presented in Table \ref{table:prec_rec_mnist_paper}. In both datasets,  JBT 80 \% outperforms  DeliGAN and DMLGAN in terms of precision while keeping a reasonnable recall. This confirms our claim that over-parameterization techniques are unnecessary. 
As noticed by \citet{kynkaanniemi2019improved}, we also observe that FID does not correlate properly with the Improved PR metric. Based on the Frechet distance, only a distance between multivariate Gaussians, we argue that FID is not suited for disconnected manifold learning as it approximates distributions with unimodal ones and looses many information. 

\subsection{Spurious samples rejections on BigGAN \label{subsection:biggan}}
Thanks to the simplicity of  JBT, we can also apply it on top of any trained generative model. In this subsection, we use JBT to improve the precision of a pre-trained BigGAN model \cite{brock2018large}, which generates class-conditionned ImageNet \cite{deng2009imagenet} samples. The class-conditioning lowers the problem of off-manifold samples, since it reduces the disconnectedness in the output distribution. However, we argue that the issue can still exist on high-dimensional natural images, in particular complex classes can still be multi-modal (\textit{e.g.} the bubble class). The bottom row in Figure \ref{fig:trunc_BG} shows a random set of 128 images for three different classes ranked by their JFN in ascending order (left to right, top to bottom). We observe a clear concentration of spurious samples on the bottom row images. 

To better assess the Jacobian based truncation method, we compare it with the truncation trick from \citet{brock2018large}. This truncation trick aims to reduce the variance of the latent space distribution using truncated Gaussians. While easy and effective, this truncation has some issues: it requires to complexify the loss to enforce orthogonality in weight matrices of the network. Moreover, as explained by \citet{brock2018large} \textit{"only 16\% of models are amenable to truncation, compared to 60\% when trained with Orthogonal Regularization"}. For fairness of comparison, the pre-trained network we use is  optimized for their truncation method. On the opposite, JBT is simpler to apply since 100\% of the tested models were amenable to the proposed truncation. 

Results of this comparison are shown in the upper row of Figure \ref{fig:trunc_BG}. Our method can outperform their truncation trick on difficult classes with high intra-class variation, \textit{e.g.} bubble and house finch. This confirms our claim that JBT can detect outliers within a class. However, one can note that their trick is particularly well suited for simpler unimodal classes, \textit{e.g.} parachute and reaches high precision levels.

\section{Conclusion}
In this paper, we provide  insights on the learning of disconnected manifolds with GANs. Our  analysis shows the existence of an off-manifold area with low precision. We empirically show on several datasets and models that we can detect these areas and remove samples located in between two modes thanks to a newly proposed truncation method. 

\clearpage
\bibliographystyle{icml2020}
\bibliography{adversarial_training}
\appendix
\clearpage
\section{Highlighting drawbacks of the PR metric by \citet{sajjadi2018assessing}} 
\label{appendix:sajjadi_metric}

\begin{lem}
    Assume that the modeled distribution $\mu_\theta$ slightly collapses on a specific data point, i.e. there exists $x \in E, \mu_\theta(x)>0$. Assume also that $\mu_\star$ is a continuous probability measure and that $\mu_\theta$ has a recall $\beta=1$. Then the precision must be such that $\alpha=0$. 
    \begin{proof}
        Using Definition \ref{def:prec_rec_metric}, we have that there exists $\mu$ such that
        \begin{equation*}
            \mu_\star= \alpha \mu + (1 - \alpha) \nu_{\mu_\star} \quad \text{and} \quad \mu_\theta = \mu
    \end{equation*}
    Thus, $0=\mu_\star(x) \geqslant \alpha \mu(x) = \alpha \mu_\theta(x)$. Which implies that $\alpha=0$.
    \end{proof}
\end{lem}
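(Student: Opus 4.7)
My approach is to unfold Definition~\ref{def:prec_rec_metric} directly and then evaluate the resulting mixture decomposition at the singleton $\{x\}$ where $\mu_\theta$ has a positive atom. The argument is purely measure-theoretic, requires no geometric or dimensional input, and reduces to a single scalar identity at a point.

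The first step is to read off what the hypothesis gives. Applying Definition~\ref{def:prec_rec_metric} with $X=\mu_\theta$ and $Y=\mu_\star$, attainability at the stated $(\alpha,\beta)$ produces probability measures $\mu$, $\nu_{\mu_\theta}$, $\nu_{\mu_\star}$ such that $\mu_\star=\beta\,\mu+(1-\beta)\,\nu_{\mu_\star}$ and $\mu_\theta=\alpha\,\mu+(1-\alpha)\,\nu_{\mu_\theta}$. Saturating one of the two parameters collapses the corresponding equation: recall $\beta=1$ forces $\mu_\star=\mu$, and symmetrically precision $\alpha=1$ forces $\mu_\theta=\mu$. In either case the shared ``good'' measure $\mu$ is identified with one of $\mu_\star$ or $\mu_\theta$, and its atomic structure is fixed accordingly.

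The second step is to specialise the remaining equation to the singleton $\{x\}$ and discard the (non-negative) ``noise'' term. This yields a scalar inequality of the form $\mu_\star(\{x\})\geqslant(\text{free parameter})\cdot\mu(\{x\})$. Non-atomicity of $\mu_\star$ gives $\mu_\star(\{x\})=0$; combined with the identification from step one, which propagates the atom of $\mu_\theta$ into $\mu$ so that $\mu(\{x\})\geqslant\mu_\theta(\{x\})>0$, one concludes that the free parameter vanishes. That is the stated conclusion.

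The only thing to be careful about is the bookkeeping: Definition~\ref{def:prec_rec_metric} attaches $\alpha$ to the decomposition of $X$ and $\beta$ to that of $Y$, so one must consistently identify which of $\mu_\theta$ and $\mu_\star$ plays the role of $X$ before deciding which parameter is killed by the atom argument. Beyond this notational care there is no genuine obstacle; the proof is a one-line evaluation of a convex combination of measures at a singleton, together with the defining property that a continuous (non-atomic) probability measure assigns zero mass to every single point.
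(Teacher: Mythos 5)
Your strategy is the same as the paper's: saturating one mixture coefficient identifies the common measure $\mu$ with one of $\mu_\star,\mu_\theta$, and evaluating the other mixture at the singleton $\{x\}$ kills the remaining coefficient. You correctly line up $\alpha$ with the $X=\mu_\theta$ decomposition and $\beta$ with $Y=\mu_\star$, which the paper's own proof does not do (it writes $\alpha$ where Definition~\ref{def:prec_rec_metric} puts $\beta$).

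But look closely at what your Step 2 actually establishes. Your inequality $\mu_\star(\{x\}) \geqslant (\text{free parameter})\cdot\mu(\{x\})$ comes from the $\mu_\star$ decomposition, so the free parameter is $\beta$; and your assertion $\mu(\{x\}) \geqslant \mu_\theta(\{x\}) > 0$ requires $\mu = \mu_\theta$, i.e.\ $\alpha = 1$. So you have proved: \emph{if $\alpha = 1$ then $\beta = 0$}. The lemma as stated asserts the converse, \emph{if $\beta = 1$ then $\alpha = 0$}, and your argument does not reach it. Worse, that converse is false: take $\mu = \mu_\star$ (so $\beta = 1$), $\nu_{\mu_\theta}=\delta_x$, and $\mu_\theta = \tfrac12\mu_\star + \tfrac12\delta_x$. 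Then $\mu_\theta$ has an atom at $x$, $\mu_\star$ is non-atomic, recall is $1$, yet $\alpha = \tfrac12 \neq 0$. If instead one runs the argument under $\beta=1$, one only gets $\mu_\theta(\{x\}) = (1-\alpha)\nu_{\mu_\theta}(\{x\})$, hence merely $\alpha < 1$.

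Your closing remark about ``which parameter is killed by the atom argument'' gestures at exactly this problem, but you should resolve it explicitly rather than leave it as a caveat. The correct statement --- and the one both your argument and the paper's prove, and the one the body of the paper actually asserts --- is: if $\mu_\theta$ has an atom, $\mu_\star$ is non-atomic, and precision $\alpha = 1$, then recall $\beta = 0$. The appendix lemma (and the paper's proof) swap $\alpha$ and $\beta$ throughout; your more careful use of Definition~\ref{def:prec_rec_metric} is precisely what exposes the swap, so say so.
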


\section{Proof of Theorem \ref{th:improved_prec_rec}} \label{appendix:th:comparison_metrics}
The proof of Theorem \ref{th:improved_prec_rec} relies on theoretical results from non-parametric estimation of the supports of probability distribution studied by \citet{devroye1980detection}. 

For the following proofs, we will require the following notation: let $\varphi$ be a strictly monotonous function be such that $\underset{n \to \infty}{\lim} \ \frac{\varphi(n)}{n} = 0$ and $\underset{n \to \infty}{\lim} \ \frac{\varphi(n)}{\log(n)} = \infty$. We note $B(x, r) \subseteq E$, the open ball centered in $x$ and of radius $r$. For a given probability distribution $\mu$, $S_\mu$ refers to its support. We recall that for any $x$ in a dataset $D$, $x_{(k)}$ denotes its $k$ nearest neighbor in $D$. Finally, for a given probability distribution $\mu$ and a dataset $D_\mu$ sampled from $\mu^n$, we note $R_{\text{min}}$ and $R_{\text{max}}$ the following:
\begin{equation}\label{eq:rmin_rmax}
    R_{\text{min}} = \underset{x \in E}{\min} \|x-x_{(\varphi(n))}\|, \ \   R_{\text{max}} = \underset{x \in E}{\max} \|x-x_{(\varphi(n))}\|
\end{equation}

In the following lemma, we show asymptotic behaviours for both $R_{\text{min}}$ and $R_{\text{max}}$.
\begin{lem}\label{lem:auxiliary_lemma}
    Let $\mu$ be a probability distribution associated with a uniformly continuous probability density function $f_\mu$. Assume that there exists constants $a_1>0, a_2>0$ such that for all $x \in E$, we have $a_1 < f_\mu(x) \leqslant a_2$. Then,  
    \begin{align*}
        R_{\text{min}} \underset{n \to \infty}{\longrightarrow} 0 \ \text{a.s.} \quad \text{and} \quad R_{\text{min}}^d \underset{n \to \infty}{\longrightarrow} \infty \ \text{a.s.} \\
        R_{\text{max}} \underset{n \to \infty}{\longrightarrow} 0 \ \text{a.s.} \quad \text{and} \quad R_{\text{max}}^d \underset{n \to \infty}{\longrightarrow} \infty \ \text{a.s.}         
    \end{align*}
    \begin{proof}
        We will only prove that $R_{\text{max}} \underset{n \to \infty}{\longrightarrow} 0 \ \text{a.s.}$ and $\text{and} \quad R_{\text{min}}^d \underset{n \to \infty}{\longrightarrow} \infty \ \text{a.s.}$ as the rest follows. 
        
        The result is based on a nearest neighbor result from \citet[][]{biau2015lectures}. Considering the $\varphi(n)$ nearest neighbor density estimate $f_n^{\varphi(n)}$ based on a finite sample dataset $D_\mu$, Theorem 4.2 states that if $f_\mu$ is uniformly continuous then:
        \begin{equation*}
            \underset{x \in E}{\sup} \ \|f_n^{\varphi(n)}(x) - f_\mu(x) \| \to 0.
        \end{equation*}
        where $f_n^{\varphi(n)}(x) = \frac{\varphi(n)}{n V_d \|x-x_{\varphi(n)}\|^d}$ with $V_d$ being the volume of the unit ball in $\mathds{R}^d$.
        
        Let $\varepsilon>0$ such that $\varepsilon<a_1/2$. There exists $N \in \mathds{N}$ such that for all $n \geqslant N$, we have, almost surely, for all $x \in E$:
        \begin{align*}
            a_1 - \varepsilon \leqslant f_n^{\varphi(n)}(x) \leqslant a_2 + \varepsilon \\
            a_1 - \varepsilon \leqslant \frac{\varphi(n)}{n V_d \|x-x_{\varphi(n)}\|^d} \leqslant a_2 + \varepsilon
        \end{align*}
        
        Consequently, for all $n \geqslant N$, for all $x \in E$ almost surely:
        \begin{align*}
            \|x-x_{\varphi(n)}\| \leqslant \Big( \frac{\varphi(n)}{n V_d (a_1-\varepsilon)}\Big)^{1/d} \\
            \text{Thus  ,} \underset{x \in E}{\sup} \ \|x-x_{\varphi(n)}\| \to 0 \quad \text{a.s.}
        \end{align*}
        
        Also, almost surely
        \begin{align*}
            n \|x-x_{\varphi(n)}\|^d \geqslant \frac{\varphi(n)}{V_d (a_2+\varepsilon)} \\
            \text{Thus,} \quad \underset{x \in E}{\inf} \ \|x-x_{\varphi(n)}\| \to \infty \quad \text{a.s.}
        \end{align*}
    \end{proof}
\end{lem}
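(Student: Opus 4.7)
My plan is to lean entirely on a uniform consistency theorem for the $\varphi(n)$-nearest neighbour density estimator
\[
\hat f_n(x) \;=\; \frac{\varphi(n)}{n\,V_d\,\|x - x_{(\varphi(n))}\|^d},
\]
which, under uniform continuity of $f_\mu$ together with $\varphi(n)/\log n \to \infty$ and $\varphi(n)/n \to 0$, satisfies $\sup_{x\in E}|\hat f_n(x)-f_\mu(x)|\to 0$ almost surely (this is classical, see e.g.\ Biau and Devroye's monograph on the nearest-neighbour method). Everything else is a sandwich argument driven by the two-sided bound $a_1 < f_\mu \leq a_2$.

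Concretely, fix any $\varepsilon\in(0,a_1/2)$. The uniform convergence gives, almost surely for $n$ large enough, $a_1-\varepsilon \le \hat f_n(x)\le a_2+\varepsilon$ simultaneously for every $x\in E$. Rewriting this sandwich in terms of the nearest-neighbour distance yields
\[
\frac{\varphi(n)}{n\,V_d\,(a_2+\varepsilon)} \;\le\; \|x - x_{(\varphi(n))}\|^d \;\le\; \frac{\varphi(n)}{n\,V_d\,(a_1-\varepsilon)}
\]
uniformly in $x$. Taking the supremum over $x$ and using $\varphi(n)/n\to 0$ shows $R_{\max}^d\to 0$, hence $R_{\max}\to 0$ almost surely, which in turn controls $R_{\min}\le R_{\max}$. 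For the ``explosion'' statement (which only makes sense when read as $n R_{\min}^d \to\infty$ and $n R_{\max}^d \to \infty$, as in the subsequent calculation), multiply the lower bound by $n$ to get $n\|x - x_{(\varphi(n))}\|^d \ge \varphi(n)/(V_d(a_2+\varepsilon))\to \infty$ since $\varphi(n)\to\infty$ is forced by $\varphi(n)/\log n\to\infty$. Taking the infimum over $x$ then yields $nR_{\min}^d\to\infty$ almost surely, and the analogous bound for $R_{\max}$ is immediate.

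The main obstacle is pinning down the uniform consistency of the $k$-NN density estimator in precisely the form needed, since the standard statements are typically phrased over compact sets and require the lower bound $a_1>0$ on the density to prevent the denominator $\|x - x_{(\varphi(n))}\|^d$ from being arbitrarily small near the boundary of $E$. The lower bound $f_\mu \ge a_1$ is what makes the support behave like a compact ``filled'' region for the estimator and allows the sup-norm consistency to kick in uniformly. Once that citation is in place, the rest reduces to the elementary sandwich, so the proof is short. I would present only the argument for $R_{\max}$ explicitly and remark that $R_{\min}\le R_{\max}$ handles the other bound without extra work.
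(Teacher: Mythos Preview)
Your proposal is correct and follows essentially the same route as the paper: invoke the uniform consistency of the $\varphi(n)$-NN density estimator from Biau--Devroye, sandwich $\hat f_n$ between $a_1-\varepsilon$ and $a_2+\varepsilon$, and read off the two-sided control on $\|x-x_{(\varphi(n))}\|^d$. You also correctly observe that the divergence claim only makes sense as $nR_{\min}^d\to\infty$ (the paper's own proof in fact derives $n\|x-x_{(\varphi(n))}\|^d \ge \varphi(n)/(V_d(a_2+\varepsilon))$ and the stated conclusion $R_{\min}^d\to\infty$ is a typo for the $n$-scaled version).
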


\begin{lem}\label{lem:lemma_appendix}
    Let $\mu,\nu$ be two probability distributions associated with uniformly continuous probability density functions $f_\mu$ and $f_\nu$. Assume that there exists constants $a_1>0, a_2>0$ such that for all $x \in E$, we have $a_1 < f_\mu(x) \leqslant a_2 $ and $a_1 < f_\nu \leqslant a_2$. Also, let $D_\mu, D_\nu$ be datasets sampled from $\nu^n, \mu^n$. If $\mu$ is an estimator for $\nu$, then 
    \begin{align*}
        (i) \text{  for all } x \in D_\mu, \ \alpha_{\varphi(n)}^n(x) &\underset{n \to \infty}{\rightarrow} \mathds{1}_{\emph{supp}(\nu)}(x) \quad \text{in proba.}\\ 
        (ii) \text{  for all } y \in D_\nu, \ \beta_{\varphi(n)}^n(y) &\underset{n \to \infty}{\rightarrow} \mathds{1}_{\emph{supp}(\mu)}(x) \quad \text{in proba.}
    \end{align*}
    
    \begin{proof}
    We will only show the result for $(i)$, since a similar proof holds for $(ii)$. 
    
    Thus, we want to show that 
    \begin{equation*}
        \text{  for all } x \in D_\mu, \ \alpha_{\varphi(n)}^n(x) \underset{n \to \infty}{\rightarrow} \mathds{1}_{\emph{supp}(\nu)}(x) \quad \text{a. s.}
    \end{equation*}
    
    First, let's assume that $x \notin S_\nu$. \citet[][Lemma  2.2]{biau2015lectures} have shown that 
    \begin{equation*}
        \underset{n \to \infty}{\lim} \|x_{(\varphi(n))}-x\| = \inf \{\|x-y\| \mid y \in S_\nu \} \quad \text{a.s.}
    \end{equation*}
    As $S_\nu$ is a closed set - e.g. \cite{kallenberg2006foundations} - we have 
    \begin{equation*}
        \underset{n \to \infty}{\lim} \ \|x-x_{(\varphi(n))}\| > 0 \quad \text{a.s.}
    \end{equation*}
    and 
    \begin{equation*}
        \text{for all } y \in D_\nu, \underset{{n} \to \infty}{\lim} \ \|y-y_{({\varphi(n)})}\| = 0 \quad \text{a.s.}
    \end{equation*}
    Thus, $\underset{n \to \infty}{\lim} \alpha_{\varphi(n)}^n(x) = 0 \quad \text{a.s.}$.    
    
    Now, let's assume that $x \in S_\nu$. Using Definition \ref{def:improved_prec_rec}, the precision of a given data point $x$ can be rewritten as follows:
    \begin{equation*}
        \alpha_{\varphi(n)}^n(x) = 1 \iff \exists y \in D_\nu , x \in B(y, \|y-y_{(\varphi(n))}\|) 
    \end{equation*}
    Using notation from \eqref{eq:rmin_rmax}, we note
    \begin{equation*}
        R_{\text{min}} = \underset{y \in }{\min} \|y-y_{(\varphi(n))}\|, \ \   R_{\text{max}} = \underset{y \in E}{\max} \|y-y_{(\varphi(n))}\|.
    \end{equation*}
    It is clear that :
    \begin{equation}\label{eq:bounding_s_mu_n}
         \bigcup_{y \in D_\nu} B(y, R_{\text{min}}) \subseteq S_\nu^n \subseteq \bigcup_{y \in D_\nu} B(y, R_{\text{max}}),
    \end{equation}
    where $S_\nu^n = \bigcup_{y \in D_\nu} B(y, \|y-y_{(\varphi(n))}\|))$. 
    
    Besides, combining Lemma \ref{lem:auxiliary_lemma} with \citet[][Theorem 1]{devroye1980detection}, we have that:
    \begin{align*}
        \nu(S_\nu \Delta \bigcup_{y \in D_\nu} B(y, R_{\text{min}})) \underset{n \to 0}{\longrightarrow} 0 \quad \text{in proba.} \\
        \nu(S_\nu \Delta \bigcup_{y \in D_\nu} B(y, R_{\text{max}})) \underset{n \to 0}{\longrightarrow} 0 \quad \text{in proba.} \\
    \end{align*}
    where $\Delta$ here refers to the symmetric difference.
    
    Thus, using \eqref{eq:bounding_s_mu_n}, it is now clear that, $\mu(S_\nu \Delta S_\nu^n) \to 0$ in probability. Finally, given $x \in S_\mu$, we have $\mu(x \in S_\nu^n) = \nu(\alpha_{\varphi(n)}^n(x) = 1) \to 1 $ in probability. 
    \end{proof}
\end{lem}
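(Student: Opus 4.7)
The plan is to split on whether $x \in S_\nu$ and treat each case by comparing nearest-neighbour distance scales, leveraging the auxiliary Lemma~\ref{lem:auxiliary_lemma} together with a standard tail bound on the $1$-nearest-neighbour distance.

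\textbf{Case 1: $x \notin S_\nu$.} Since the support of any probability measure is closed, $\delta := \dist(x, S_\nu) > 0$. As $D_\nu \subseteq S_\nu$ almost surely, every $y \in D_\nu$ satisfies $\|x - y\| \geqslant \delta$. On the other hand, Lemma~\ref{lem:auxiliary_lemma} applied to $\nu$ gives $R_{\text{max}} \to 0$ a.s., hence $\|y - y_{(\varphi(n))}\| < \delta \leqslant \|x - y\|$ for every $y \in D_\nu$ once $n$ is large. Thus no ball $B(y, \|y - y_{(\varphi(n))}\|)$ can contain $x$, and $\alpha_{\varphi(n)}^n(x) = 0$ almost surely for $n$ large, which implies convergence in probability to $\mathds{1}_{\text{supp}(\nu)}(x) = 0$.

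\textbf{Case 2: $x \in S_\nu$.} Let $y^\star$ denote the nearest neighbour of $x$ in $D_\nu$; it is enough to show that $\|x - y^\star\| \leqslant \|y^\star - y^\star_{(\varphi(n))}\|$ with probability $\to 1$. For the left-hand side, the lower bound $f_\nu \geqslant a_1$ on $E$ yields $\nu(B(x,r)) \geqslant a_1 V_d r^d$ for small $r$ (with $V_d$ the volume of the unit ball in $\mathds{R}^d$), so
\[
\mathbb{P}\bigl(\|x - y^\star\| > r_n\bigr) = \bigl(1 - \nu(B(x, r_n))\bigr)^n \leqslant \exp\bigl(-n a_1 V_d r_n^d\bigr).
\]
Choosing $r_n^d = 2\log(n)/(a_1 V_d n)$ makes the right-hand side $\leqslant n^{-2}$, so $\|x - y^\star\|^d = O(\log n / n)$ with overwhelming probability. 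For the right-hand side, the proof of Lemma~\ref{lem:auxiliary_lemma}, via uniform consistency of the $\varphi(n)$-NN density estimator together with $f_\nu \leqslant a_2$, actually delivers the almost-sure lower bound
\[
\inf_{z \in E} \|z - z_{(\varphi(n))}\|^d \geqslant \frac{\varphi(n)}{n V_d (a_2 + \varepsilon)}
\]
for $n$ large. Since $D_\nu \subseteq E$, this applies in particular to $y^\star$. The assumption $\varphi(n)/\log(n) \to \infty$ makes this lower bound eventually strictly larger than the $O(\log n / n)$ upper bound on $\|x - y^\star\|^d$, so $\alpha_{\varphi(n)}^n(x) = 1$ with probability tending to one.

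Part $(ii)$ then follows by exchanging the roles of $\mu$ and $\nu$, since $\beta_k^n(y)$ is defined symmetrically to $\alpha_k^n(x)$ and the assumptions of the lemma are symmetric in the two distributions.

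The main obstacle is the rate comparison in Case~2: both $\|x - y^\star\|$ and $\|y^\star - y^\star_{(\varphi(n))}\|$ go to zero, and the argument hinges on exploiting the regime $\log n \ll \varphi(n) \ll n$ together with the two-sided density sandwich $a_1 \leqslant f_\nu \leqslant a_2$. A minor technical point is that the uniform lower bound from Lemma~\ref{lem:auxiliary_lemma} is stated over $E$ rather than $D_\nu$, but since infima over subsets are larger, the estimate automatically transfers to $y^\star \in D_\nu \subseteq E$.
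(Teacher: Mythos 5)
Your Case~1 is essentially the paper's, streamlined: instead of invoking the convergence of $\|x-x_{(\varphi(n))}\|$ to $\dist(x,S_\nu)$, you observe directly that every $y\in D_\nu$ lies in $S_\nu$, so $\|x-y\|\ge\delta>0$ while the radii $\|y-y_{(\varphi(n))}\|\le R_{\max}\to 0$; same conclusion, slightly fewer moving parts.

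Case~2 is where you genuinely diverge. The paper passes through the Devroye--Wise support estimator: it sandwiches $S_\nu^n=\bigcup_{y\in D_\nu}B(y,\|y-y_{(\varphi(n))}\|)$ between two unions of fixed-radius balls, invokes Devroye--Wise Theorem~1 to get $\nu(S_\nu\,\Delta\,S_\nu^n)\to 0$ in probability, and then converts that measure-level statement into the pointwise claim $\Pr(x\in S_\nu^n)\to 1$. That last conversion is terse in the paper and is really only an ``a.e.\ $x$'' statement without further argument. Your route is a direct two-rate comparison: the $1$-NN distance $\|x-y^\star\|$ from $x$ to $D_\nu$ is $O((\log n/n)^{1/d})$ with probability $\ge 1-n^{-2}$ by the density lower bound $a_1$, while the $\varphi(n)$-NN radius at $y^\star$ is bounded below by $R_{\min}\gtrsim(\varphi(n)/n)^{1/d}$ almost surely by Lemma~\ref{lem:auxiliary_lemma} and the density upper bound $a_2$; the assumption $\varphi(n)/\log n\to\infty$ forces the lower bound to overtake the upper bound, so the ball around the nearest sample eventually captures $x$. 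This is more elementary (no Devroye--Wise), delivers the pointwise probability statement directly rather than via a measure-convergence intermediary, and makes explicit exactly where the regime $\log n\ll\varphi(n)\ll n$ is used. The one implicit technicality is that $\nu(B(x,r))\ge a_1 V_d r^d$ takes $B(x,r)\subseteq E$, i.e.\ $x$ in the interior of the support; for boundary $x$ the constant degrades but the asymptotic comparison survives. The paper has the same interior-regularity issue hidden inside the Biau and Devroye--Wise citations, so this is not a gap specific to your argument.
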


We can now finish the proof for Theorem \ref{th:improved_prec_rec}. Recall that $\bar{\alpha} = \mu \big(S_\nu\big)$ and similarly, $\bar{\beta} = \nu\big(S_\mu\big)$.
\begin{proof}
    We have that
    \begin{equation*}
        |\alpha_{\varphi(n)}^n - \bar{\alpha}| = |\frac{1}{n} \sum_{x_i \in D_\mu} \alpha_{\varphi(n)}^n(x_i) - \int_E \mathds{1}_{x \in S_\nu} \mu({\rm d} x)| \nonumber \\
    \end{equation*}
    Then,
    \begin{align}
        |\alpha_{\varphi(n)}^n - \bar{\alpha}| &= |\frac{1}{n} \sum_{x_i \in D_\mu} (\alpha_{\varphi(n)}^n(x_i) - \mathds{1}_{x_i \in S_\nu}) \nonumber \\
        \quad + \big(\frac{1}{n} & \sum_{x_i \in D_\mu}  \mathds{1}_{x_i \in S_\nu} - \int_E \mathds{1}_{x \in S_\nu} \mu({\rm d}x) \big)| \nonumber \\
        &=  |\mathds{E}_{x_i \sim \mu_n} (\alpha_{\varphi(n)}^n(x_i) - \mathds{1}_{x_i \in S_\nu}) \label{eq:kolmogorov} \\
        & \quad + \big(\mathds{E}_{\mu_n} \mathds{1}_{S_\nu} - \mathds{E}_{\mu} \mathds{1}_{S_\nu} \big) \label{eq:vadarajan} |
    \end{align}
    where $\mu_n$ is the empirical distribution of $\mu$. As $\mu_n$ converges weakly to $\mu$ almost surely (e.g. \citet[][Theorem 11.4.1]{dudley_2002}) and since $\mathds{1}_{x \in  S_\nu}$ is bounded, we can bound \eqref{eq:vadarajan} as follows:
    \begin{equation*}
        \underset{n \to \infty}{\lim} \quad \mathds{E}_{x \sim \mu_n} \mathds{1}_{x \in \text{supp}(\mu)} - \mathds{E}_{x \sim \mu} \mathds{1}_{x \in \text{supp}(\mu)}  = 0 \quad \text{a. s.}
    \end{equation*}
    Now, to bound \eqref{eq:kolmogorov}, we use the fact that for any $x \in D_\mu$, the random variable $\alpha_{\varphi(n)}^n(x)$ converges to $\mathds{1}_{x \in S_\nu}$ in probability (Lemma \ref{lem:lemma_appendix}) and that for all $x \in D_\mu$, both $\alpha_{\varphi(n)}^n(x) \leqslant 1$ and $\mathds{1}_{x \in S_\nu} \leqslant 1$. Consequently, using results from the weak law for triangular arrays, we have that 
    \begin{equation*}
        \underset{n \to \infty}{\lim} \quad \frac{1}{n} \sum_{x_i \in D_\mu} (\alpha_{\varphi(n)}^n(x_i) - \mathds{1}_{x_i \in S_\nu}) = 0 \quad \text{in proba.}
    \end{equation*}
    Finally, 
    \begin{equation*}
        |\alpha_{\varphi(n)}^n - \bar{\alpha}| \underset{n \to \infty}{\to} 0 \quad \text{in proba.},
    \end{equation*}
    which proves the result. The same proof works for $\underset{k \to \infty}{\lim} \ \beta_k^n = \bar{\beta}$.
\end{proof}

\section{Proof of Theorem \ref{th:no_free_lunch}} \label{appendix:proof_th1}

This proof is based on the Gaussian isoperimetric inequality historically shown by \cite{borell1975brunn, sudakov1978extremal}.

\begin{proof}
    Let $\mu_\star$ be a distribution defined on $E$ laying on two disconnected manifolds $M_1$ and $M_2$ such that $\mu_\star(M_1) = \mu_\star(M_2) = \frac{1}{2}$ and $d(M_1, M_2) = D$. Note that for any subsets $A \subseteq E$ and $B \subseteq E$, $d(A,B) := \underset{(x,y) \in A \times B}{\inf} \|x-y\|$.
    
    Let $G_\theta^{-1}(M_1)$ (respectively $G_\theta^{-1}(M_2)$ be the subset in $\mathds{R}^d$ be the pre-images of $M_1$ (respectively $M_2$). 
    
    Consequently, we have for all $k \in [1,n]$
    \begin{equation*}
        \gamma(G_\theta^{-1}(M_1)) = \mu_\theta(M_1) = \gamma(G_\theta^{-1}(M_2)) \geqslant \frac{\bar{\alpha}}{2}
    \end{equation*}
    
    We consider $(G_\theta^{-1}(M_1))^\varepsilon$ (respectively $(G_\theta^{-1}(M_2))^\varepsilon$) the $\varepsilon$ enlargement of $G_\theta^{-1}(M_1)$ (respectively $G_\theta^{-1}(M_2)$ where $\varepsilon = \frac{D}{2L}$. We know that $(G_\theta^{-1}(M_1))^\varepsilon \bigcap (G_\theta^{-1}(M_2))^\varepsilon = \emptyset$. 
    
    Thus, we have that: 
    \begin{equation*}
        \gamma \big((G_\theta^{-1}(M_1))^\varepsilon \big) + \gamma \big((G_\theta^{-1}(M_2))^\varepsilon \big) \leqslant 1
    \end{equation*}
    Besides, by denoting $\Phi$ the function defined for any $t \in \mathds{R}$ by $\Phi(t) = \int_{-\infty}^{t} \frac{\exp(-t^2/2)}{\sqrt{2\pi}} ds$, we have
    
\begin{align*}
    &\gamma \big((G_\theta^{-1}(M_1))^\varepsilon \big) + \gamma \big((G_\theta^{-1}(M_2))^\varepsilon \big) \geqslant 2 \Phi \big(\Phi^{-1}(\frac{\alpha}{2}) + \varepsilon \big) \\
    & \quad \text{(using Theorem 1.3 from \cite{ledoux1996isoperimetry})}\\
    & \quad \geqslant \alpha + \frac{2\varepsilon}{\sqrt{2\pi}} e^{-\Phi^{-1}(\frac{\alpha}{2})^2/2} \\
    &\text{(since $\Phi^{-1}(\frac{\alpha}{2})+\varepsilon<0$ and $\Phi$ convex on $]-\infty, 0]$)}
\end{align*}

Thus, we have that 
\begin{equation*}
    \alpha + \frac{2\varepsilon}{\sqrt{2\pi}} e^{-\Phi^{-1}(\frac{\alpha}{2})^2/2} \leqslant 1
\end{equation*}

Thus, by noting 
\begin{equation*}
    \alpha^\star = \sup \{\alpha \in [0,1] \mid \alpha + \frac{2\varepsilon}{\sqrt{2\pi}} e^{\frac{-\Phi^{-1}(\frac{\alpha}{2})^2}{2}} \leqslant 1 \},
\end{equation*}
we have our result.

For $\alpha \geqslant 3/4$. By noting $\alpha = 1-x$, we have
\begin{align*}
    \Phi^{-1}(\frac{\alpha}{2}) &= \frac{\sqrt{2 \pi} x}{2} + O(x^3)\\
    \text{And,} \ e^{\frac{-\Phi^{-1}(\frac{\alpha}{2})^2}{2}} &= e^{\frac{-\pi x^2}{4}} + O(e^{-x^4}) \\
    \text{Thus,} \ 1-x+\frac{2\varepsilon}{\sqrt{2\pi}}e^{\frac{-\pi x^2}{4}} &+ O(e^{-x^4}) \leqslant 1 \\
    \iff x &\geqslant \frac{2\varepsilon}{\sqrt{2\pi}} e^{\frac{-\pi x^2}{4}} + O(e^{-x^4}) \\
    \implies x &\geqslant \sqrt{\frac{2}{\pi}} W(\epsilon^2)
\end{align*}
where $W$ is the product log function. Thus, $\alpha \leqslant 1 - \sqrt{\frac{2}{\pi}} W(\epsilon^2)$.
\end{proof}
As an example, in the case where $\varepsilon=1$, we have that $W(1) \approx 0.5671 $, $x>0.4525$ and $\alpha < 0.5475 $.

\section{Proof of Theorem \ref{th:extended_no_free_lunch}}\label{appendix:proof_theorem3}
\subsection{Equitable setting}
This result is a consequence of Prop.  \ref{appendix:thm_main_result} that we will assume true in this section. 

We consider that the unknown true distribution $\mu_\star$ lays on $M$ disjoint manifolds of equal measure. As specified in Section \ref{section:approach}, the latent distribution $\gamma$ is a multivariate Gaussian defined on $\mathds{R}^d$. For each $k \in [1,M]$, we consider in the latent space, the pre-images $A_k$. 

It is clear that $A_1,\ldots,A_M$ are pairwise disjoint Borel subsets of $\mathds{R}^d$. We denote $\bar{M}$, the number of classes covered by the estimator $\mu_\theta$, such that for all $i \in [1, \bar{M}]$, we have $\gamma(A_i) >0$. We know that $\bar{M} \geqslant M \bar{\beta} > 1$.

For each $i \in [1,\bar{M}]$, we denote $A_i^\varepsilon$, the $\varepsilon$-enlargement of $A_i$. For any pair $(i,j)$ it is clear that $A_i^\varepsilon \bigcap A_j^\varepsilon = 0$ where $\varepsilon = \frac{D}{2L}$ ($D$ being the minimum distance between two sub-manifolds and $L$ being the Lipschitz constant of the generator). 

As assumed, we know that $A_i^\varepsilon, i \in [1, \bar{M}]$ partition the latent space in equal measure, consequently, we assume that 
\begin{equation}\label{appendix:assumption_equal_partition_Gaussian_space}
    \sum_{i=1}^n \gamma(A_i^\varepsilon) = 1 \quad \text{and} \quad \gamma(A_1) = \hdots = \gamma(A_{\bar{M}}) = 1/\bar{M}
\end{equation}

Thus, we have that
\begin{equation*}
    \bar{\alpha} = \sum_{i=1}^{\bar{M}} \gamma(A_i^\varepsilon) = 1 - \gamma(\Delta^{-\varepsilon} (A_1^\varepsilon,\ldots,A_{\bar{M}}^\varepsilon))
\end{equation*}

Using Proposition \ref{appendix:thm_main_result}, we have 
\begin{align*}
    \gamma(\Delta^{-\varepsilon}(A_1^\varepsilon, \ldots,A_{n}^\varepsilon)) &\geqslant 1-\frac{1+x^2}{x^2}e^{-\frac{1}{2}\varepsilon^2}e^{-\varepsilon x} \\
    \text{Thus, } \bar{\alpha} &\leqslant \frac{1+y^2}{y^2}e^{-\frac{1}{2}\varepsilon^2}e^{-\varepsilon y}
\end{align*}
where $y= \Phi^{-1}\left(1-\max_{k \in [\!\bar{M}\!]} \gamma(A_k^\varepsilon) \right) = \Phi^{-1}(\frac{\bar{M}-1}{\bar{M}})$ and $\Phi(t) = \int_{-\infty}^{t} \frac{\exp(-t^2/2)}{\sqrt{2\pi}} ds$.

Knowing that $\bar{M} \geqslant \bar{\beta}M$ we have that
\begin{equation*}
    \Phi^{-1}(1-\frac{1}{\bar{M}}) \geqslant \Phi^{-1} (1-\frac{1}{\bar{\beta}M})    
\end{equation*}

We conclude by saying that the function $x \mapsto \frac{1 + x^2}{x^2}e^{-\varepsilon x}$ is decreasing for $x > 0$. Thus,
\begin{equation}\label{appendix:final_result_theorem3}
    \bar{\alpha} \leqslant \frac{1+y^2}{y^2}e^{-\frac{1}{2}\varepsilon^2}e^{-\varepsilon y}
\end{equation}
where $y = \Phi^{-1} (1-\frac{1}{\bar{\beta}M})$ and $\Phi(t) = \int_{-\infty}^{t} \frac{\exp(-t^2/2)}{\sqrt{2\pi}} ds$.

For further analysis, when $\bar{M} \to \infty$, refer to subsection \ref{appendix:main_result_equitable_scenario} and note using the result in \eqref{appendix:eq:explicit} that one obtains the desired upper-bound on $\bar{\alpha}$
\begin{equation*}
    \bar{\alpha} \overset{\bar{M} \rightarrow \infty}{\leqslant} e^{-\frac{1}{2}\varepsilon^2}e^{-\varepsilon\sqrt{2\log(\bar{M}})}
\end{equation*}
 
\subsection{More general setting}\label{appendix:theorem3_more_general_setting}
As done previously, we denote $\bar{M}$, the number of classes covered by the estimator $\mu_\theta$, such that for all $i \in [1, \bar{M}]$, we have $\gamma(A_i) >0$. We still assume that $\bar{M}>1$. However, we now relax the previous assumption made in \eqref{appendix:assumption_equal_partition_Gaussian_space} and  assume the milder assumption that there exists $w_1, \hdots, w_M \in [0,1]^M$ such that for all $m \in [1,M], \gamma(A_m^\varepsilon) = w_m$, $\sum_m w_m \leqslant 1$ and $\underset{i \in [1, M]}{\max} \ w_m = w^\text{max}<1$. 

Consider, $A^\complement = \Big(\bigcup_{i=1}^{\bar{M}} A_i^\varepsilon \Big)^\complement$ and denote $w^c = \gamma(A^\complement)\leqslant 1-\bar{\alpha}$. Consequently, we have
\begin{align*}
    &\sum_{i=1}^n \gamma(A_i^\varepsilon) + \gamma(A^\complement) = 1 \\
    &\gamma(\Delta^{-\varepsilon}(A_1^\varepsilon,\ldots,A_{M}^\varepsilon, A^\complement)) + \sum_{i=1}^{M} \gamma(A_i^\varepsilon)
    = 1 - \gamma(A^\complement) \\ 
    &\bar{\alpha} = 1 - w^\complement - \gamma(\Delta^{-\varepsilon} (A_1^\varepsilon,\ldots,A_{M}^\varepsilon, A^\complement))
\end{align*}

In this setting, it is clear that $A_1, \hdots, A_{\bar{M}}, A^\complement$ is a a partition of $\mathds{R}^d$ under the measure $\gamma$. Using, result from Proposition \ref{appendix:thm_main_result}, we have 
\begin{equation*}
    \gamma(\Delta^{-\varepsilon} (A_1^\varepsilon,\ldots,A_{M}^\varepsilon, A^\complement)) \geqslant 1-\frac{1+x^2}{x^2}e^{-\frac{1}{2}\varepsilon^2}e^{-\varepsilon x}
\end{equation*}
where $x= \Phi^{-1}\left(1-\max(w^\complement, w^\text{max}) \right)$ and $\Phi(t) = \int_{-\infty}^{t} \frac{\exp(-t^2/2)}{\sqrt{2\pi}} ds$. 

Finally, we have that 
\begin{equation}\label{appendix:second_case_general}
    \bar{\alpha} \leqslant \frac{1+x^2}{x^2}e^{-\frac{1}{2}\varepsilon^2}e^{-\varepsilon x} -w^\complement
\end{equation}
In the case where $\gamma(A^\complement) = 0$, we find a result similar to \eqref{appendix:final_result_theorem3}.

\section{Lower-bounding boundaries of partitions in a Gaussian space} \label{appendix:proof_main_result}
\paragraph{Notations and preliminaries}
Given $\varepsilon \ge 0$ and a subset $A$ of euclidean space $\mathbb R^d=(\mathbb R^d,\|\cdot-\cdot\|)$, let $A^\varepsilon := \{z \in \mathds R^d \mid \dist(z,A) \le \varepsilon\}$ be its $\varepsilon$-enlargement,
where $\dist(z,A) := \inf_{z' \in A}\|z'-z\|_2$ is the distance of the point $z \in \mathbb R^d$ from $A$. 
Let $\gamma$ be the standard Gaussian distribution in $\mathbb R^d$ and let $A_1,\ldots,A_K$ be $K \ge 2$ pairwise disjoint Borel subsets of $\mathbb R^d$ whose union has unit (i.e full) Gaussian measure $\sum_{k=1}^K w_k = 1$, where $w_k := \gamma(A_k)$. Such a collection $\{A_1,\ldots,A_K\}$ will be called an $(w_1,\ldots,w_K)$-partition of standard $d$-dimensional Gaussian space $(\mathbb R^d,\gamma)$.

For each $k \in [\![K]\!]$, define the compliment $A_{-k} := \cup_{k' \ne k}A_{k'}$, and let $\partial^{-\varepsilon}A_k := \{z \in A_k \mid \dist(z,A_{-k}) \le \varepsilon\}$ be the \emph{inner $\varepsilon$-boundary} of $A_k$, i.e the points of $A_k$ which are within distance $\varepsilon$ of some other $A_{k'}$.
For every $(k,k') \in [\![K]\!]^2$ with $k' \ne k$, it is an easy exercise to show that 
\begin{align}\label{appendix:eq:unions}
    \partial^{-\varepsilon}A_k \cap \partial^{-\varepsilon} A_{k'} &= \emptyset \\
    \partial^{-\varepsilon}A_k \cap A_{-k} &= \emptyset \nonumber \\
    A_{-k}^\varepsilon = \partial^{-\varepsilon} A_k &\cup A_{-k} \nonumber
\end{align}
Now, let $\Delta^{-\varepsilon}(A_1,\ldots,A_K) := \cup_{k=1}^K \partial^{-\varepsilon}A_k$ be the union of all the inner $\varepsilon$-boundaries. This is $\Delta^{-\varepsilon}(A_1,\ldots,A_K)$ the set of points of $\cup_{k=1}^KA_k$ which are on the boundary between some two distinct $A_k$ and $A_{k'}$. We want to find a lower bound in the measure $\gamma(\Delta^{-\varepsilon}(A_1,\ldots,A_K))$.

\begin{proposition}\label{appendix:thm_main_result}
Given $K \ge 4$ and  $w_1,\ldots,w_K \in (0, 1/4]$ such that $\sum_{k=1}^K w_k=1$, we have the bound:
\begin{align*}
\inf_{A_1,\ldots,A_K}\gamma(\Delta^{-\varepsilon}(A_1,\ldots,A_K)) &\ge 1-\frac{1+x^2}{x^2}e^{-\frac{1}{2}\varepsilon^2}e^{-\varepsilon x} 
\end{align*}
where the infinimum is taken over all $(w_1,\ldots,w_k)$-partitions of standard Gaussian space $(\mathbb R^d,\gamma)$, and $ x :=\Phi^{-1}\left(1-\max_{k \in [\![M]\!]} w_k\right)$. 
\end{proposition}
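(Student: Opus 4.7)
The plan is to transform the lower bound on $\gamma(\Delta^{-\varepsilon}(A_1,\ldots,A_K))$ into an upper bound on one-dimensional Gaussian tails, which I can then handle cell-by-cell with Mill's-ratio estimates. The starting observation is the disjoint decomposition $A_{-k}^\varepsilon = \partial^{-\varepsilon} A_k \cup A_{-k}$ from (\ref{appendix:eq:unions}), which gives $\gamma(\partial^{-\varepsilon} A_k) = \gamma(A_{-k}^\varepsilon) - (1-w_k)$. Since the inner boundaries are pairwise disjoint (also by (\ref{appendix:eq:unions})) and $\sum_k (1-w_k) = K-1$, summing over $k$ yields
\begin{equation*}
\gamma(\Delta^{-\varepsilon}(A_1,\ldots,A_K)) = \sum_{k=1}^K \gamma(A_{-k}^\varepsilon) - (K-1).
\end{equation*}
The Gaussian isoperimetric inequality of Borell and Sudakov--Tsirelson, applied to each $A_{-k}$ with $a_k := \Phi^{-1}(1-w_k)$, gives $\gamma(A_{-k}^\varepsilon) \ge \Phi(a_k + \varepsilon)$, so the problem reduces to proving
\begin{equation*}
\sum_{k=1}^K \bar\Phi(a_k + \varepsilon) \le \frac{1+x^2}{x^2}\, e^{-\varepsilon^2/2}\, e^{-\varepsilon x},
\end{equation*}
where $\bar\Phi := 1 - \Phi$.

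The second ingredient is the two-sided Mill's ratio, available because the hypothesis $w_k \le 1/4 < 1/2$ forces each $a_k \ge x > 0$. The upper bound $\bar\Phi(t) \le \phi(t)/t$ applied at $t = a_k + \varepsilon$ controls the tail pointwise; the lower bound $\bar\Phi(t) \ge t\phi(t)/(1+t^2)$ applied at $t = a_k$ converts the known quantity $w_k = \bar\Phi(a_k)$ into an upper bound $\phi(a_k) \le \tfrac{1+a_k^2}{a_k}\, w_k$. Combining these with the identity $\phi(a_k+\varepsilon) = \phi(a_k)\, e^{-\varepsilon a_k - \varepsilon^2/2}$ yields
\begin{equation*}
\bar\Phi(a_k + \varepsilon) \le \frac{1+a_k^2}{a_k(a_k+\varepsilon)}\, e^{-\varepsilon^2/2}\, e^{-\varepsilon a_k}\, w_k.
\end{equation*}

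The final step removes the dependence on $a_k$ by monotonicity and sums. The prefactor admits the crude but sharp-enough bound $\tfrac{1+a_k^2}{a_k(a_k+\varepsilon)} \le \tfrac{1+a_k^2}{a_k^2} = 1 + \tfrac{1}{a_k^2} \le \tfrac{1+x^2}{x^2}$ since $a_k \ge x$, and likewise $e^{-\varepsilon a_k} \le e^{-\varepsilon x}$. Hence each term is at most $\tfrac{1+x^2}{x^2}\, e^{-\varepsilon^2/2}\, e^{-\varepsilon x}\, w_k$, and summing over $k$ together with $\sum_k w_k = 1$ delivers the desired bound on $\sum_k \bar\Phi(a_k+\varepsilon)$; combined with the reduction above, this completes the proof.

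The main obstacle I anticipate is arranging the Mill's-ratio chain so that the constant $(1+x^2)/x^2$ emerges cleanly rather than a weaker form like $(1+x^2)/\bigl(x(x+\varepsilon)\bigr)$ which would not match the target. The key is to use the \emph{lower} Mill's-ratio bound to eliminate $\phi(a_k)$ in favor of the measure $w_k$ \emph{before} any further relaxation, and only then replace $a_k$ by its uniform lower bound $x$; reversing this order loses a factor and destroys the clean product structure of the stated inequality.
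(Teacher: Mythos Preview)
Your proposal is correct and follows essentially the same approach as the paper: the disjoint decomposition of $\Delta^{-\varepsilon}$ into inner boundaries, the Gaussian isoperimetric inequality applied to each $A_{-k}$, the two-sided Mill's ratio to pass from $\bar\Phi(a_k+\varepsilon)$ to a multiple of $w_k$, and the monotonicity step replacing $a_k$ by $x$ are exactly the ingredients the paper uses. Your exposition is in fact more explicit about the intermediate bound $\tfrac{1+a_k^2}{a_k(a_k+\varepsilon)}\le\tfrac{1+a_k^2}{a_k^2}$, which the paper absorbs silently into its statement of~\eqref{appendix:eq:refined_bound}.
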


\begin{proof}[Proof]
By \eqref{appendix:eq:unions}, we have the formula
\begin{align}\label{appendix:eq:formula}
\gamma(\Delta^{-\varepsilon}(A_1,\ldots,A_K))&=\sum_{k=1}^K\gamma(\partial^{-\varepsilon}A_k) \\
&= \sum_{k=1}^K \gamma(A_{-k}^\varepsilon)-\gamma(A_{-k}).
\end{align}
Let $w_{-k} := \gamma(A_{-k})=1-w_k$, and assume $w_{-k} \ge 3/4$, i.e $w_k \le 1/4$, for all $k \in [\![K]\!]$.

For example, this condition holds in the equitable
scenario where $w_k=1/K$ for all $k$. 

Now, by standard \emph{Gaussian Isoperimetric Inequality} (see ~\cite{boucheron2013} for example), one has
\begin{align}
\gamma(A^{\varepsilon}_{-k}) 
&\ge \Phi(\Phi^{-1}(\gamma(A_{-k})+\varepsilon) \nonumber \\
&=\Phi(\Phi^{-1}(1-w_k)+\varepsilon).
\label{appendix:eq:bound}
\end{align}
Using the bound $\frac{x}{1+x^2}\varphi(x) < 1-\Phi(x) < \frac{1}{x}\varphi(x)\;\forall x > 0$ where $\varphi$ is the density of the standard Gaussian law. We can further find that 
\begin{align}\label{appendix:eq:refined_bound}
    &\Phi(\Phi^{-1}(1-w_{k})+\varepsilon) \ge 1-w_k\frac{1+\Phi^{-1}(1-w_k)^2}{\Phi^{-1}(1-w_k)^2} \times \nonumber\\
    & \quad \quad \quad e^{-\frac{1}{2}\varepsilon^2}e^{-\varepsilon\Phi^{-1}(1-w_k)} \nonumber\\
    & \quad \quad \quad \quad \quad \quad \quad \quad \ge 1-w_k\frac{1+ x^2}{ x^2}e^{-\frac{1}{2}\varepsilon^2}e^{-\varepsilon x} > 0 \\
    &\text{(since the function $x \mapsto \frac{1 + x^2}{x^2}e^{-\varepsilon x}$ is decreasing for $x > 0$)} \nonumber
\end{align}
where $ x := \min_{k \in [\![K]\!]}\Phi^{-1}(1-w_k) = \Phi^{-1} \left(1-\max_{k \in [\![K]\!]} w_k\right) \ge \Phi^{-1}(3/4) > 0.67$. Combining \eqref{appendix:eq:formula}, \eqref{appendix:eq:bound}, and \eqref{appendix:eq:refined_bound} yields the following

\begin{align*}
\gamma(\Delta^{-\varepsilon}(A_1,\ldots,A_K)) 
&\ge \sum_{k=1}^K \big(1-w_k\frac{1+ x^2}{ x^2}e^{-\frac{1}{2}\varepsilon^2}e^{-\varepsilon x} \\
&-(1-w_k))\big) \\
&=\sum_{k=1}^K \left(1-\frac{1+ x^2}{ x^2}e^{-\frac{1}{2}\varepsilon^2}e^{-\varepsilon x}\right) w_k \\
&= 1-\frac{1+ x^2}{ x^2}e^{-\frac{1}{2}\varepsilon^2}e^{-\varepsilon x},
\end{align*}

\paragraph{Asymptotic analysis}
In the limit, it is easy to check that in the case where $\max_{k \in [\![K]\!]}w_{k} \longrightarrow 0$, we have that $x \longrightarrow \infty$. In this setting, we thus have $\frac{1+ x^2}{ x^2} \longrightarrow 1$ and can now derive the following bound:
\begin{equation*}
     \inf_{A_1,\ldots,A_K}\gamma(\Delta^{-\varepsilon}(A_1,\ldots,A_K)) \overset{\max_{k \in [\![K]\!]}w_{k} \rightarrow 0}{\longrightarrow} 1-e^{-\frac{1}{2}\varepsilon^2}e^{-\varepsilon x}.
\end{equation*}

\paragraph{Equitable scenario}\label{appendix:main_result_equitable_scenario}
In the equitable scenario where $w_k = 1/K$ for all $k$, we have 

\begin{equation*}
    \inf_{A_1,\ldots,A_K}\gamma(\Delta^{-\varepsilon}(A_1,\ldots,A_K)) \geqslant 1-\frac{1+x^2}{x^2}e^{-\frac{1}{2}\varepsilon^2}e^{-\varepsilon x} 
\end{equation*}
where $x = \Phi^{-1}(1-1/K)$. When $K\geq 8$ we have:
  \begin{eqnarray}  
     \Phi^{-1}(1-1/K)  \geqslant \sqrt{2 \log \left ( \frac{K \left(q(K)^2-1\right)}{\sqrt{2 \pi}  q(K)^3} \right )}
\label{eq:crudeman}
\end{eqnarray}
where $q(K)=\sqrt{2\log(\sqrt{2\pi}K)}$. 

Consequently, we have when $K \to \infty$, the following behavior:
\begin{equation}\label{appendix:eq:explicit}
    \gamma(\Delta^{-\varepsilon}(A_1, \ldots,A_K)) \overset{K \rightarrow \infty}{\leqslant} 1-e^{-\frac{1}{2}\varepsilon^2} e^{-\varepsilon\sqrt{2\log(K})}
\end{equation}
\end{proof}

\begin{proof}[Proof of the inequality \eqref{eq:crudeman}]
Set $p := 1/K$. First, for any $x>0$, we have the following upper:
$$\int_x^\infty e^{-y^2/2} dy = \int_x^\infty \frac{y}{y} e^{-y^2/2} dy \leq \frac{1}{x} \int_x^\infty y e^{-y^2/2} dy = \frac{e^{-x^2/2}}{x}.$$
For a lower bound:
\begin{eqnarray*}
\begin{split}
\int_x^\infty e^{-y^2/2} dy &= \int_x^\infty \frac{y}{y} e^{-y^2/2} dy = \frac{e^{-x^2/2}}{x} - \int_x^\infty \frac{1}{y^2} e^{-y^2/2} dy
\end{split}
\end{eqnarray*}
and
$$
\int_x^\infty \frac{1}{y^2} e^{-y^2/2} dy = \int_x^\infty \frac{y}{y^3} e^{-y^2/2} dy \leq \frac{e^{-x^2/2}}{x^3}
$$
and combining these gives
$$\int_x^\infty e^{-y^2/2} dy \geq \left ( \frac{1}{x} - \frac{1}{x^3} \right ) e^{-x^2/2}.$$
Thus
$$\frac{1}{\sqrt{2\pi}}\left ( \frac{1}{x} - \frac{1}{x^3} \right ) e^{-x^2/2} \leq 1-\Phi(x) \leq \frac{1}{\sqrt{2\pi}} \frac{1}{x} e^{-x^2/2},$$
from where
\begin{align}
    &\frac{1}{\sqrt{2\pi}}\left ( \frac{1}{\Phi^{-1}(1-p)} - \frac{1}{\Phi^{-1}(1-p)^3} \right ) e^{-\Phi^{-1}(1-p)^2/2} \label{eq:lower_boundd}\\
    &\quad\leq p\leq \frac{1}{\sqrt{2\pi}} \frac{1}{\Phi^{-1}(1-p)} e^{-\Phi^{-1}(1-p)^2/2} \label{eq:upper_boundd}
\end{align}
Using \eqref{eq:upper_boundd}, when $\Phi^{-1}(1-p) \geq 1$ (that is $p \leqslant 0.15$ or equivalently $K\geq8$), we have the following upper bound $\Phi^{-1}(1-p) \leqslant q(p)$ where $q(p):=\sqrt{2\log(\sqrt{2\pi}/p)}$. Then, injecting $q(p)$ in \eqref{eq:lower_boundd}:
$$\frac{1}{\sqrt{2\pi}} \left ( \frac{1}{q(p)} - \frac{1}{q(p)^3} \right ) e^{-\Phi^{-1}(1-p)^2/2} \leq p.$$
Now when $q(p) \geq 1$ you have: 
$$e^{-\Phi^{-1}(1-p)^2/2} \leq \frac{\sqrt{2\pi} p q(p)^3}{q(p)^2-1}$$
and
\begin{equation*}
    \Phi^{-1}(1-p) \geq \sqrt{2 \log \left ( \frac{q(p)^2-1}{\sqrt{2 \pi} p q(p)^3} \right )}.
\end{equation*}
There is one additional requirement on $p$ which is simply that the argument of the log should be $\geq 1$ i.e. $q(p)^2-1 \geq \sqrt{2 \pi} p q(p)^3$, which is true as soon as  $K\geq 8$.
\end{proof}

\newpage
\section{Visualization of Theorem \ref{th:extended_no_free_lunch}}\label{appendix:visualization_of_theorem3}

\begin{figure}[H]
    \captionsetup[subfigure]{margin={0.5cm,0cm}}
    \subfloat[WGAN 4 classes: \newline
    visualisation of $\|J_G(z)\|_{F}$.]
    {   
        \includegraphics[width=0.47\columnwidth]{images/simple_synthetic_dataset/norm_jacobian_4modes.png}
    }
    \captionsetup[subfigure]{margin={0.5cm,0cm}}
    \subfloat[Green blobs: true densities. Dots: generated points.]
    {   
        \includegraphics[width=0.47\columnwidth]{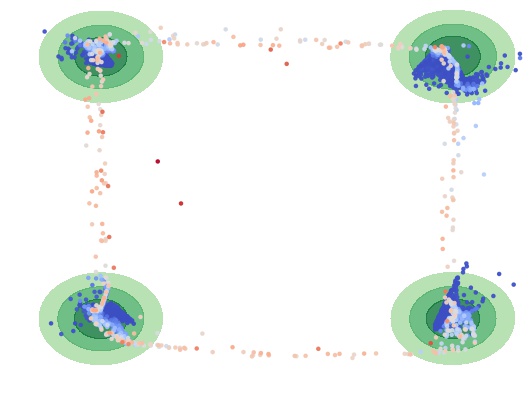}
    }
    \hfill
    \captionsetup[subfigure]{margin={0.5cm,0cm}}
    \subfloat[WGAN 9 classes: \newline
    visualisation of $\|J_G(z)\|_{F}$.]
    {   
        \includegraphics[width=0.46\columnwidth]{images/simple_synthetic_dataset/norm_jacobian_9modes.png}
    }
    \captionsetup[subfigure]{margin={0.6cm,0cm}}
    \subfloat[Green blobs: true densities. Dots: generated points.]
    {   
        \includegraphics[width=0.48\linewidth]{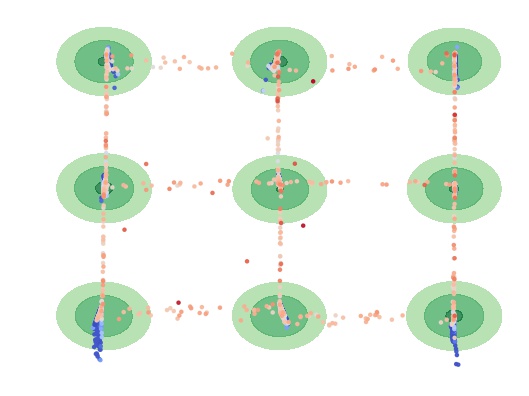}
    }
    \hfill
    \captionsetup[subfigure]{margin={0.5cm,0cm}}
    \subfloat[WGAN 3 classes: \newline
    visualisation of $\|J_G(z)\|_{F}$.]
    {   
        \includegraphics[width=0.43\columnwidth]{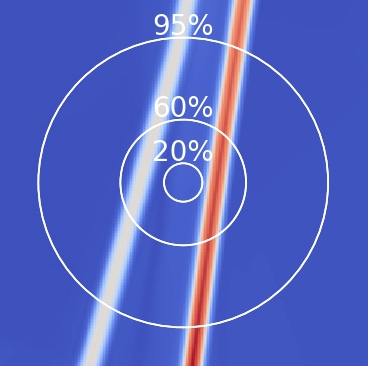}
    }
    \captionsetup[subfigure]{margin={0.6cm,0cm}}
    \subfloat[Green blobs: true densities. Dots: generated points.]
    {   
        \includegraphics[width=0.48\linewidth]{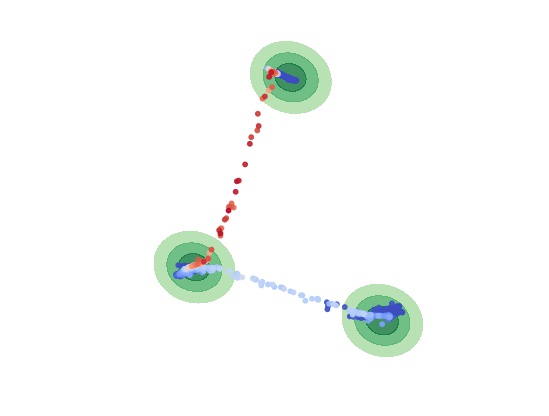}
    }
    \hfill
    \subfloat[WGAN 5 classes: \newline
    visualisation of $\|J_G(z)\|_{F}$.]
    {   
        \includegraphics[width=0.45\linewidth]{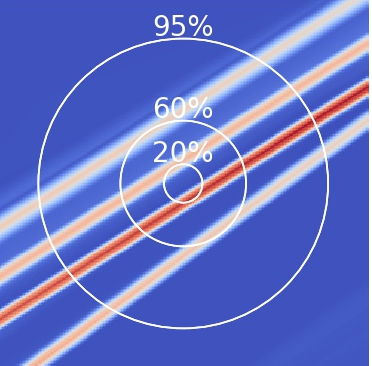}
    }
    \captionsetup[subfigure]{margin={0.6cm,0cm}}
    \subfloat[Green blobs: true densities. Dots: generated points.]
    {   
        \includegraphics[width=0.48\linewidth]{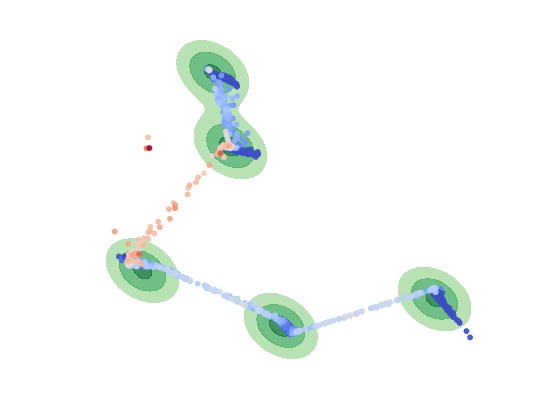}
    }
    \caption{Learning disconnected manifolds: visualization of the gradient of the generator (JFN) in the latent space and densities in the output space.}
\end{figure}

\newpage
\section{Definition of the different metrics used}\label{appendix:metrics}
In the sequel, we present the different metrics used in Section \ref{section:experiments} of the paper to assess performances of GANs. We have:
\begin{itemize}
    \item Improved Precision/Recall (PR) metric \cite{kynkaanniemi2019improved}: it has been presented in Definition \ref{def:improved_prec_rec}. Intuitively, 
    Based on a k-NN estimation of the manifold of real (resp. generated) data, it assesses whether generated (resp. real) points belong in the real (resp. generated) data manifold or not. The proportion of generated (resp. real) points that are in the real (resp. generated) data manifold is the precision (resp. recall). 
    
    \item the Hausdorff distance: it is defined by
    \begin{equation*}
    \begin{aligned}
        \text{Haus}(A,B)  = \max\left\{ \max_{a\in A} \min_{b\in B} \|a-b\|, \max_{b\in B} \min_{a\in A} \|a-b\|  \right\}
    \end{aligned}
    \end{equation*}
     Such a distance is useful to evaluate the closeness of two different supports from a metric space, but is sensitive to outliers because of the max operation. It has been recently used for theoretical purposes by \citet{pandeva2019mmgan}.
     
     \item the Frechet Inception distance: first proposed by \citet{dowson1982frechet}, the Frechet distance was applied in the setting of GANs by \citet{heusel2017gans}. This distance between mutlivariate Gaussians compares statistic of generated samples to real samples as follows
     \begin{equation*}
         \text{FID} = \|\nu_\star - \nu_\theta \|^2 + Tr\big(\Sigma_\star + \Sigma_\theta + 2(\Sigma_\star \Sigma_\theta)^{\frac{1}{2}}\big)
     \end{equation*}
     where $X_\star = \mathcal{N}(\nu_\star, \Sigma_\star)$ and $X_\theta = \mathcal{N}(\nu_\theta, \Sigma_\theta)$ are the activations of a pre-softmax layer. However, when dealing with disconnected manifolds, we argue that this distance is not well suited as it approximates the distributions with unimodal one, thus loosing many information.
\end{itemize}
The choice of such metrics is motivated by the fact that metrics measuring the performances of GANs should not rely on relative densities but should rather be point sets based metrics. 

\newpage
\section{Saturation of a MLP neural network}
\label{appendix:precision_saturated_synthetic}

In Section \ref{subsection:exp_synthetic}, we claim that the generator reduces the sampling of off-manifold data points up to a saturation point. Figure \ref{fig:exp_synthetic_saturation} below provides a visualization of this phenomenon. In this synthetic case, we learn a 9-component mixture of Gaussians using simple GANs architecture (both the generator and the discriminator are MLP with two hidden layers). The minimal distance between two modes is set to 9. We clearly see in Figure \ref{subfig:precision_saturation} that the precision saturates around 80\%. 

\begin{figure}[h]
    \captionsetup[subfigure]{margin={0.5cm,0cm}}
    \subfloat[Data points sampled after 5,000 steps of training.]
    {   
        \includegraphics[width=0.47\columnwidth]{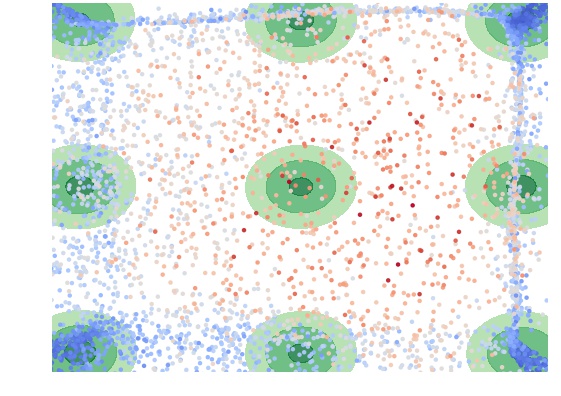}
    }
    \captionsetup[subfigure]{margin={0.5cm,0cm}}
    \subfloat[Data points sampled after 50,000 steps of training.]
    {   
        \includegraphics[width=0.47\columnwidth]{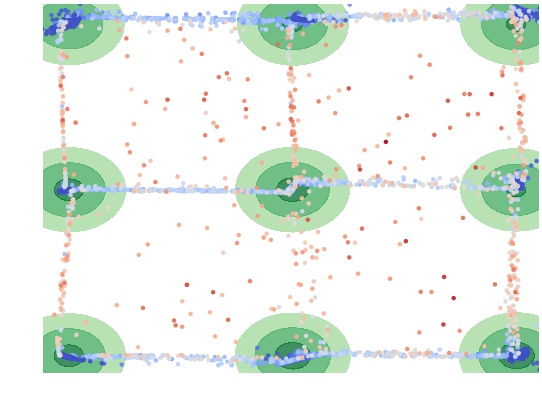}
    }
    \hfill
    \captionsetup[subfigure]{margin={0.5cm,0cm}}
    \subfloat[Data points sampled after 100,000 steps of training.]
    {   
        \includegraphics[width=0.47\columnwidth]{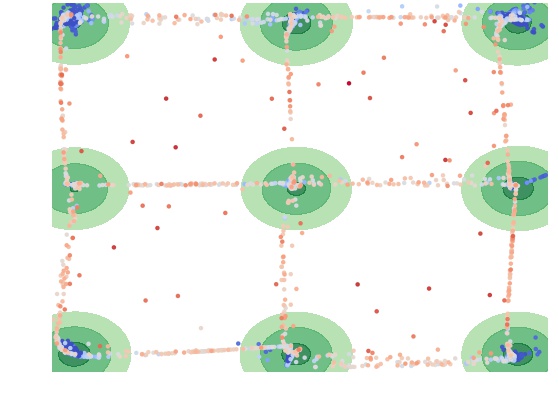}
    }
    \captionsetup[subfigure]{margin={0.6cm,0cm}}
    \subfloat[Evolution of the precision $\bar{\alpha}$ during training.\label{subfig:precision_saturation}]
    {   
        \includegraphics[width=0.47\linewidth]{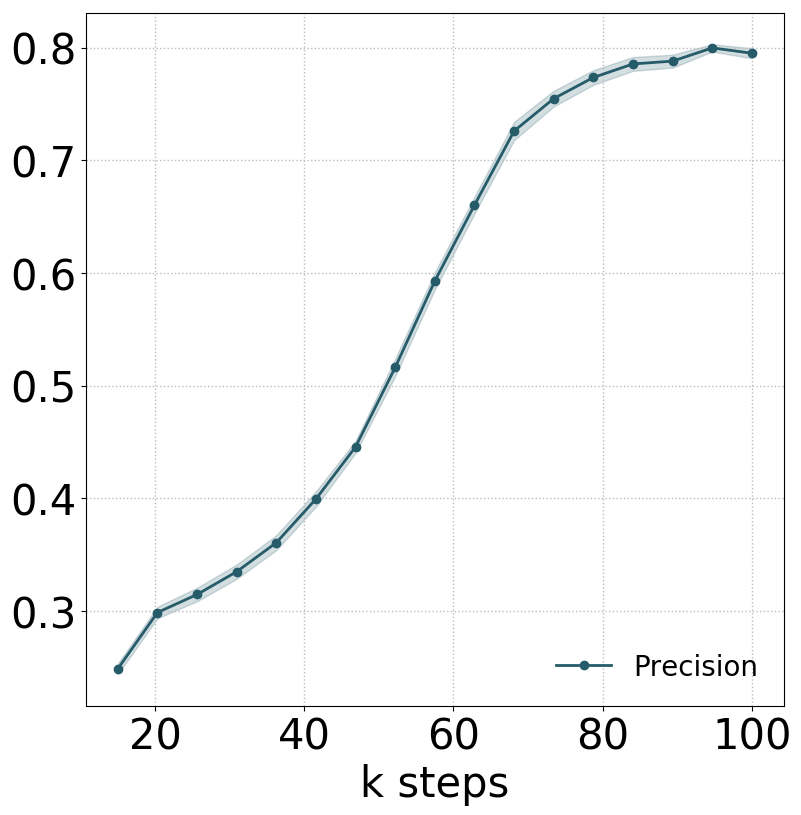}
    }
    \caption{\label{fig:exp_synthetic_saturation}Learning 9 disconnected manifolds with a standard GANs architecture.}
\end{figure}

\newpage
\section{More results and visualizations on MNIST/F-MNIST/CIFAR10}
\label{appendix:mnist_fmnist_and_cifar10}

Additionally to those in Section \ref{subsection:image_datasets}, we provide in Figure \ref{appendix:fig_appendix_trunc_supplementary_results} and Table \ref{appendix:table_prec_rec_mnist_fmnist_cifar10} supplementary results for MNIST, F-MNIST and CIFAR-10 datasets. 

\begin{figure}[H]
    \subfloat[MNIST: examples of data points selected by our JBT with a truncation ratio of 90\% (we thus removed the 10\% highest gradients). \label{appendix:fig:removing_high_gradient_MNIST}]
    {   
        \includegraphics[width=0.43\linewidth]{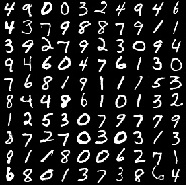}
    }\hfill
    \subfloat[MNIST: examples of data points removed by our JBT with a truncation ratio of 90\% (these are the 10\% highest gradients data points). \label{appendix:fig:removing_low_gradient_MNIST}]
    {   
        \includegraphics[width=0.43\linewidth]{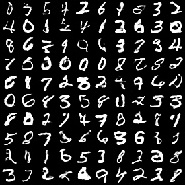}
    } \hfill
    \subfloat[F-MNIST: examples of data points selected by our JBT with a truncation ratio of 90\% (we thus removed the 10\% highest gradients).. \label{appendix:fig:removing_high_gradient_FMNIST}]
    {   
        \includegraphics[width=0.43\linewidth]{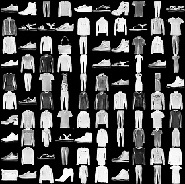}
    }\hfill
    \subfloat[F-MNIST: examples of data points removed by our JBT with a truncation ratio of 90\% (these are the 10\% highest gradients data points). \label{appendix:fig:appendix_reducing_z_variance_FMNIST}]
    {   
        \includegraphics[width=0.43\linewidth]{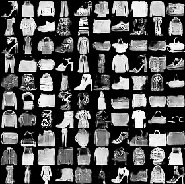}
    }
    \caption{Visualization of our truncation method on CIFAR10. \label{appendix:fig:appendix_trunc_MNIST_FMNIST}} 
    \subfloat[CIFAR-10: examples of data points selected by our JBT with a truncation ratio of 90\% (we thus removed the 10\% highest gradients). \label{appendix:fig:appendix_removing_high_gradient_CIFAR10}]
    {   
        \includegraphics[width=0.43\linewidth]{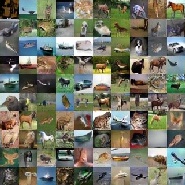}
    }\hfill
    \subfloat[MNIST: examples of data points removed by our JBT with a truncation ratio of 90\% (these are the 10\% highest gradients data points). \label{appendix:fig:appendix_reducing_z_variance_CIFAR10}]
    {   
        \includegraphics[width=0.43\linewidth]{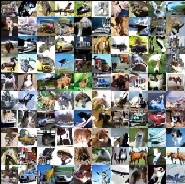}
    }
    \caption{Visualization of our truncation method (JBT) on three real-world datasets: MNIST, F-MNIST and CIFAR-10. \label{appendix:fig_appendix_trunc_supplementary_results}} 
\end{figure}

\begin{figure*}[ht]
    {   
        \includegraphics[width=0.30\linewidth]{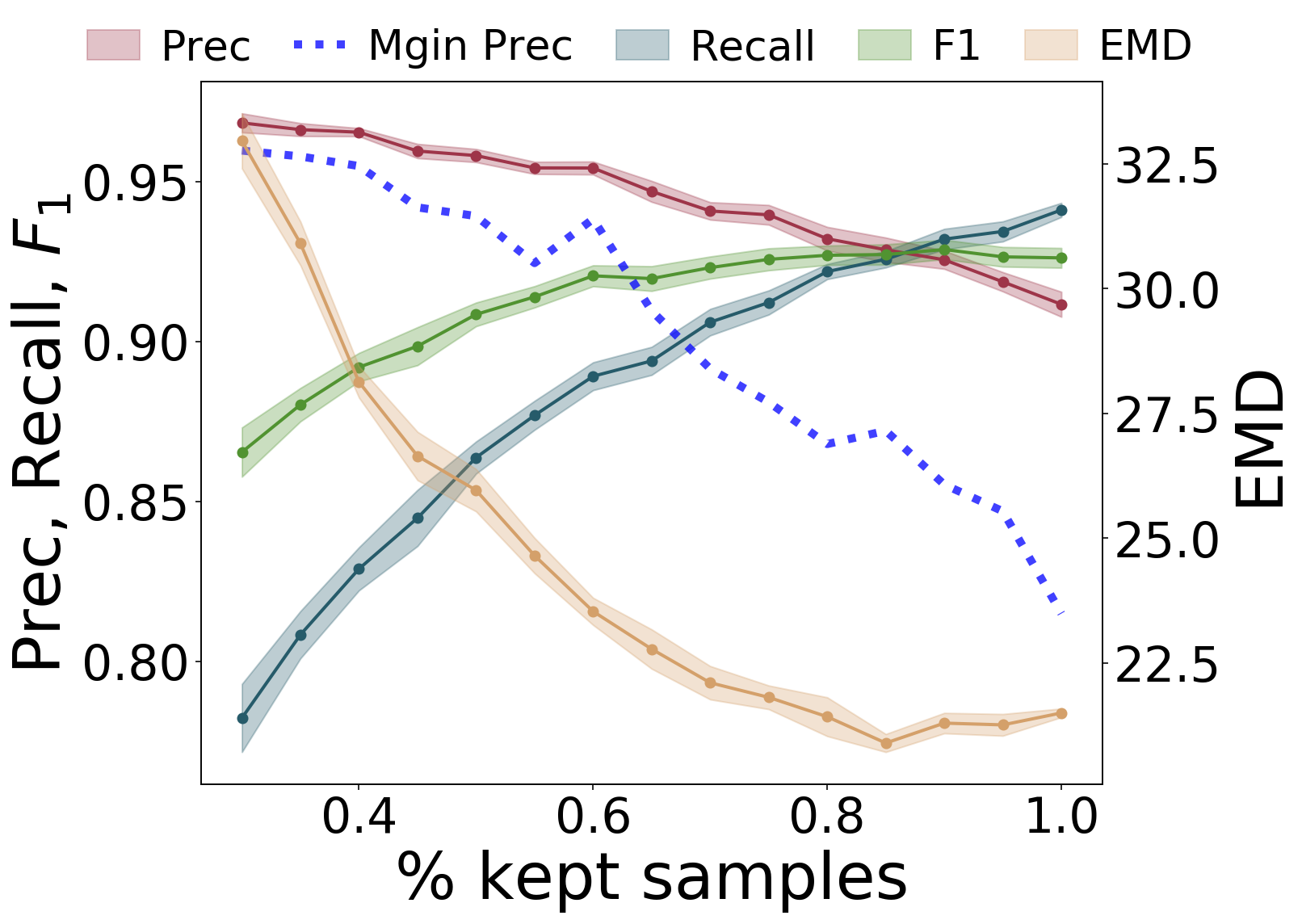}
    } \hfill
    {   
        \includegraphics[width=0.30\linewidth]{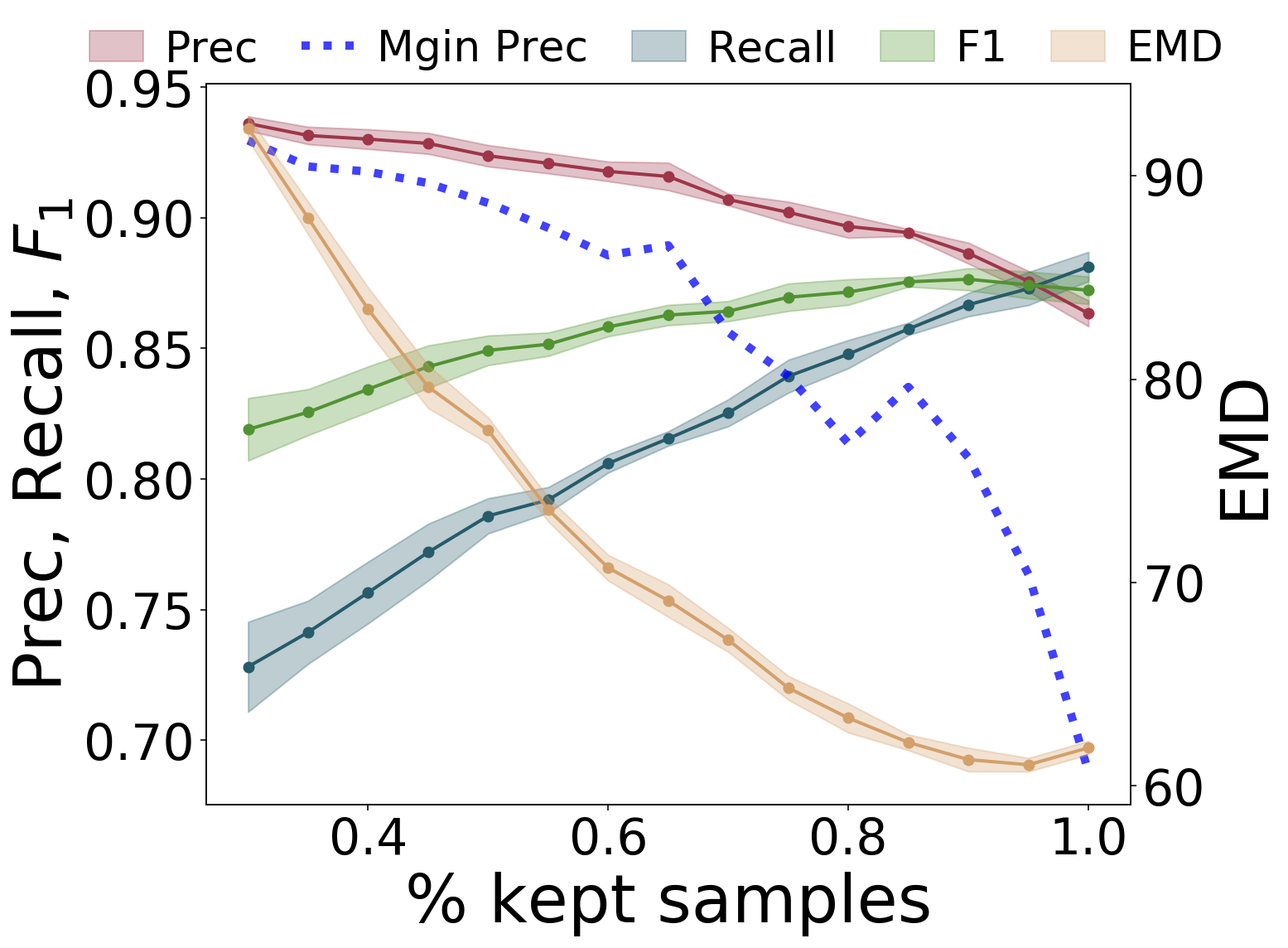}
    } \hfill
    {   
        \includegraphics[width=0.30\linewidth]{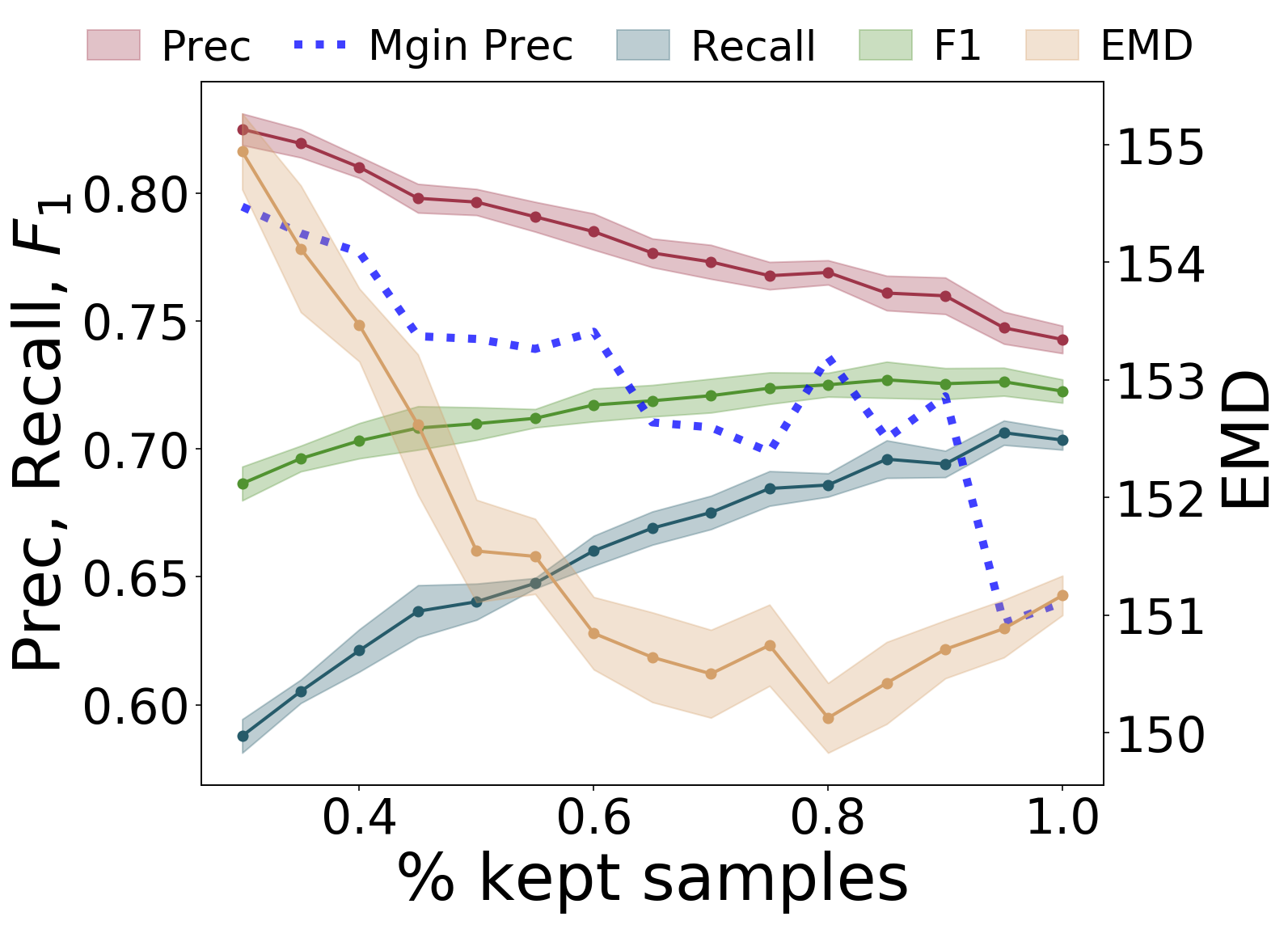}
    }\\
    \vspace{-0.4cm}
    \caption{For high levels of kept samples, the marginal precision plummets of newly added samples, underlining the efficiency of our truncation method (JBT). 
    Reported confidence intervals are $97\%$ confidence intervals. On the second row, generated samples ordered by their JFN (left to right, top to bottom). In the last row, the data points generated are blurrier and outside the true manifold.  \label{fig:trunc_MNIST_FMNIST_with_emd}} 
\end{figure*}

\begin{table*}[ht]
\begin{center}
\begin{tabular}{|l|c|c|c|c|c|c|c|}
\cline{2-7}
\multicolumn{1}{l|}{\textbf{MNIST}}  & Prec. & Rec. & F1 & Haus. & FID & EMD \\
\hline
WGAN& $91.2 \pm 0.3$ & $\mathbf{93.7 \pm 0.5}$ & $\mathbf{92.4 \pm 0.4}$ & $49.7 \pm 0.2$ & $24.3 \pm 0.3$ & $21.5 \pm 0.1$ \\
WGAN 90\% lowest JFN & $92.5 \pm 0.5$ & $92.9 \pm 0.3$ & $\mathbf{92.7 \pm 0.4}$ & $\mathbf{48.1 \pm 0.2}$ & $26.9 \pm 0.5$ & $21.3 \pm 0.2$\\
WGAN 80\% lowest JFN & $\mathbf{93.3 \pm 0.3}$ & $91.8 \pm 0.4$ & $\mathbf{92.6 \pm 0.4}$ & $50.6 \pm 0.4$ & $33.1 \pm 0.3$ & $21.4 \pm 0.4$ \\
W-Deligan & $89.0 \pm 0.6$ & $\mathbf{93.6 \pm 0.3}$ & $91.2 \pm 0.5$ & $50.7 \pm 0.3$ & $31.7 \pm 0.5$ & $22.4 \pm 0.1$ \\
DMLGAN & $\mathbf{93.4 \pm 0.2}$ & $92.3 \pm 0.2$ & $\mathbf{92.8 \pm 0.2}$ & $\mathbf{48.2 \pm 0.3}$ & $\mathbf{16.8 \pm 0.4}$ & $\mathbf{20.7 \pm 0.1}$ \\ \hline
\multicolumn{6}{l} {\textbf{Fashion-MNIST}} \\
\hline
WGAN & $86.3 \pm 0.4$ &  $\mathbf{88.2 \pm 0.2}$ & $\mathbf{87.2 \pm 0.3}$ & $140.6 \pm 0.7$ & $259.7 \pm 3.5$ & $61.9 \pm 0.3$ \\
WGAN 90\% lowest JFN & $88.6 \pm 0.6$ & $86.6 \pm 0.5$ & $\mathbf{87.6 \pm 0.5}$ & $\mathbf{138.7 \pm 0.9}$ & $\mathbf{257.4 \pm 3.0}$ & $\mathbf{61.3 \pm 0.6}$ \\
WGAN 80\% lowest JFN & $\mathbf{89.8 \pm 0.4}$ & $84.9 \pm 0.5$ & $\mathbf{87.3 \pm 0.4}$ & $146.3 \pm 1.1$ & $396.2 \pm 6.4$ & $63.3 \pm 0.7$\\
W-Deligan & $88.5 \pm 0.3$ & $85.3 \pm 0.6$ & $86.9 \pm 0.4$ & $141.7 \pm 1.1$ & $310.9 \pm 3.1$ & $\mathbf{60.9 \pm 0.4}$ \\
DMLGAN & $87.4 \pm 0.3$ & $\mathbf{88.1 \pm 0.4}$ & $\mathbf{87.7 \pm 0.4}$ & $141.9 \pm 1.2$ & $\mathbf{253.0 \pm 2.8}$ & $\mathbf{60.9 \pm 0.4}$\\ \hline
\multicolumn{6}{l} {\textbf{CIFAR10}} \\
\hline
WGAN & $74.3 \pm 0.5$ & $\mathbf{70.3 \pm 0.4}$ & $\mathbf{72.3 \pm 0.5}$ & $334.7 \pm 3.5$ & $\mathbf{634.8 \pm 4.6}$ & $151.2 \pm 0.2$ \\
WGAN 90\% lowest JFN & $\mathbf{76.0 \pm 0.7}$ & $69.4 \pm 0.5$ & $\mathbf{72.5 \pm 0.6}$ &  $\mathbf{318.1 \pm 3.7}$ & $\mathbf{631.3 \pm 4.5}$ & $150.7 \pm 0.2$\\
WGAN 80\% lowest JFN & $\mathbf{76.9 \pm 0.5}$ & $68.6 \pm 0.5$ & $\mathbf{72.5 \pm 0.5}$ & $\mathbf{323.5 \pm 4.0}$ & $725.0 \pm 3.5$ & $\mathbf{150.1 \pm 0.3}$\\
W-Deligan & $71.5 \pm 0.7$ & $\mathbf{69.8 \pm 0.7}$ & $70.6 \pm 0.7$ & $328.7 \pm 2.1$ & $727.8 \pm 3.9$ & $154.0 \pm 0.3$ \\ 
DMLGAN & $74.1 \pm 0.5$ & $65.7 \pm 0.6$ & $69.7 \pm 0.6$ & $328.6 \pm 2.7$ & $967.2 \pm 4.1$ & $152.0 \pm 0.4$\\ \hline
\end{tabular}
\end{center}
\caption{Scores on MNIST and Fashion-MNIST. JFN stands for Jacobian Frobenius norm.\label{appendix:table_prec_rec_mnist_fmnist_cifar10} $\pm$ is $97\%$ confidence interval.}
\end{table*}

\clearpage
\section{More results on BigGAN and ImageNet} \label{appendix:biggan}
In Figure \ref{appendix:fig:bubble_class}, we show images from the Bubble class of ImageNet. It supports our claim of manifold disconectedness, even within a class, and outlines the importance of studying the learning of disconnected manifolds in generative models. Then, in Figure \ref{appendix:fig:biggan_histograms}, we give more exemples from BigGAN 128x128 class-conditionned generator. We plot in the same format than in \ref{subsection:biggan}. Specifically, for different classes, we plot 128 images ranked by JFN. Here again, we see a concentration of off-manifold samples on the last row, proving the efficiency of our method. Example of classes responding particularly well to our ranking are House Finch \subref{appendix:fig:house_finch_ranked}, Monnarch Butterfly \subref{appendix:fig:monarch_butterfly_ranked} or Wood rabbit \subref{appendix:fig:wood_rabbit}. For each class, we also show an histogram of JFN based on 1024 samples. It shows that the JFN is a good indicator of the complexity of the class. For example, classes such as Cornet \subref{fig:cornet_ranked_appendix} or Football helmet \subref{appendix:fig:football_helmet} are very diverse and disconnected, resulting in high JFNs.  

\begin{figure}[H]
    \centering
    \includegraphics[width=0.75\linewidth]{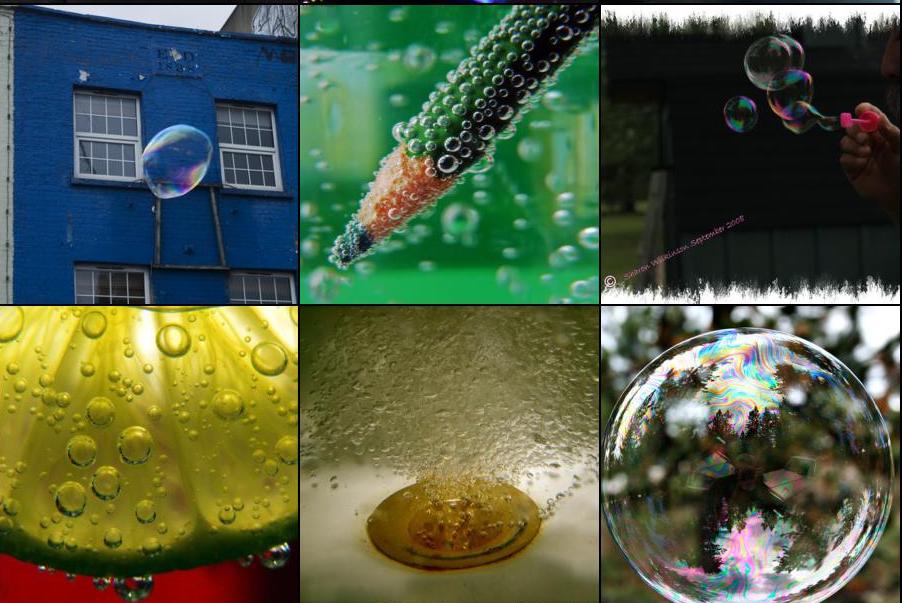}
    \caption{Images from the Bubble class of ImageNet showing that the class is complex and slightly multimodal. \label{appendix:fig:bubble_class}}
\end{figure}

\begin{figure}[H]
    \subfloat['Black swan' class. \label{appendix:swan}]
    {   
        \includegraphics[width=0.45\linewidth]{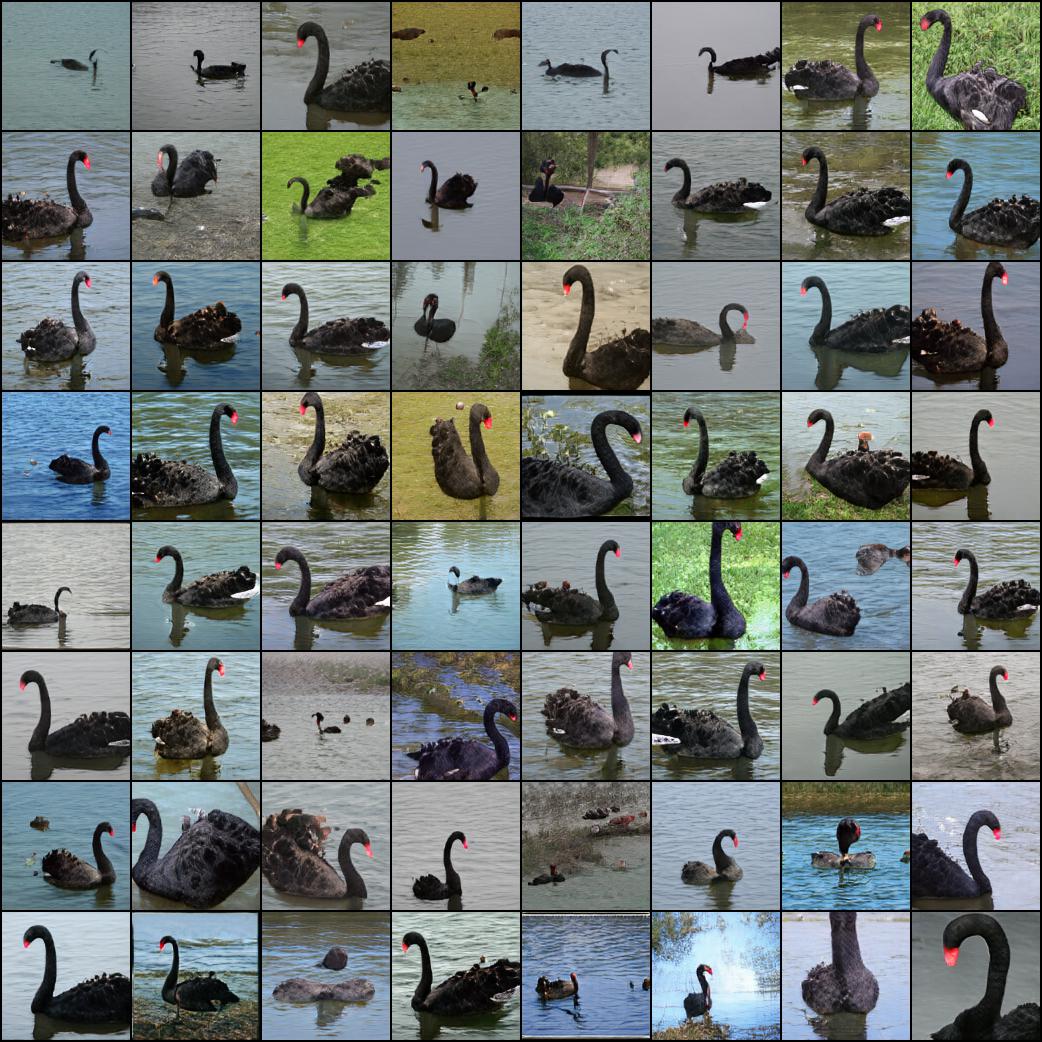}
    }\hfill
        \subfloat['Black swan' class histogram. \label{appendix:swanh}]
    {   
        \includegraphics[width=0.45\linewidth]{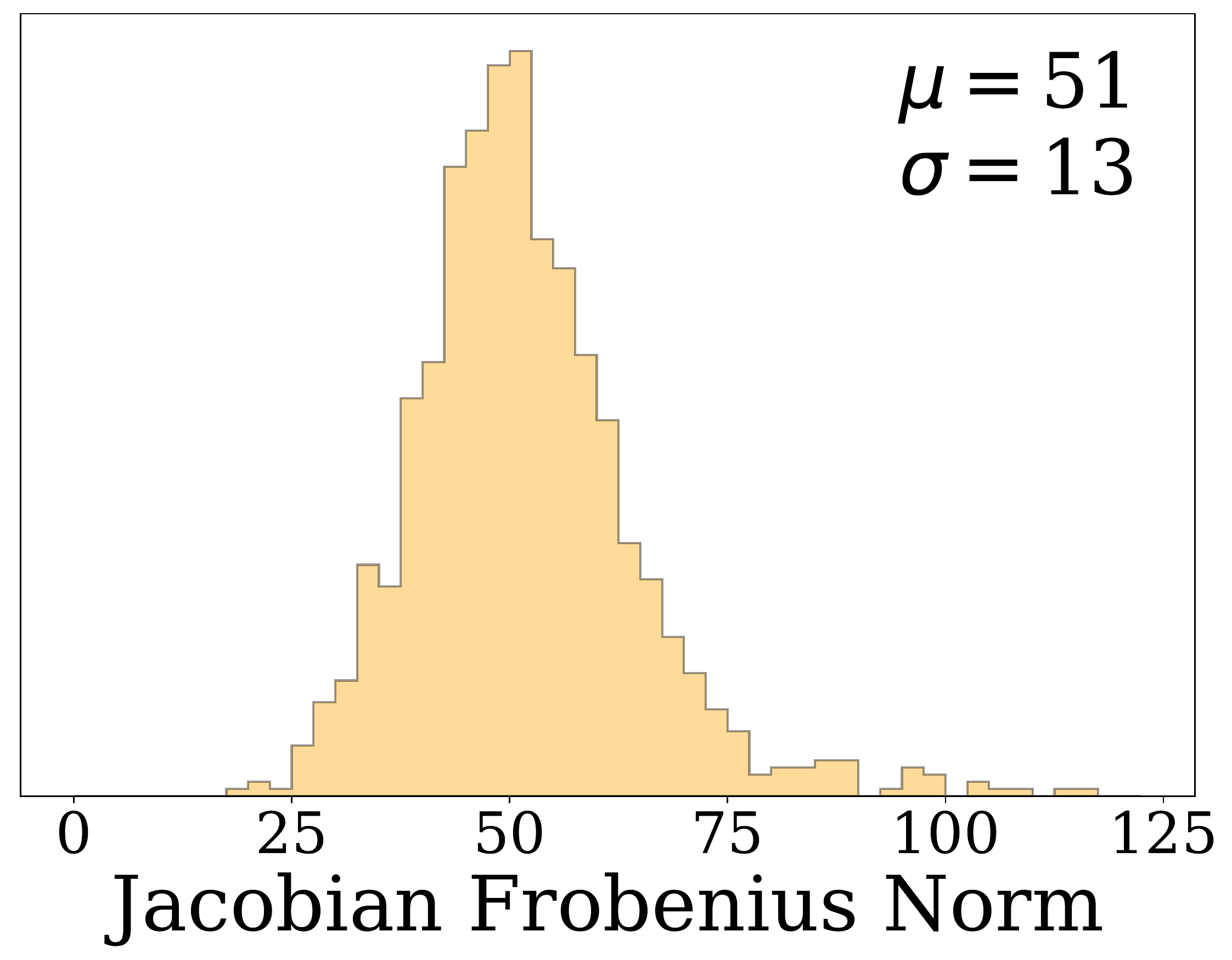}
    }\hfill
    \subfloat['House finch' class. \label{appendix:fig:house_finch_ranked}]
    {   
        \includegraphics[width=0.45\linewidth]{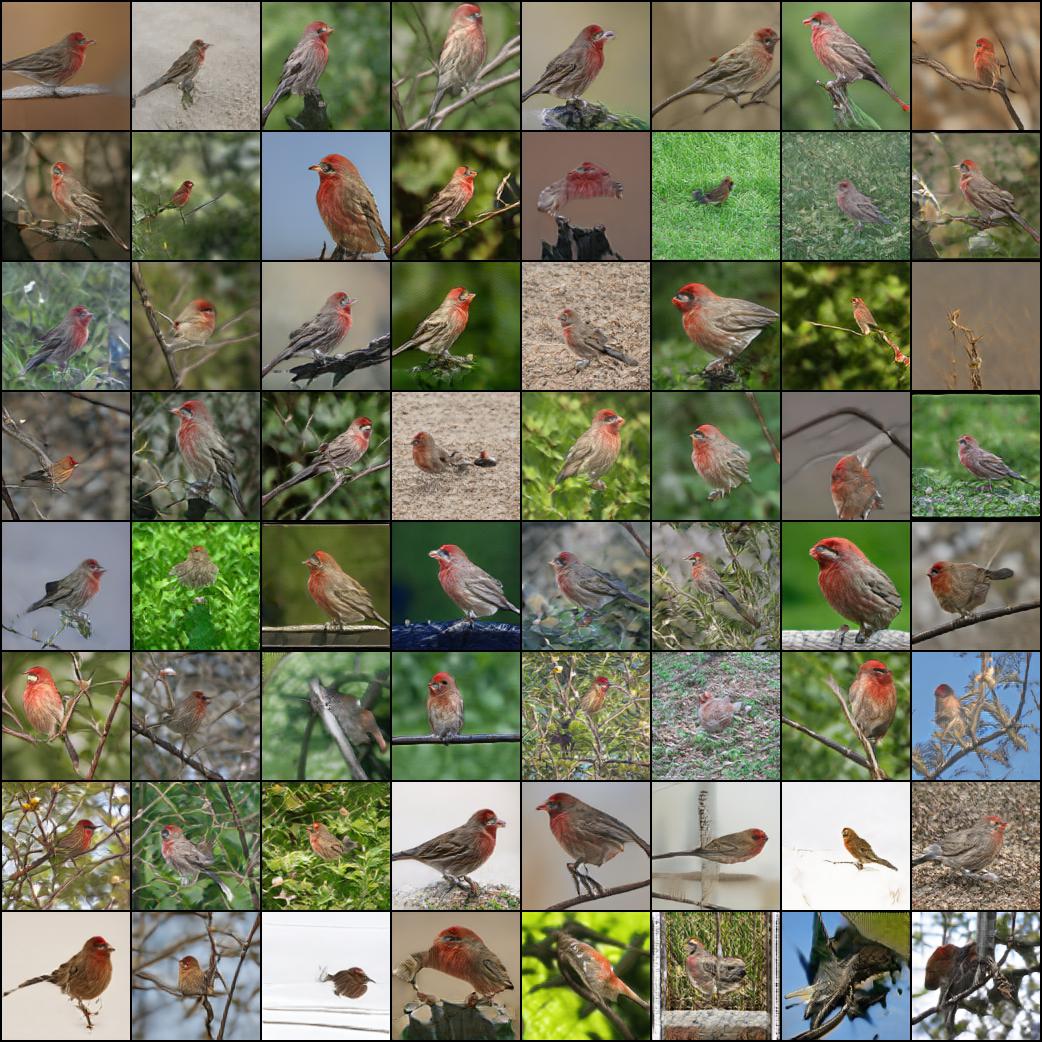}
    }  \hfill
    \subfloat['House finch' class histogram.]
    {   
        \includegraphics[width=0.50\linewidth]{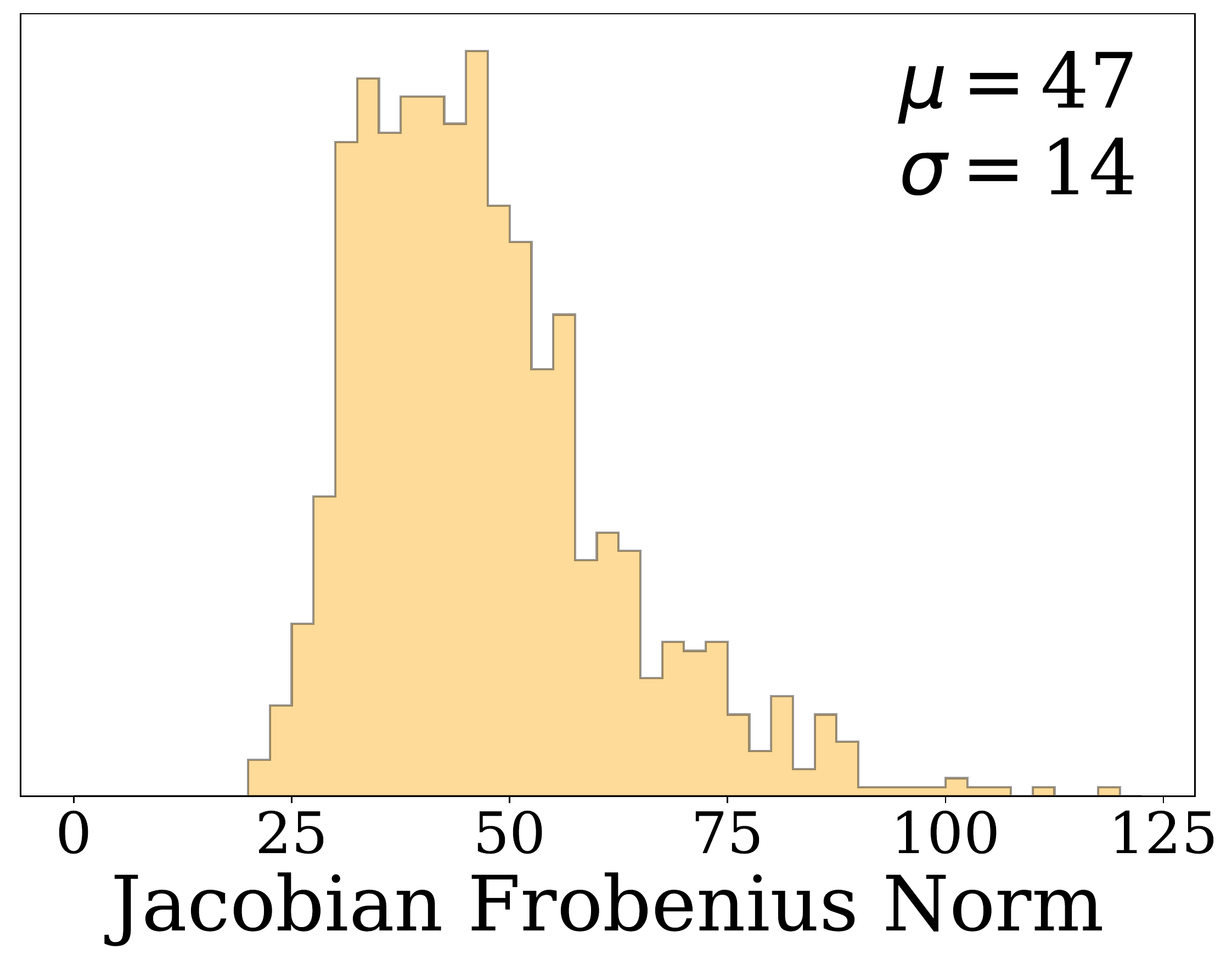}
    }  \hfill
    \subfloat['Indigo bunting' class.]
    {   
        \includegraphics[width=0.45\linewidth]{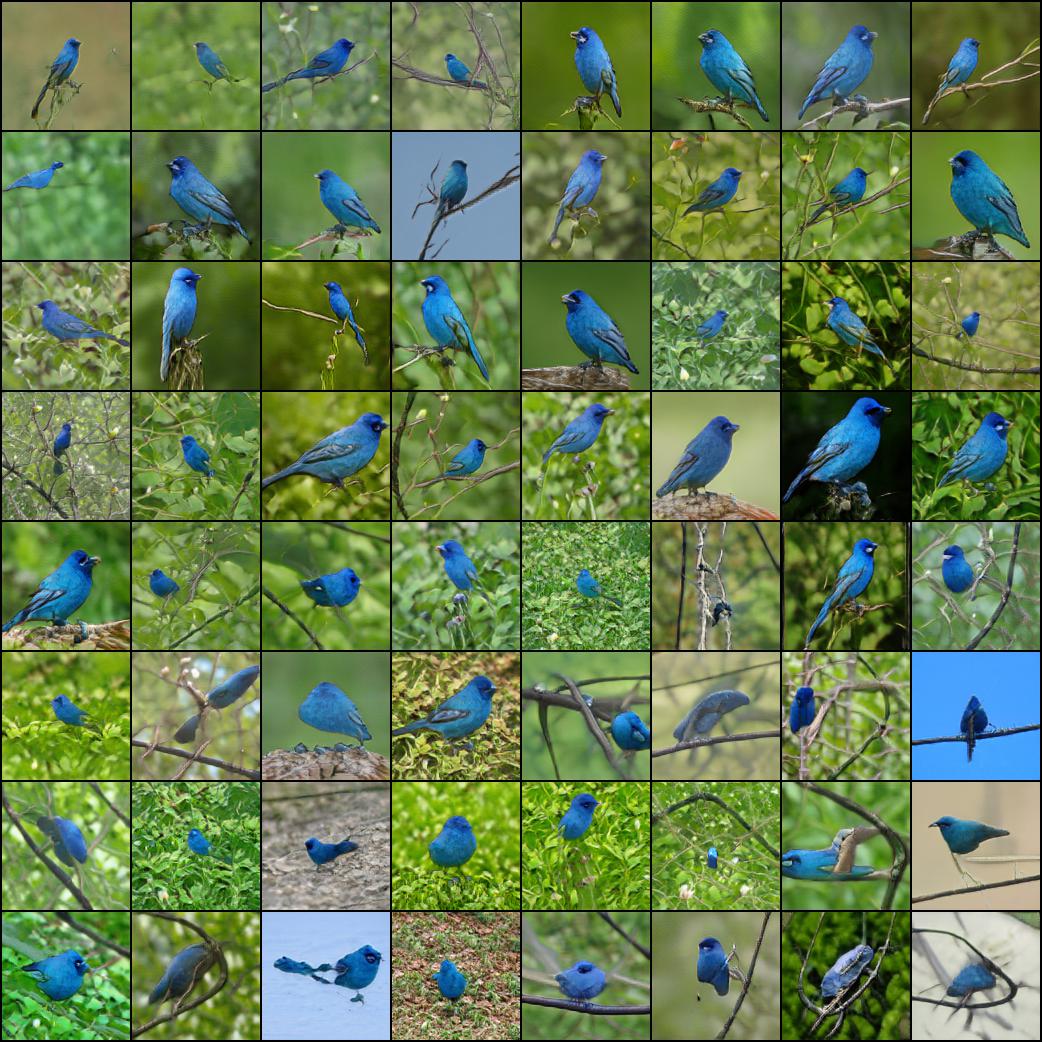}
    }  \hfill
    \subfloat['Indigo bunting' class histogram. ]
    {   
        \includegraphics[width=0.50\linewidth]{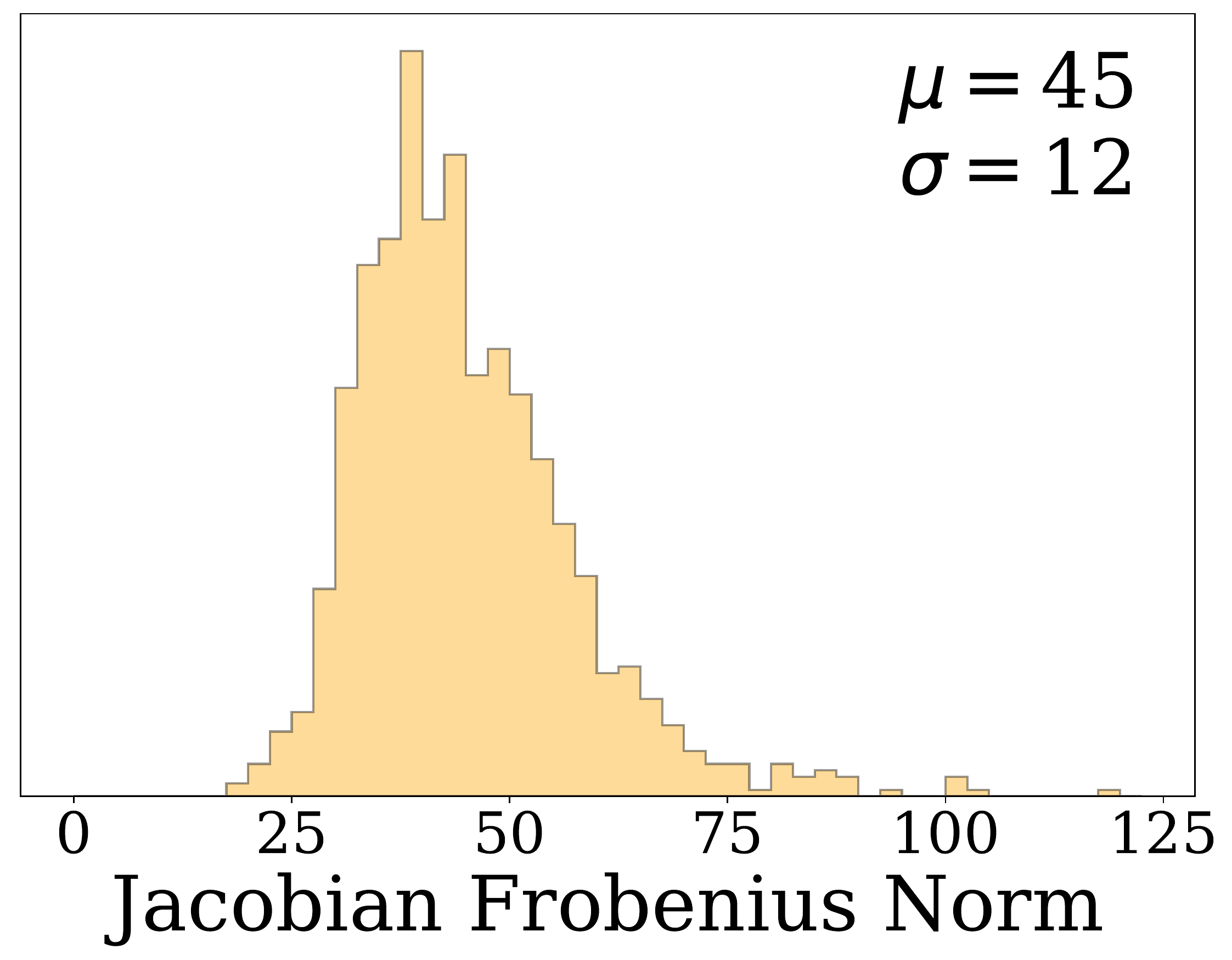}
    }  \hfill
    \subfloat['Cheetah' class.]
    {   
        \includegraphics[width=0.45\linewidth]{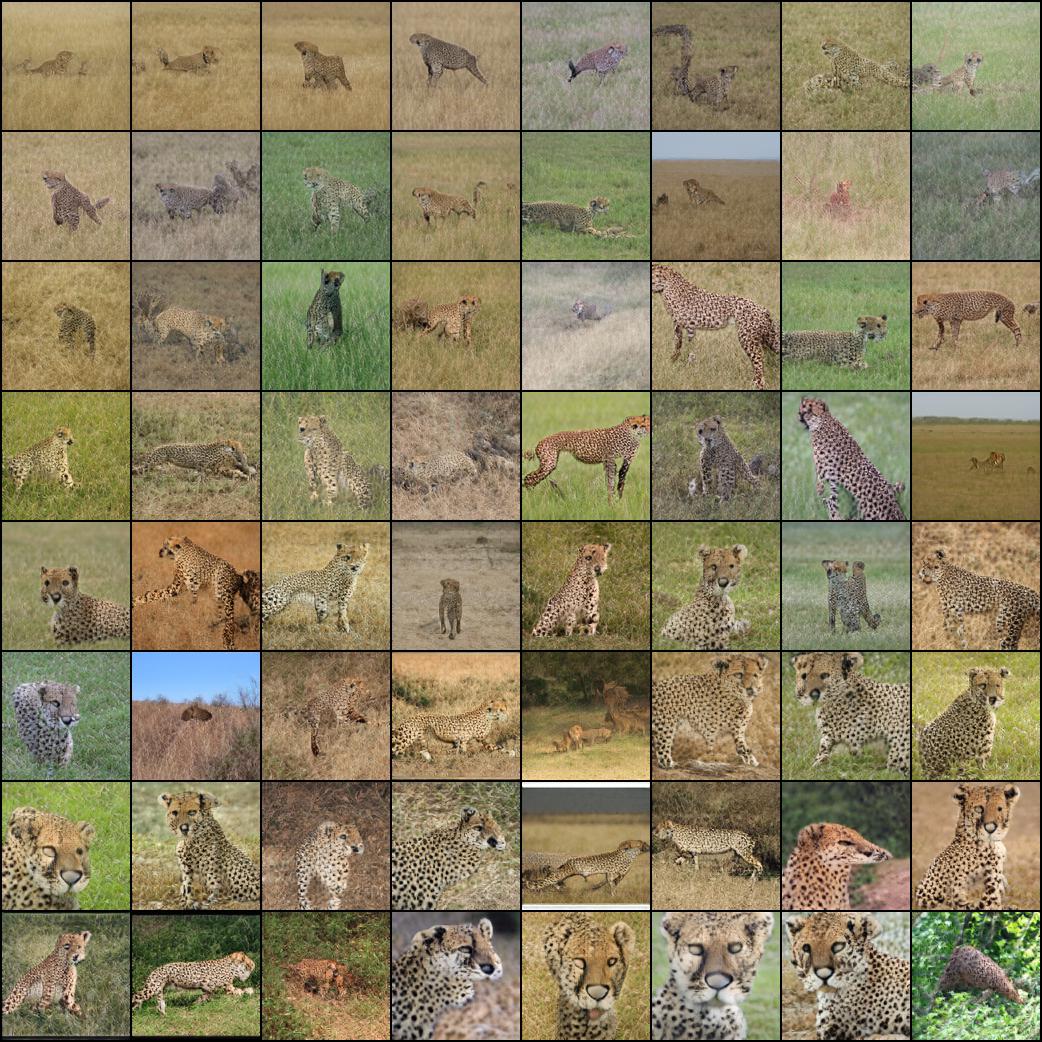}
    }  \hfill
    \subfloat['Cheetah' class histogram. ]
    {   
        \includegraphics[width=0.50\linewidth]{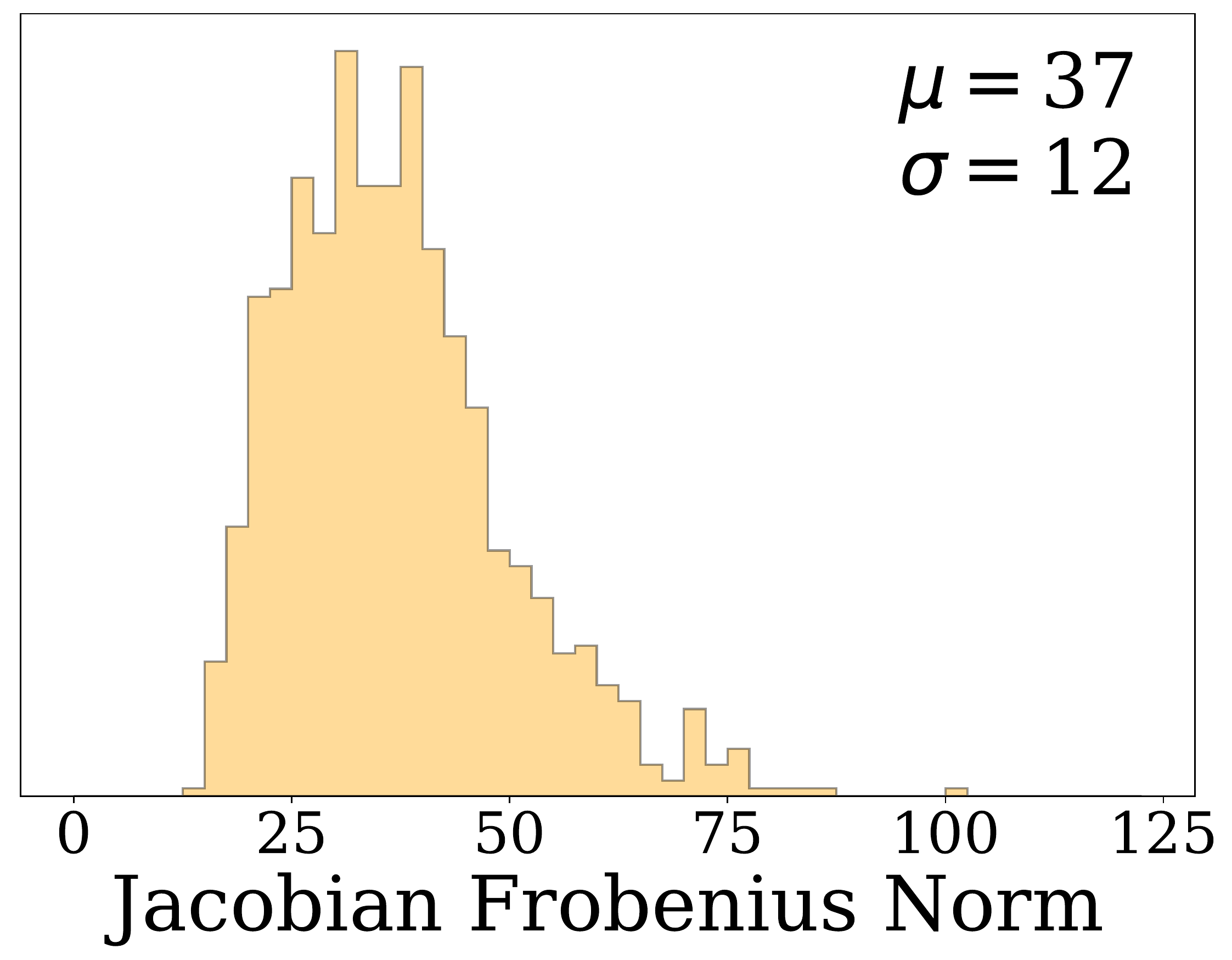}
    }  \hfill
    \phantomcaption
\end{figure}

\begin{figure}
  \ContinuedFloat 
   \subfloat['Monarch butterfly' class. \label{appendix:fig:monarch_butterfly_ranked}]
    {   
        \includegraphics[width=0.45\linewidth]{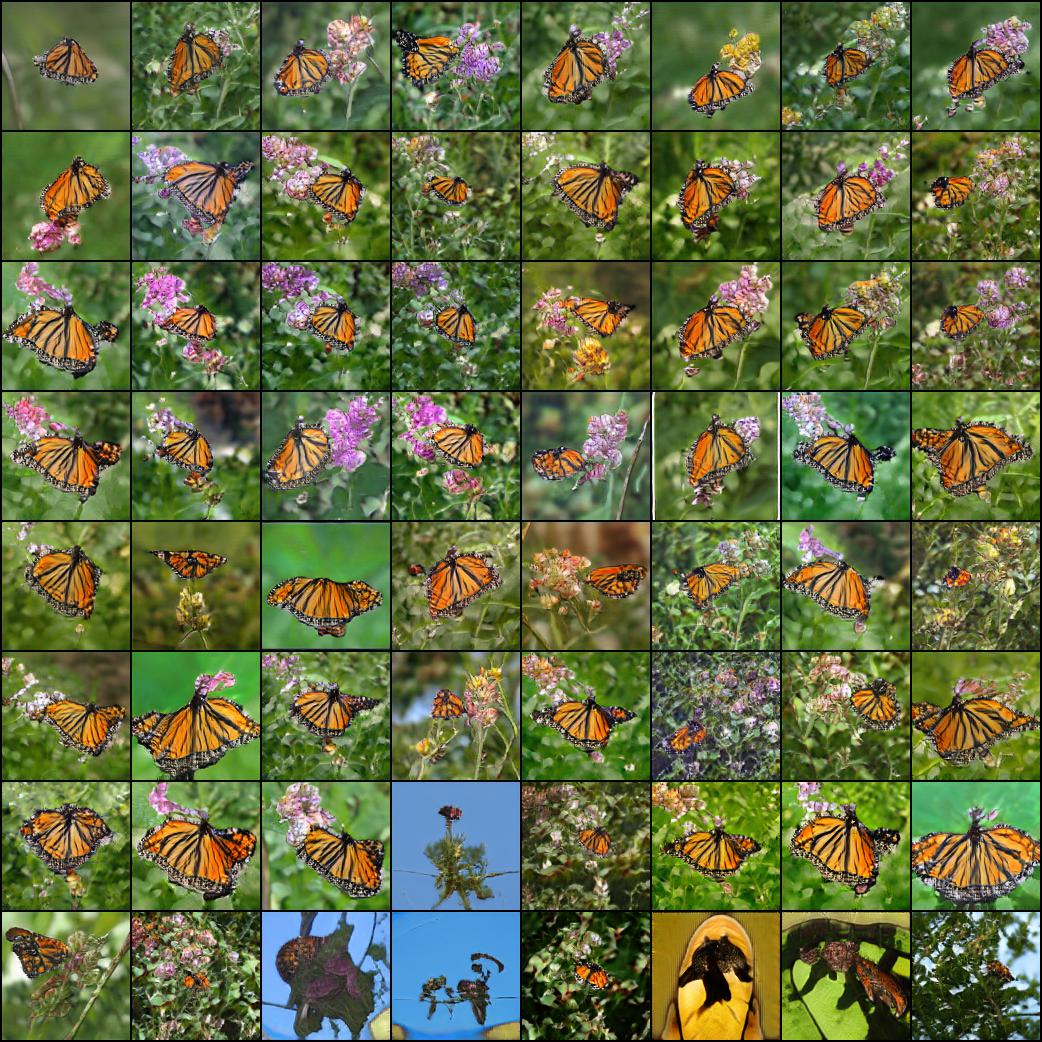}
    }  \hfill
    \subfloat['Monarch butterfly' class histogram.]
    {   
        \includegraphics[width=0.50\linewidth]{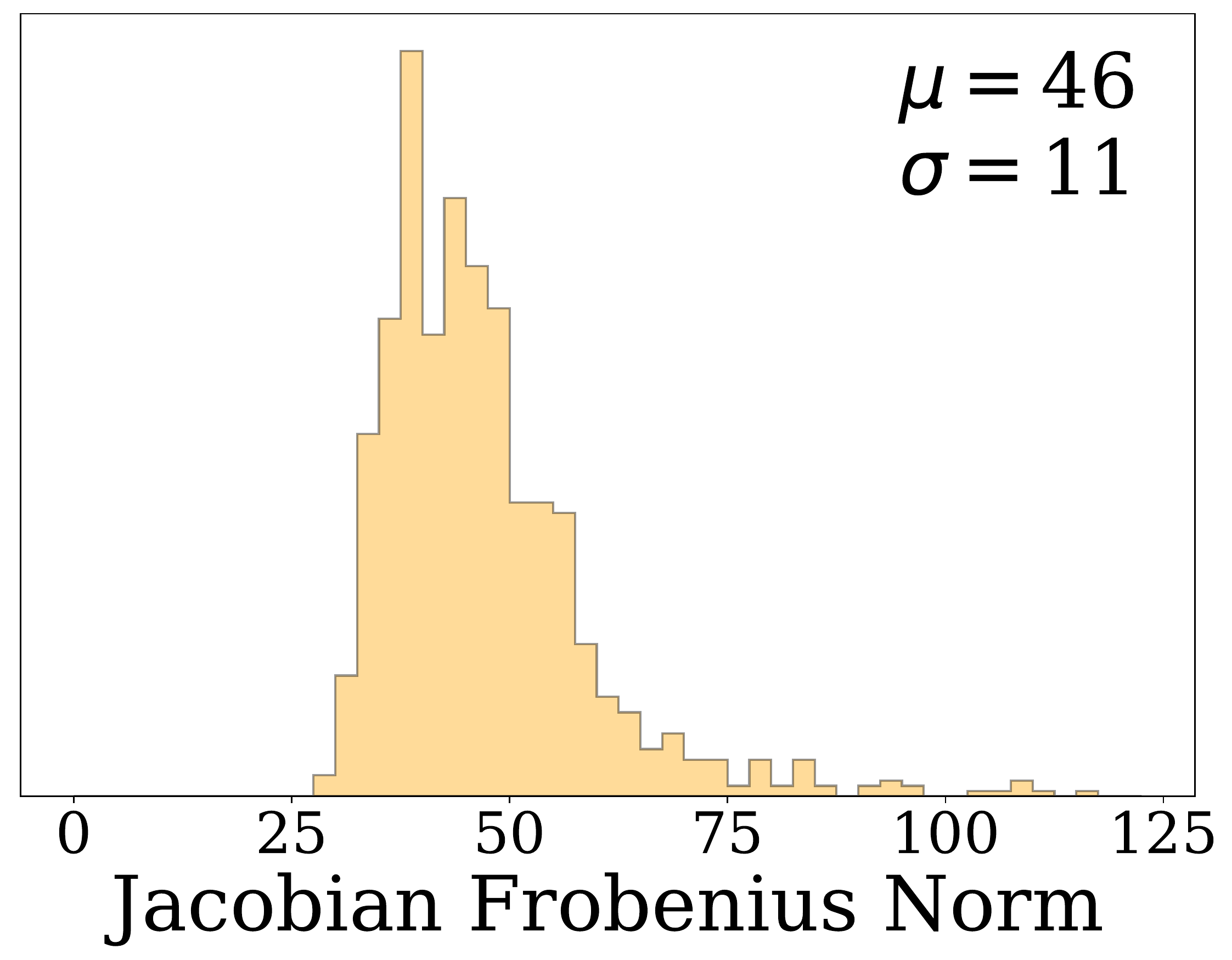}
    }  \hfill
    \subfloat['Loggerhead turtle' class. ]
    {   
        \includegraphics[width=0.45\linewidth]{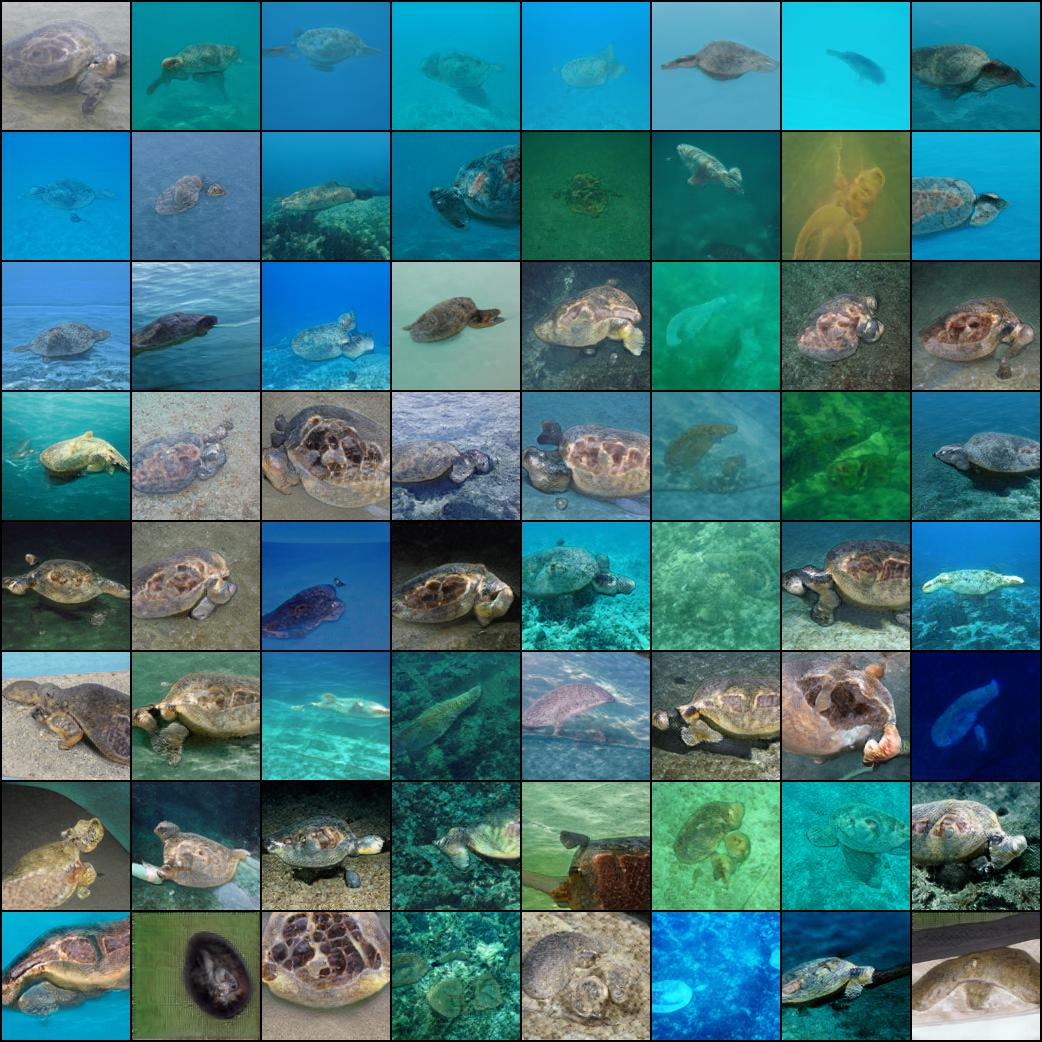}
    }  \hfill
    \subfloat['Loggerhead turtle' class histogram. ]
    {   
        \includegraphics[width=0.50\linewidth]{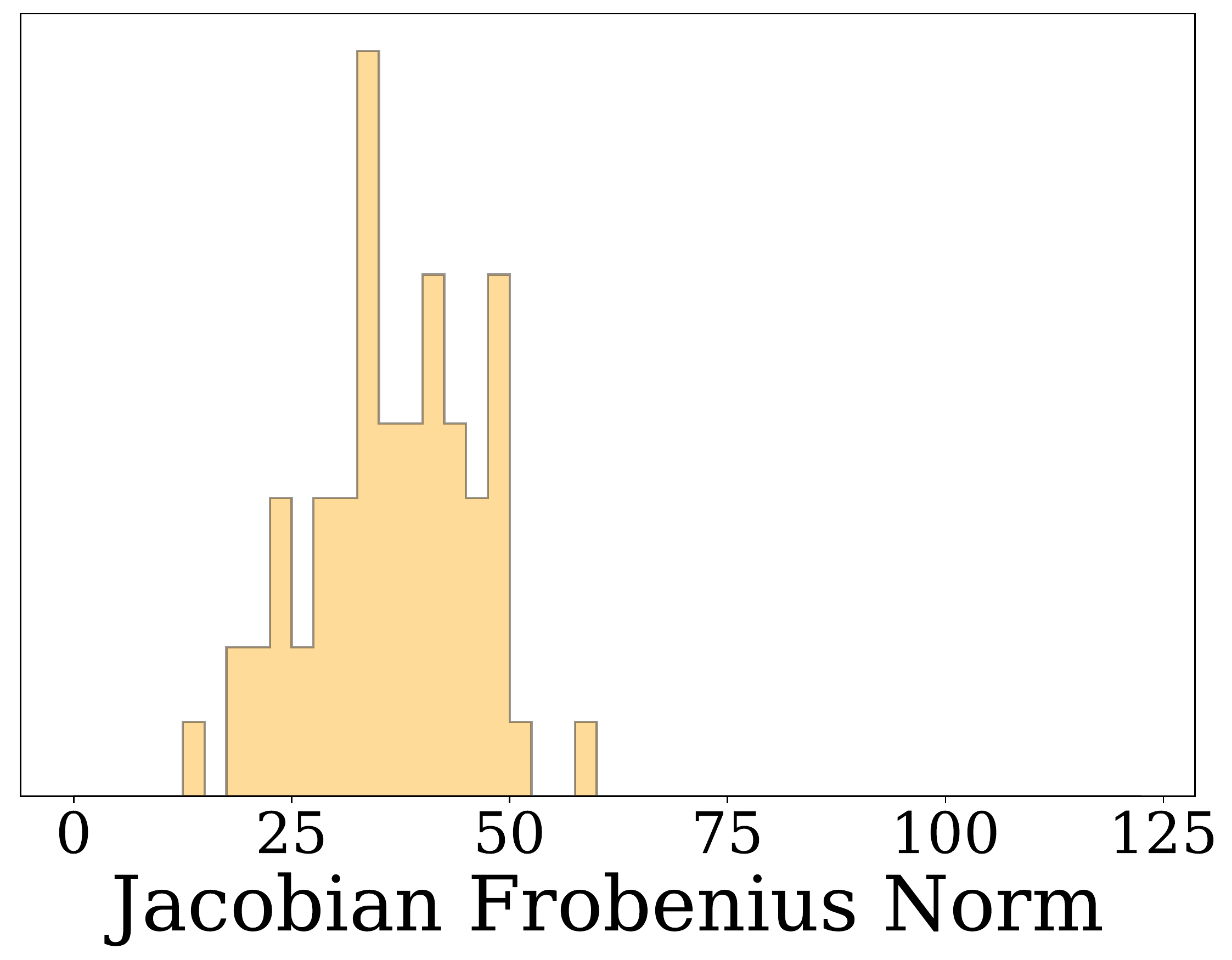}
    }  \hfill
    \subfloat['Wood rabbit' class. \label{appendix:fig:wood_rabbit}]
    {   
        \includegraphics[width=0.45\linewidth]{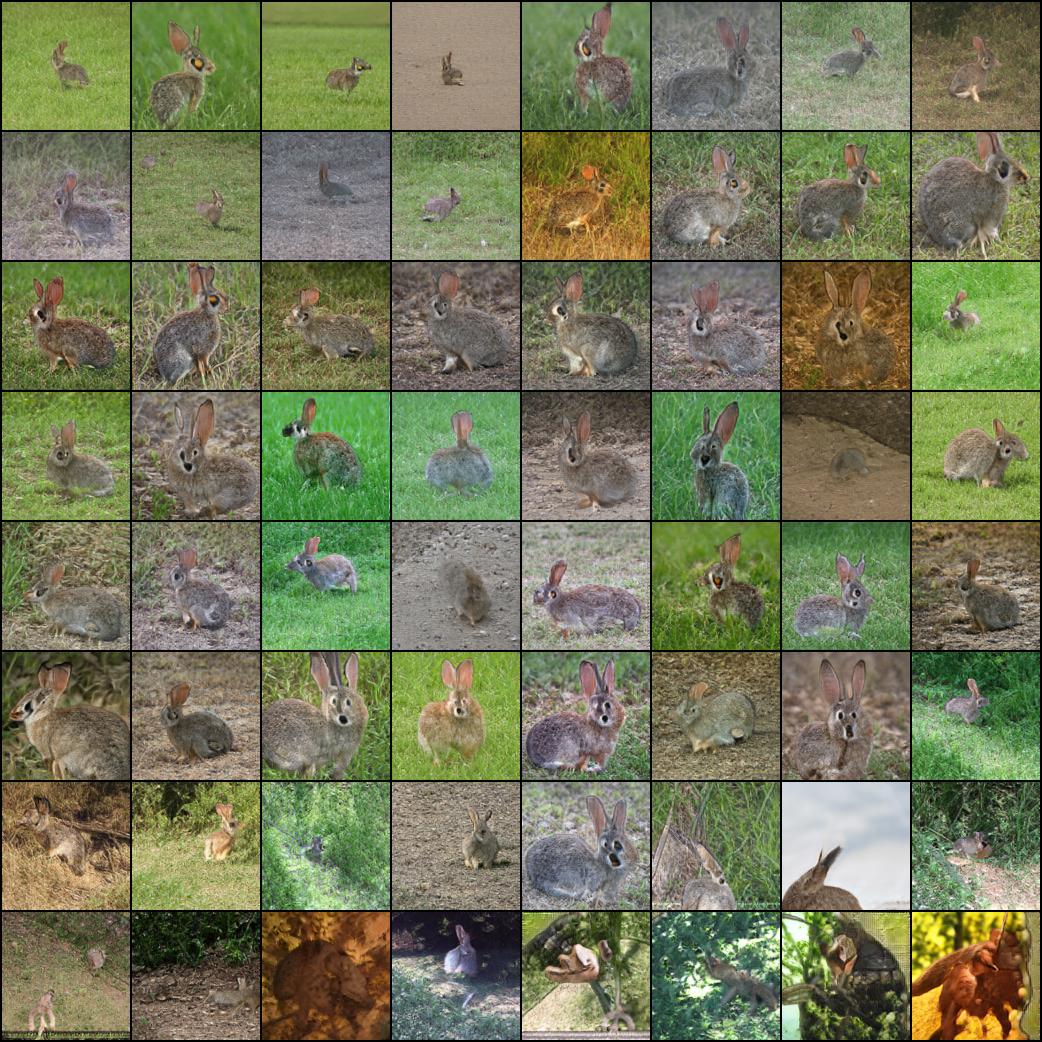}
    }  \hfill
    \subfloat['wood rabbit' class histogram. ]
    {   
        \includegraphics[width=0.50\linewidth]{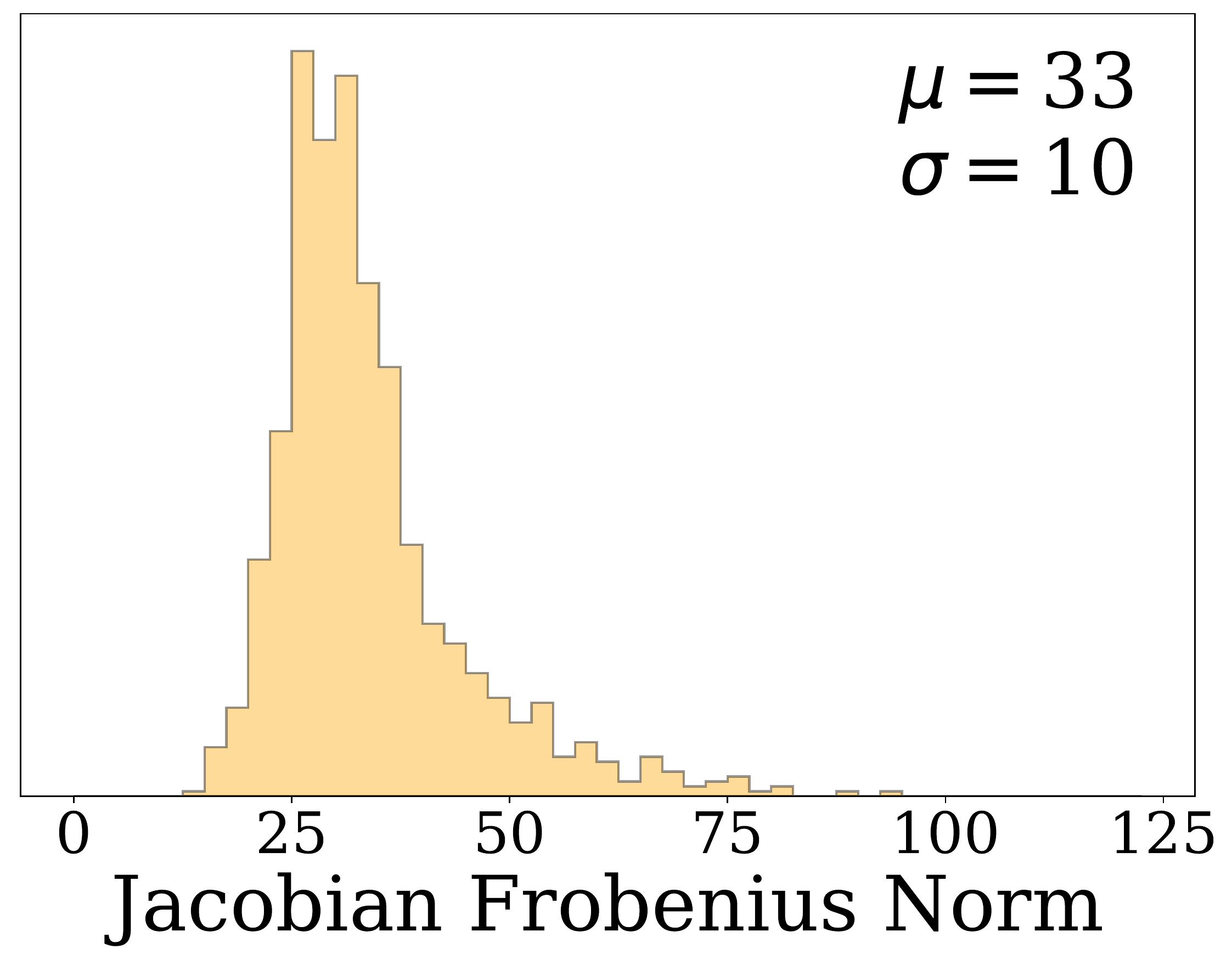}
    }  \hfill
    \subfloat['Trash can' class.]
    {   
        \includegraphics[width=0.45\linewidth]{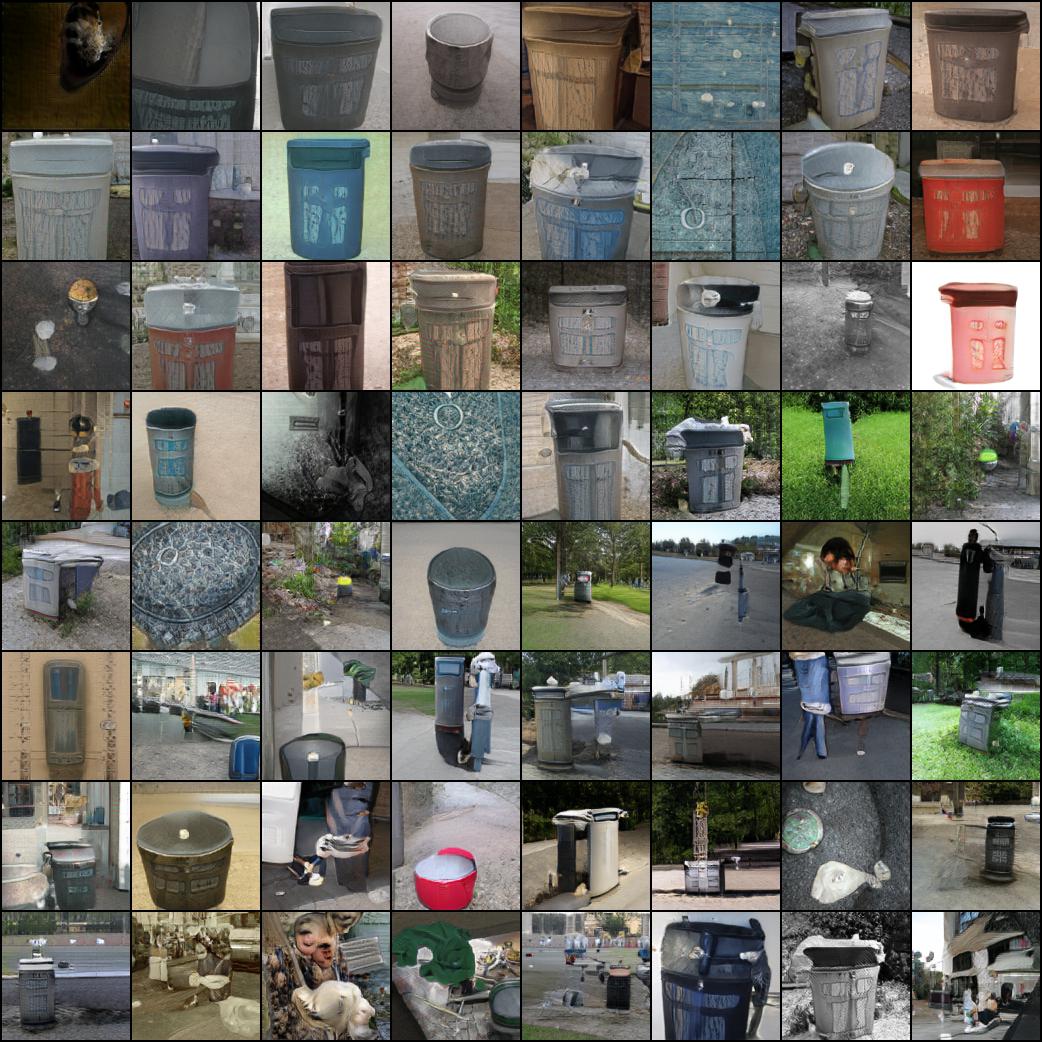}
    }  \hfill
    \subfloat['Trash can' class histogram. ]
    {   
        \includegraphics[width=0.50\linewidth]{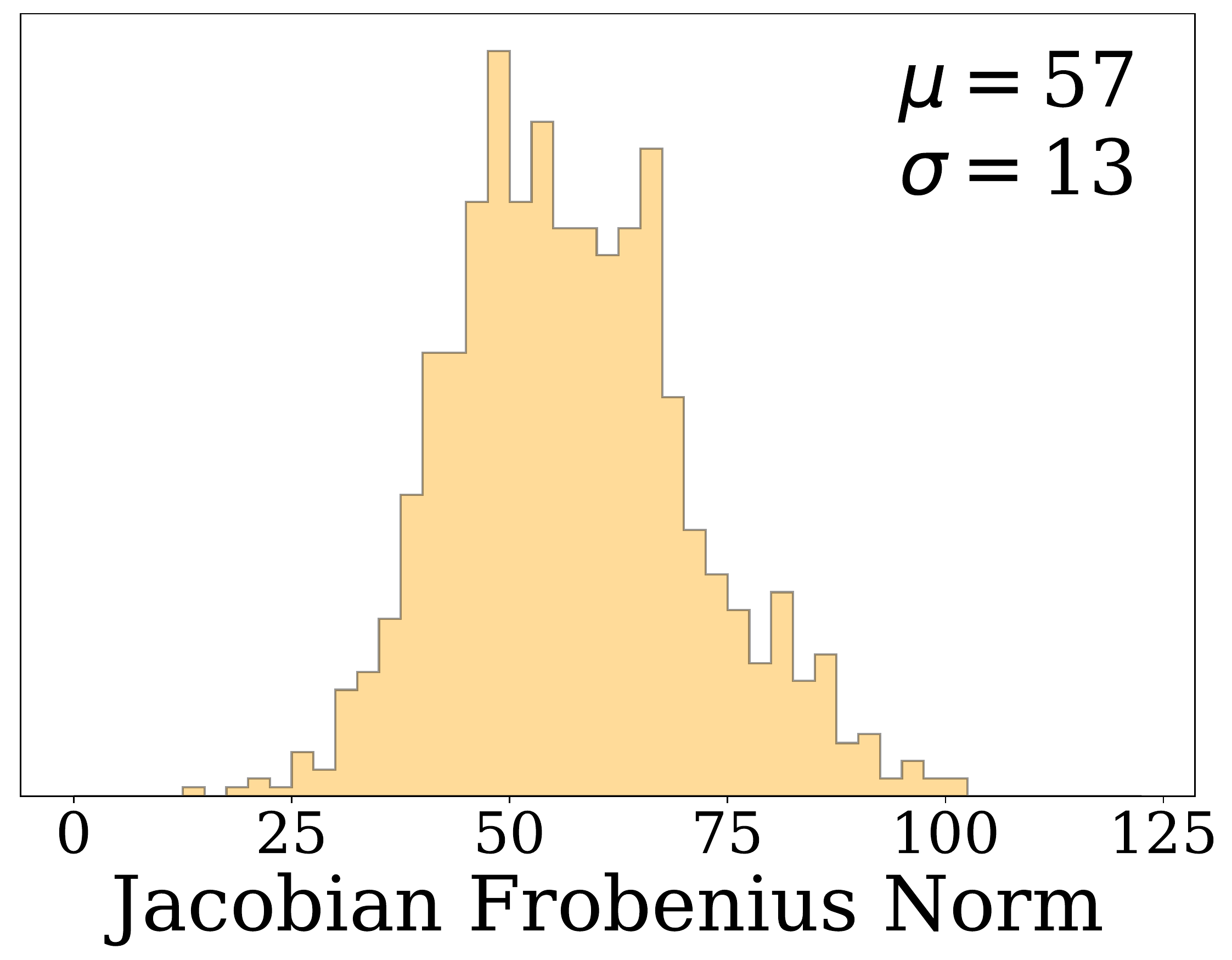}
    }  \hfill
    \phantomcaption
\end{figure}

\begin{figure}
  \ContinuedFloat 
   \subfloat['Cornet/Horn' class. \label{fig:cornet_ranked_appendix} ]
    {   
        \includegraphics[width=0.45\linewidth]{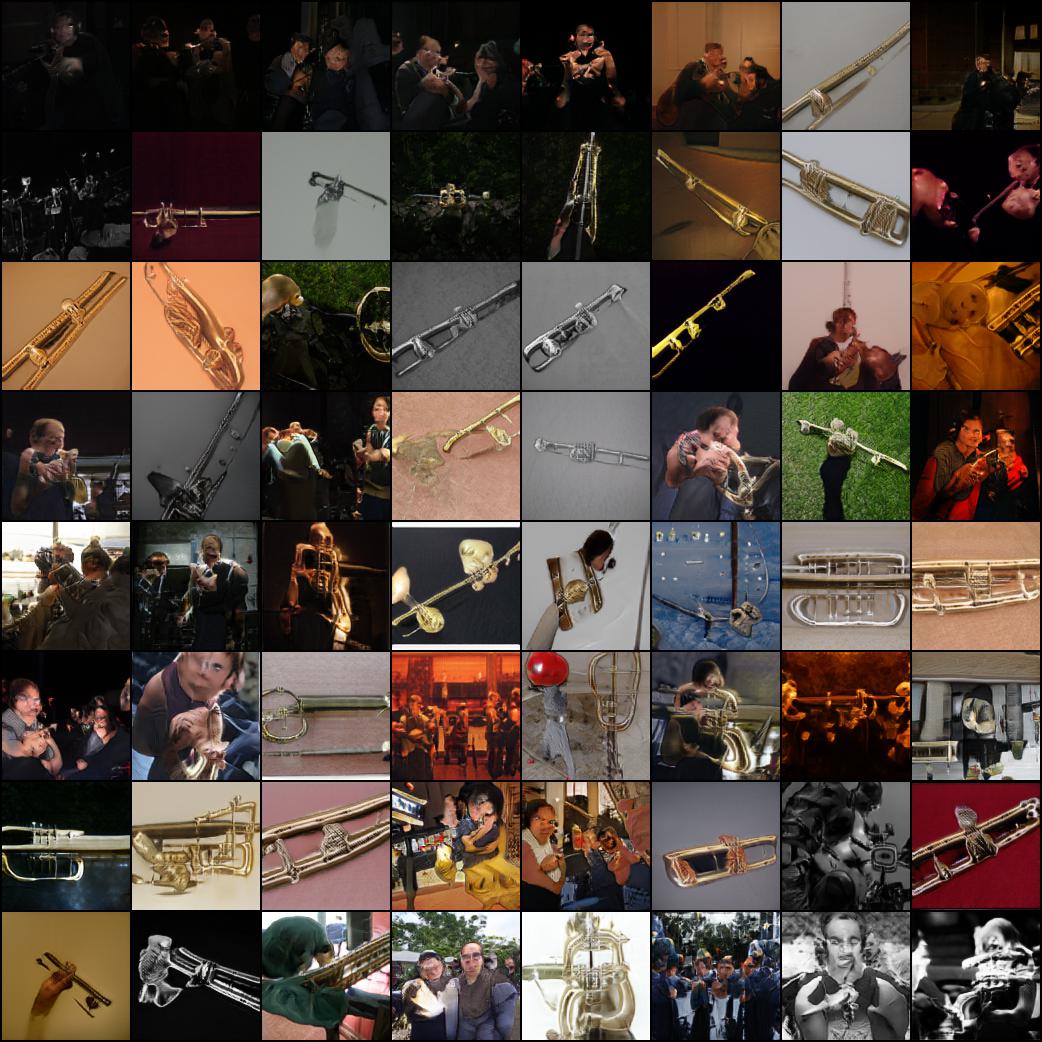}
    }  \hfill
    \subfloat['Cornet/Horn' class histogram. ]
    {   
        \includegraphics[width=0.50\linewidth]{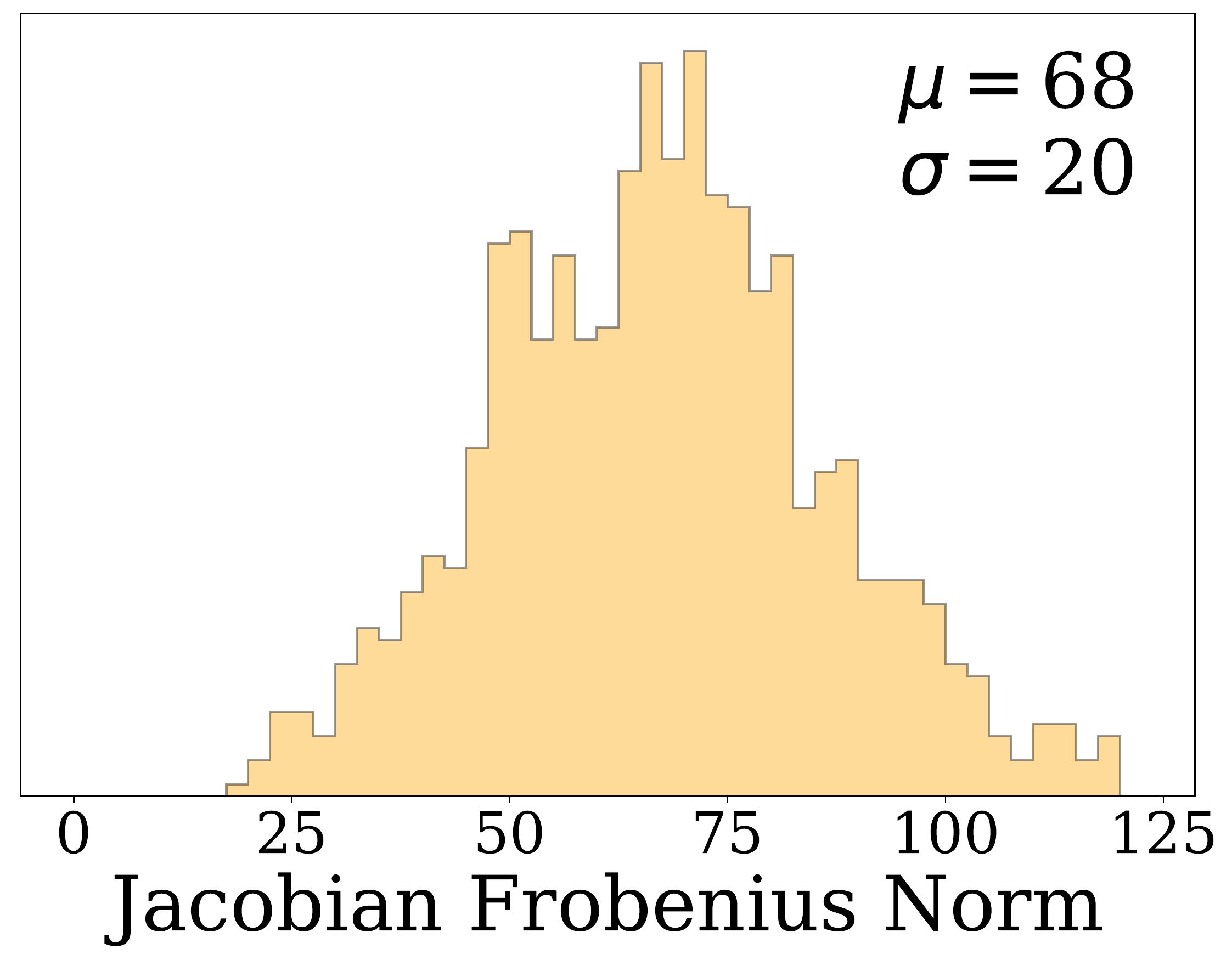}
    }  \hfill
    \subfloat['Football helmet' class. \label{appendix:fig:football_helmet} ]
    {   
        \includegraphics[width=0.45\linewidth]{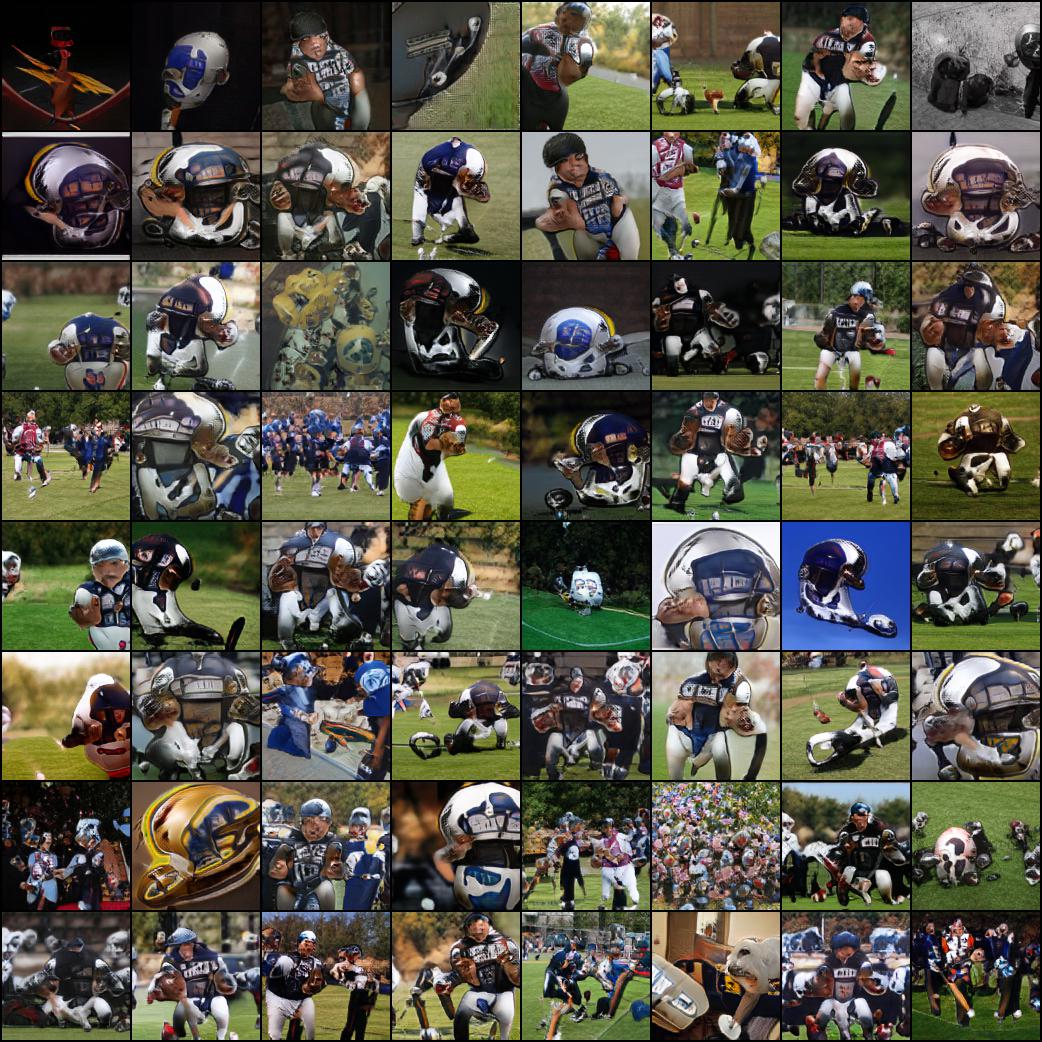}
    }  \hfill
    \subfloat['Football helmet' class histogram. ]
    {   
        \includegraphics[width=0.50\linewidth]{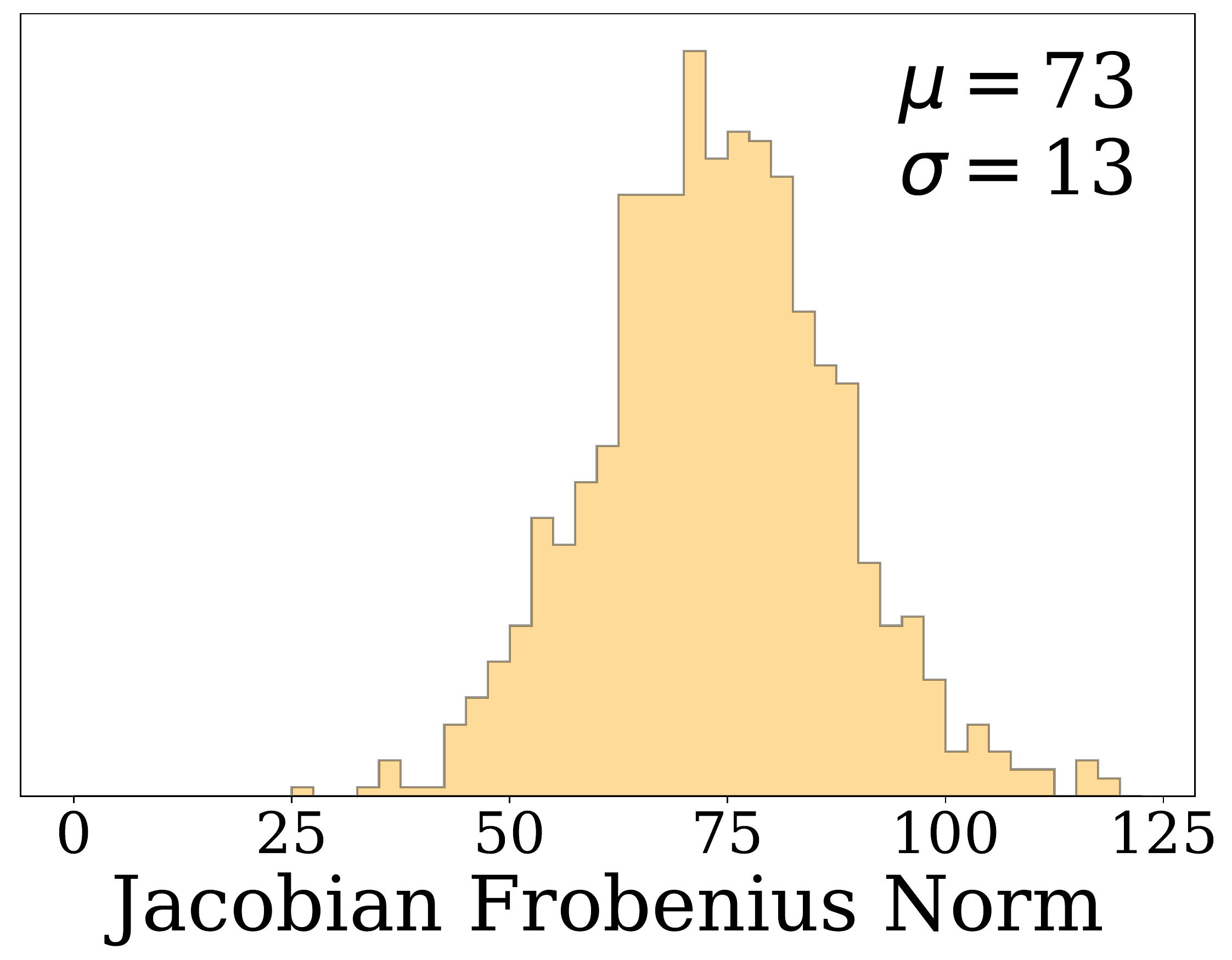}
    }  \hfill
    \subfloat['Harmonica' class. ]
    {   
        \includegraphics[width=0.45\linewidth]{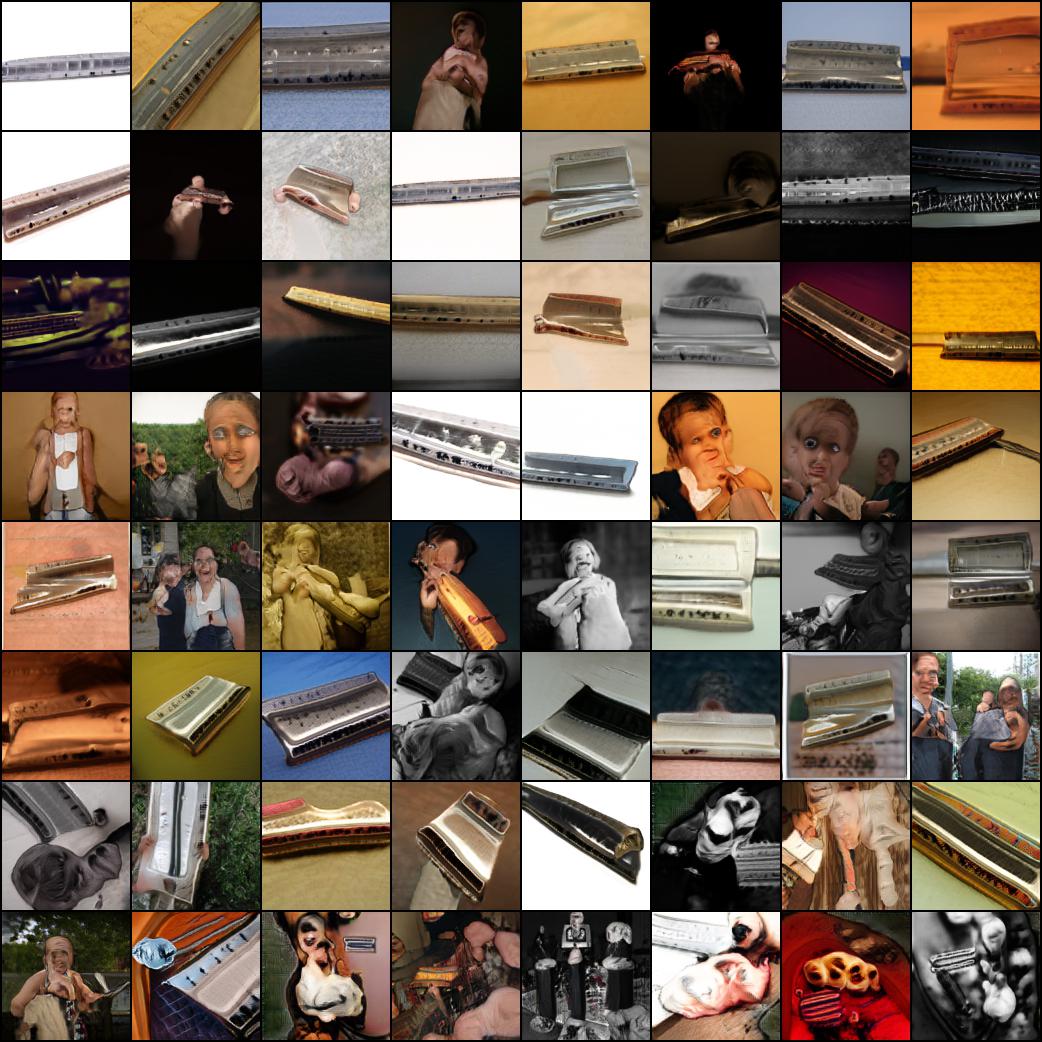}
    }  \hfill
    \subfloat['Harmonica' class histogram. ]
    {   
        \includegraphics[width=0.50\linewidth]{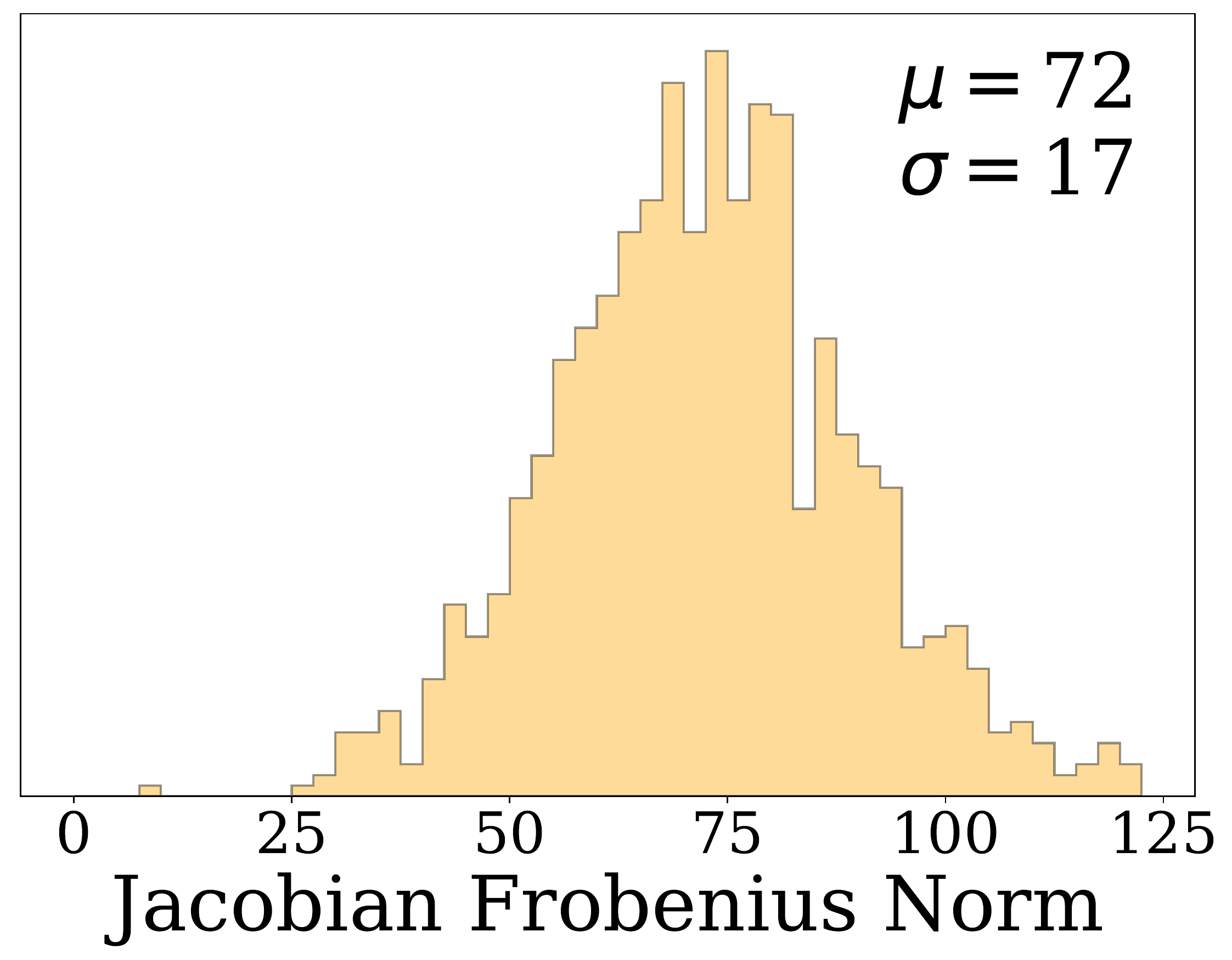}
    }  \hfill
    \subfloat['Parachute' class. ]
    {   
        \includegraphics[width=0.45\linewidth]{images/BigGAN/random_samples_class_701.jpg}
    }  \hfill
    \subfloat['Parachute' class histogram. ]
    {   
        \includegraphics[width=0.50\linewidth]{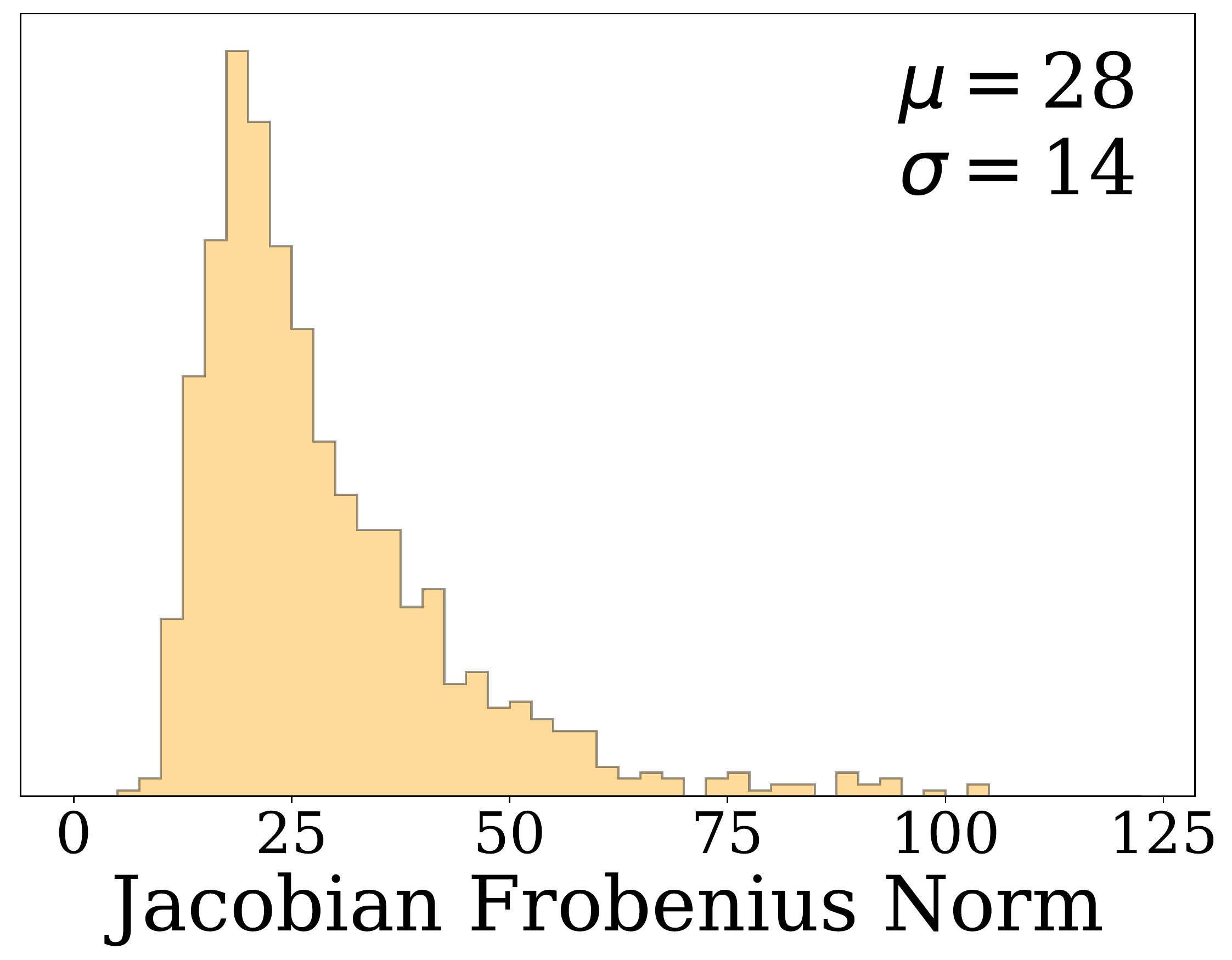}
    }  \hfill
    \subfloat['Peacock' class. ]
    {   
        \includegraphics[width=0.45\linewidth]{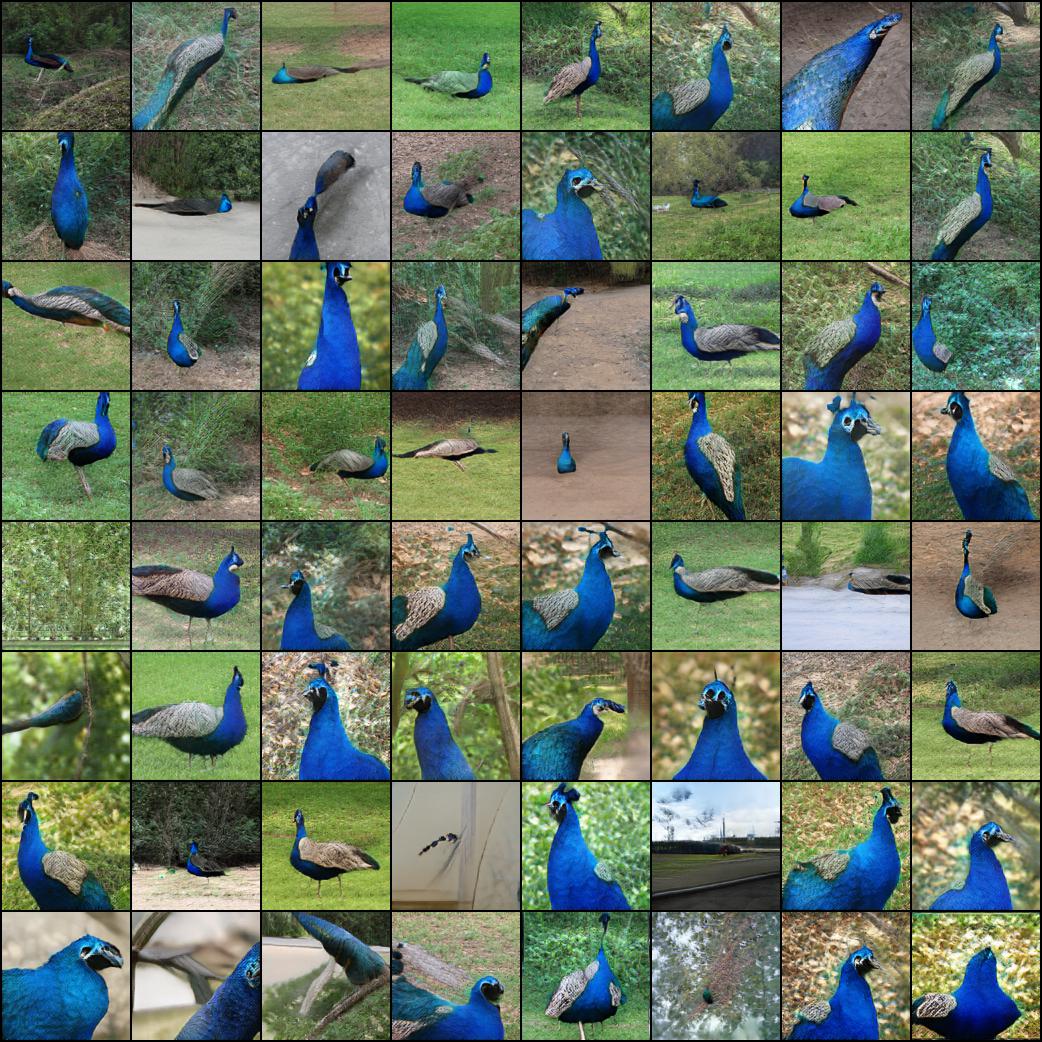}
    }  \hfill
    \subfloat['Peacock' class histogram. ]
    {   
        \includegraphics[width=0.50\linewidth]{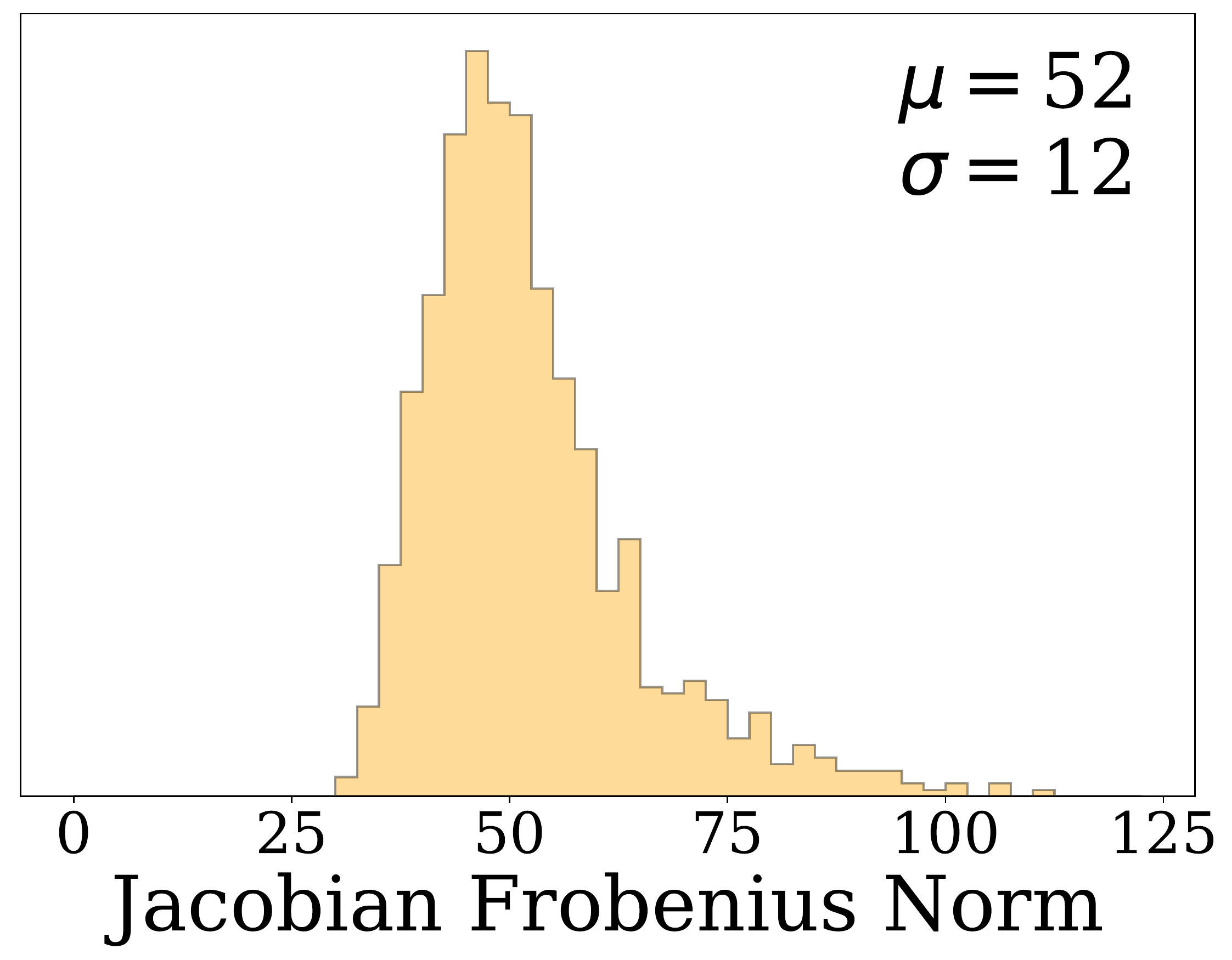}
    }  \hfill
    \caption{For several classes with BigGAN model. \label{appendix:fig:biggan_histograms}}
\end{figure}

\clearpage
\onecolumn

\section{Network Architecture and Hyperparameters}
\label{appendix:experimental_details}

\begin{table*}[h]
\centering
\caption{Models for Synthetic datasets}
\resizebox{0.5\columnwidth}{!}{
\begin{tabular}{llllll}
    \toprule
    Operation & Feature Maps & Activation \\
    \midrule
    G(z): $z \sim \mathcal{N}(0,1)$ & 2 & \\
    Fully Connected - layer1 & 20 & ReLU \\
    Fully Connected - layer2 & 20 & ReLU \\
    \midrule
    D(x) \\
    Fully Connected - layer1 & 20 & ReLU \\
    Fully Connected - layer2 & 20 & ReLU \\
    \midrule
    Batch size & 32\\
    Leaky ReLU slope & 0.2\\
    Gradient Penalty weight & 10\\
    Learning Rate & 0.0002\\
    Optimizer & Adam: $\beta_1=0.5$ & $\beta_2=0.5$\\
    \bottomrule
\end{tabular}
}
\label{appendix:tab:jsd_g_synthetic}
\end{table*}

\begin{table*}[h]
\centering
\caption{WGAN for MNIST/Fashion MNIST}
\resizebox{0.65 \linewidth}{!}{
\begin{tabular}{llllll}
    \toprule
    Operation & Kernel & Strides & Feature Maps & Activation \\
    \midrule
    G(z): $z \sim \mathrm{N}(0,Id)$ & & & 100 & \\
    Fully Connected & & & $7\times7\times128$ &  \\
    Convolution & $3\times3$ & $1\times1$ & $7\times7\times64$ & LReLU \\
    Convolution & $3\times3$ & $1\times1$ & $7\times7\times64$ & LReLU \\
    Nearest Up Sample & & & $14\times14\times64$ & \\
    Convolution & $3\times3$ & $1\times1$ & $14\times14\times32$ & LReLU \\
    Convolution & $3\times3$ & $1\times1$ & $14\times14\times32$ & LReLU\\
    Nearest Up Sample & & & $14\times14\times64$ & \\
    Convolution & $3\times3$ & $1\times1$ & $28\times28\times16$ & LReLU \\
    Convolution & $5\times5$ & $1\times1$ & $28\times28\times1$ & Tanh \\
    \midrule
    D(x) & & & $28\times28\times1$ & & \\
    Convolution & $4\times4$ & $2\times2$ & $14\times14\times32$ & LReLU \\
    Convolution & $3\times3$ & $1\times1$ & $14\times14\times32$ & LReLU \\
    Convolution & $4\times4$ & $2\times2$ & $7\times7\times64$ & LReLU \\
    Convolution & $3\times3$ & $1\times1$ & $7\times7\times64$ & LReLU \\
    Fully Connected & & & $1$ & - \\
    \midrule
    Batch size & 256\\
    Leaky ReLU slope & 0.2\\
    Gradient Penalty weight & 10\\
    Learning Rate & 0.0002\\
    Optimizer & Adam &  $\beta_1:0.5$ &  $\beta_2:0.5$\\
    \bottomrule
\end{tabular}
}
\label{appendix:tab_jsd_g_wgan_mnist}
\end{table*}

For DeliGan, we use the same architecture and simply add 50 Gaussians for the reparametrization trick. For DMLGAN, we re-use the architecture of the authors.

\begin{table*}
\centering
\caption{DMLGAN for MNIST/Fashion MNIST}
\resizebox{0.7 \columnwidth}{!}{
\begin{tabular}{llllll}
    \toprule
    Operation & Kernel & Strides & Feature Maps & BN & Activation \\
    \midrule
    G(z): $z \sim \mathrm{N}(0,Id)$ & & & 100 & & \\
    Fully Connected & & & $7\times7\times128$ & - &  \\
    Convolution & $3\times3$ & $1\times1$ & $7\times7\times64$ & - & Leaky ReLU \\
    Convolution & $3\times3$ & $1\times1$ & $7\times7\times64$ & - & Leaky ReLU \\
    Nearest Up Sample & & & $14\times14\times64$ & - & \\
    Convolution & $3\times3$ & $1\times1$ & $14\times14\times32$ & - & Leaky ReLU \\
    Convolution & $3\times3$ & $1\times1$ & $14\times14\times32$ & - & Leaky ReLU\\
    Nearest Up Sample & & & $14\times14\times64$ & - & \\
    Convolution & $3\times3$ & $1\times1$ & $28\times28\times16$ & - & Leaky ReLU \\
    Convolution & $5\times5$ & $1\times1$ & $28\times28\times1$ & - & Tanh \\
    \midrule
    Encoder Q(x), Discriminator D(x) & & & $28\times28\times1$ & & \\
    Convolution & $4\times4$ & $2\times2$ & $14\times14\times32$ & - & Leaky ReLU \\
    Convolution & $3\times3$ & $1\times1$ & $14\times14\times32$ & - & Leaky ReLU \\
    Convolution & $4\times4$ & $2\times2$ & $7\times7\times64$ & - & Leaky ReLU \\
    Convolution & $3\times3$ & $1\times1$ & $7\times7\times64$ & - & Leaky ReLU \\
    D Fully Connected & & & $1$ & - & - \\
    Q Convolution &  $3\times3$ & & $7\times7\times64$ & Y & Leaky ReLU \\
    Q Convolution &  $3\times3$ & &$7\times7\times64$& Y & Leaky ReLU \\
    Q Fully Connected & & & $n_g = 10$ & - & Softmax \\
    \midrule
    Batch size & 256\\
    Leaky ReLU slope & 0.2\\
    Gradient Penalty weight & 10\\
    Learning Rate & 0.0002\\
    Optimizer & Adam & $\beta_1=0.5$ & $\beta_2=0.5$\\
    \bottomrule
\end{tabular}
}
\label{appendix:tab_jsd_g_dmlgan_mnist}
\end{table*}

\begin{table*}
\centering
\caption{WGAN for CIFAR10, from \cite{gulrajani2017improved}}
\resizebox{0.7 \columnwidth}{!}{
\begin{tabular}{llllll}
    \toprule
    Operation & Kernel & Strides & Feature Maps & BN & Activation \\
    \midrule
    G(z): $z \sim \mathrm{N}(0,Id)$ & & & 128 & & \\
    Fully Connected & & & $4\times4\times128$ & - &  \\
    ResBlock & $[3\times3]\times 2$ & $1\times1$ & $4\times4\times128$ & Y & ReLU \\
    Nearest Up Sample & & & $8\times8\times128$ & - & \\
    ResBlock & $[3\times3]\times 2$ & $1\times1$ & $8\times8\times128$ & Y & ReLU \\
    Nearest Up Sample & & & $16\times16\times128$ & - & \\
    ResBlock & $[3\times3]\times 2$ & $1\times1$ & $16\times16\times128$ & Y & ReLU \\
    Nearest Up Sample & & & $32\times32\times128$ & - & \\
    Convolution & $3\times3$ & $1\times1$ & $32\times32\times3$ & - & Tanh \\
    \midrule
    Discriminator D(x) & & & $32\times32\times3$ & & \\
    ResBlock & $[3\times3]\times2$ & $1\times1$ & $32\times32\times128$ & - & ReLU \\
    AvgPool & $2\times2$ & $1\times1$ & $16\times16\times128$ & - & \\
    ResBlock & $[3\times3]\times2$ & $1\times1$ & $16\times16\times128$ & - & ReLU \\
    AvgPool & $2\times2$ & $1\times1$ & $8\times8\times128$ & - & \\
    ResBlock & $[3\times3]\times2$ & $1\times1$ & $8\times8\times128$ & - & ReLU \\
    ResBlock & $[3\times3]\times2$ & $1\times1$ & $8\times8\times128$ & - & ReLU \\
    Mean pooling (spatial-wise) & - & - & $128$ & - & \\
    Fully Connected & & & $1$ & - & - \\
    \midrule
    Batch size & 64\\
    Gradient Penalty weight & 10\\
    Learning Rate & 0.0002\\
    Optimizer & Adam & $\beta_1=0.$ & $\beta_2=0.9$\\
    Discriminator steps & 5 \\
    \bottomrule
\end{tabular}
}
\label{appendix:tab_jsd_g_wgan}
\end{table*}

\begin{table*}
\centering
\caption{DMLGAN for CIFAR10, from \cite{gulrajani2017improved}}
\resizebox{0.7 \columnwidth}{!}{
\begin{tabular}{llllll}
    \toprule
    Operation & Kernel & Strides & Feature Maps & BN & Activation \\
    \midrule
    G(z): $z \sim \mathrm{N}(0,Id)$ & & & 128 & & \\
    Fully Connected & & & $4\times4\times128$ & - &  \\
    ResBlock & $[3\times3]\times 2$ & $1\times1$ & $4\times4\times128$ & Y & ReLU \\
    Nearest Up Sample & & & $8\times8\times128$ & - & \\
    ResBlock & $[3\times3]\times 2$ & $1\times1$ & $8\times8\times128$ & Y & ReLU \\
    Nearest Up Sample & & & $16\times16\times128$ & - & \\
    ResBlock & $[3\times3]\times 2$ & $1\times1$ & $16\times16\times128$ & Y & ReLU \\
    Nearest Up Sample & & & $32\times32\times128$ & - & \\
    Convolution & $3\times3$ & $1\times1$ & $32\times32\times3$ & - & Tanh \\
    \midrule
    Encoder Q(x), Discriminator D(x) & & & $32\times32\times3$ & & \\
    ResBlock & $[3\times3]\times2$ & $1\times1$ & $32\times32\times128$ & - & ReLU \\
    AvgPool & $2\times2$ & $1\times1$ & $16\times16\times128$ & - & \\
    ResBlock & $[3\times3]\times2$ & $1\times1$ & $16\times16\times128$ & - & ReLU \\
    AvgPool & $2\times2$ & $1\times1$ & $8\times8\times128$ & - & \\
    ResBlock & $[3\times3]\times2$ & $1\times1$ & $8\times8\times128$ & - & ReLU \\
    D ResBlock & $[3\times3]\times2$ & $1\times1$ & $8\times8\times128$ & - & ReLU \\
    D Mean pooling (spatial-wise) & $2\times2$ & $1\times1$ & $128$ & - & \\
    D Fully Connected & & & $1$ & - & - \\
    Q ResBlock & $[3\times3]\times2$ & $1\times1$ & $8\times8\times128$ & - & ReLU \\
    Q Mean pooling (spatial-wise) & $2\times2$ & $1\times1$ & $128$ & - & \\
    Q Fully Connected & & & $n_g = 10$ & - & Softmax \\
    \midrule
    Batch size & 64\\
    Gradient Penalty weight & 10\\
    Learning Rate & 0.0002\\
    Optimizer & Adam & $\beta_1=0.$ & $\beta_2=0.9$\\
    Discriminator steps & 5 \\
    \bottomrule
\end{tabular}
}
\label{appendix:tab_jsd_g_dml}
\end{table*}

\end{document}